\pgfplotsset{
    standard/.style={
        axis x line=middle,
        axis y line=middle,
        every axis x label/.style={at={(current axis.right of origin)},anchor=west},
        every axis y label/.style={at={(current axis.above origin)},anchor=south}
    }
}
\theoremstyle{plain}
\newtheorem{theorem}{Theorem}
\newtheorem{lemma}{Lemma}
\newtheorem{corollary}{Corollary}
\theoremstyle{definition}
\newtheorem{definition}{Definition}
\newtheorem{remark}{Remark}
\newtheorem*{remark*}{Remark}
\newcommand{\argmax}{\mathop{\arg\max}}
\newcommand{\dKS}{d_{\rm KS}}
\newcommand{\diverge}{\to\infty}
\newcommand{\iiddistr}{{\stackrel{\text{\iid}}{\sim}}}
\newcommand{\ones}{\mathbf 1}
\newcommand{\reals}{{\mathbb{R}}}
\newcommand{\naturals}{{\mathbb{N}}}
\newcommand{\eexp}{e}
\newcommand{\identity}{\mathbf I}
\newcommand{\diff}{{\rm d}}
\newcommand{\Expect}{\mathbb{E}}
\newcommand{\expect}[1]{\mathbb{E}\left[ #1 \right]}
\newcommand{\eexpect}[1]{\mathbb{E}[ #1 ]}
\newcommand{\Prob}{\mathbb{P}}
\newcommand{\pprob}[1]{ \mathbb{P}\{ #1 \} }
\newcommand{\prob}[1]{ \mathbb{P}\left\{ #1 \right\} }
\newcommand{\var}{\mathsf{var}}
\newcommand{\Binom}{{\rm Binom}}
\newcommand{\eg}{e.g.\xspace}
\newcommand{\ie}{i.e.\xspace}
\newcommand{\iid}{i.i.d.\xspace}
\newcommand{\pth}[1]{\left( #1 \right)}
\newcommand{\sth}[1]{\left\{ #1 \right\}}
\newcommand{\iprod}[2]{\left \langle #1, #2 \right\rangle}
\newcommand{\indc}[1]{{\mathbf{1}_{\left\{{#1}\right\}}}}
\newcommand{\Indc}{\mathbf{1}}
\newcommand{\calE}{{\mathcal{E}}}
\newcommand{\calF}{{\mathcal{F}}}
\newcommand{\calG}{{\mathcal{G}}}
\newcommand{\calN}{{\mathcal{N}}}
\newcommand{\calT}{{\mathcal{T}}}
\renewcommand{\hat}{\widehat}
\renewcommand{\tilde}{\widetilde}
\renewcommand{\check}[1]{#1'}
\begin{document}

\title{Submatrix localization via message passing}
\date{\today}
\author{
Bruce Hajek \and Yihong Wu \and Jiaming Xu\thanks{
B. Hajek and Y. Wu are with
the Department of ECE and Coordinated Science Lab, University of Illinois at Urbana-Champaign, Urbana, IL, \texttt{\{b-hajek,yihongwu\}@illinois.edu}.
J. Xu is with Department of Statistics, The Wharton School, University of Pennsylvania, Philadelphia, PA, \texttt{jiamingx@wharton.upenn.edu}.
}}

\maketitle

The principal submatrix localization problem deals with recovering a $K\times K$
principal  submatrix of elevated mean $\mu$  in a large $n\times n$  symmetric matrix
subject to additive standard Gaussian noise.   This problem serves as a  prototypical example for community detection, in which the community corresponds to the support of the submatrix.   The main result of this paper is that in the regime
$\Omega(\sqrt{n}) \leq K \leq o(n)$, the support of the submatrix
can be weakly recovered (with $o(K)$ misclassification errors on
average) by an optimized message passing algorithm if
$\lambda = \mu^2K^2/n$, the signal-to-noise ratio, exceeds $1/e$.   This extends a result by
Deshpande and Montanari previously obtained for $K=\Theta(\sqrt{n}).$
In addition, the algorithm can be extended to provide exact recovery whenever information-theoretically possible and achieve the information limit of exact recovery as long as $K \geq \frac{n}{\log n} (\frac{1}{8e} + o(1))$.
The total running time of the algorithm is $O(n^2\log n)$.

Another version of the submatrix localization problem, known as noisy biclustering,  aims to recover a $K_1\times K_2$ submatrix  of elevated mean $\mu$ in a large
$n_1\times n_2$ Gaussian matrix. The optimized message passing algorithm and its analysis are adapted to the bicluster
problem assuming  $\Omega(\sqrt{n_i}) \leq K_i \leq o(n_i)$ and $K_1\asymp K_2.$
A sharp information-theoretic condition for the weak recovery of both clusters is also identified.

\section{Introduction}
The problem of \emph{submatrix detection} and \emph{localization}, also known as \emph{noisy biclustering} \cite{Hartigan72,shabalin2009submatrix,kolar2011submatrix,butucea2013,Butucea2013sharp,ma2013submatrix,ChenXu14,CLR15},
 deals  with
finding a submatrix with an elevated mean in a large noisy matrix, which arises in
many applications such as social network analysis and gene expression data analysis.
A widely studied statistical model is the following:
\begin{equation}
	W = \mu \Indc_{C_1^*} \Indc_{C_2^*}^\top + Z,
	\label{eq:model}
\end{equation}
where $\mu > 0$, $\Indc_{C_1^*}$ and $\Indc_{C_2^*}$ are indicator vectors of the row and column support sets $C_1^* \subset [n_1]$  and $C_2^* \subset [n_2]$ of cardinality $K_1$ and $K_2$, respectively, and $Z$ is an $n_1\times n_2$ matrix consisting of independent standard normal entries.
The objective is to accurately locate the submatrix by estimating the row and column support based on the large matrix $W$.

For simplicity we start by considering the symmetric version of this problem, namely, locating a principal submatrix,
and later extend our theoretic and algorithmic findings to the asymmetric case. To this end, consider
\begin{equation}
	W = \mu \Indc_{C^*} \Indc_{C^*}^\top + Z,
	\label{eq:models}
\end{equation}
where $C^* \subset [n]$ has cardinality $K$ and $Z$ is an $n\times n$ symmetric matrix with $\{Z_{ij}\}_{1 \leq i \leq j \le n}$ being independent standard normal.
Given the data matrix $W$, the problem of interest is to recover $C^*$.
This problem has been investigated in \cite{Deshpande12,MontanariReichmanZeitouni14,HajekWuXu_one_info_lim15}
as a prototypical example of the {\em hidden community problem},\footnote{A slight variation of the model in \cite{Deshpande12,HajekWuXu_one_info_lim15} is
 that the data matrix therein is assumed to have zero diagonal. As shown in
\cite{HajekWuXu_one_info_lim15}, the absence of the diagonal has no impact on the statistical
limit of the problem as long as $K\diverge$, which is the case considered in the present paper.}
because the distribution of the entries
exhibits a community structure, namely,
$W_{i,j}\sim \calN(\mu, 1)$ if both $i$ and $j$ belong to $C^*$
and $W_{i,j}\sim \calN(0, 1)$ if otherwise.

Assuming that
$C^*$ is drawn from all subsets of $[n]$ of cardinality $K$ uniformly at random,
we focus on the following two types of recovery guarantees.\footnote{Exact and weak recovery are
called strong consistency and weak consistency in \cite{Mossel14}, respectively.}
Let $\xi = \Indc_{C^*} \in  \{0,1\}^n$ denote the indicator of the community.
Let $\hat\xi=\hat\xi(A) \in \{0,1\}^n$ be  an estimator.
\begin{itemize}
	\item We say that $\hat \xi$  {\em exactly recovers} $\xi,$ if,
as $n \to \infty$, $\Prob[\xi \neq \hat\xi] \to 0$.
	\item We say that $\hat \xi$ {\em weakly recovers} $\xi$ if, as
$n \to \infty$,  $\ d(\xi, \hat\xi) / K \to 0$ in probability,  where $d$ denotes the Hamming distance.
\end{itemize}
%
The weak recovery guarantee is phrased in terms of convergence in probability,
which turns out to be equivalent to convergence in mean.
Indeed, the existence of an estimator satisfying $d(\xi, \hat\xi) / K \to 0$ is equivalent
 to the existence of an estimator such that  $\Expect[d(\xi, \hat\xi)] = o(K)$ (see \cite[Appendix A]{HajekWuXu_one_info_lim15} for a proof).
 Clearly, any estimator achieving exact recovery also achieves weak recovery; for bounded $K$,  these two criteria are equivalent.

Intuitively, for a fixed matrix size $n$, as either the submatrix size $K$ or the signal strength $\mu$ decreases,
it becomes more difficult to locate the submatrix. A key role is played by the parameter
\[
\lambda = \frac{\mu^2 K^2}{n},
\]
which is the signal-to-noise ratio for classifying an index $i$ according to the statistic $\sum_j W_{i,j}$, which is distributed according to $\calN(\mu K, n)$ if $i\in C^*$ and $\calN(0, n)$
if $i\not\in C^*.$
As shown in \prettyref{app:degreethreshold},
it turns out that if the submatrix size $K$ grows  linearly with $n$, the information-theoretic limits of both weak and exact recovery are easily attainable via thresholding.
To see this, note that in the case of $K \asymp n$ simply thresholding the row sums can provide weak recovery in $O(n^2)$ time
 provided that $\lambda \to \infty$, which coincides with the information-theoretic conditions of weak recovery as proved
in \cite{HajekWuXu_one_info_lim15}.
Moreover,  in this case, one can show that this thresholding algorithm followed by a linear-time voting procedure achieves exact
recovery whenever information-theoretically possible.
Thus, this paper concentrates on weak and exact recovery in the sublinear regime of
\begin{equation}
\Omega(\sqrt{n}) \leq K \leq o(n).
	\label{eq:focus}
\end{equation}
We show that  an optimized message passing algorithm provides weak recovery in nearly linear -- $O(n^2 \log n)$ -- time
if $\lambda > 1/\eexp$. This extends the sufficient conditions obtained in \cite{Deshpande12}  for the regime $K=\Theta(\sqrt{n}).$
Our algorithm is the same as the message passing algorithm proposed in \cite{Deshpande12}, except that we find the polynomial that maximizes the signal-to-noise ratio via Hermite polynomials instead of using the truncated Taylor series as in \cite{Deshpande12}.
The  proofs follow closely those
 in \cite{Deshpande12}, with the most essential differences described in \prettyref{rmk:differences}.
 We observe that $\lambda > 1/\eexp$ is much more stringent than $\lambda > \frac{4K}{n} \log \frac{n}{K}$,
 the information-theoretic weak recovery threshold established in  \cite{HajekWuXu_one_info_lim15}.
It is an open problem whether any polynomial-time
algorithm can provide weak recovery for $\lambda \leq 1/e$.
In addition, we show that if $\lambda > 1/\eexp$, the message passing algorithm followed by a linear-time voting procedure can provide exact recovery whenever information theoretically possible.
This procedure achieves the optimal exact recovery threshold determined in  \cite{HajekWuXu_one_info_lim15} if $K \geq (\frac{1}{8 \eexp} + o(1)) \frac{n}{\log n}$.
See \prettyref{sec:compare} for a detailed comparison with information-theoretic limits.



The message passing algorithm is simpler to formulate and analyze for the principal submatrix recovery problem; nevertheless,
we show in  \prettyref{sec:Gaussian_bi_cluster} how to adapt the message passing algorithm and
its analysis to the biclustering problem.     Butucea et al. \cite{Butucea2013sharp} obtained sharp conditions
for exact recovery for the bicluster problem.   We show that calculations in \cite{Butucea2013sharp} with
minor adjustments provide information theoretic conditions for weak recovery as well.   The connection between
weak and exact recovery via the voting procedure described in  \cite{HajekWuXu_one_info_lim15}
carries over to the biclustering problem.

\paragraph{Notation}
For any positive integer $n$, let $[n]=\{1, \ldots, n\}$.
For any set $T \subset [n]$, let $|T|$ denote its cardinality and $T^c$ denote its complement.
For an $m\times n$ matrix $M$, let $\|M\|$ and $\|M\|_{\rm F}$  denote its spectral and Frobenius norm, respectively.
Let $\sigma_i(M)$ denote its singular values ordered decreasingly.
For any $S\subset [m], T \subset [n]$, let $M_{ST}\in \reals^{S \times T}$ denote $(M_{ij})_{i\in S,j \in T}$ and for $m=n$ abbreviate $M_{S}=M_{SS}$.
For a vector $x$, let $\|x\|$ denote its Euclidean norm.
We use standard big $O$ notations,
e.g., for any sequences $\{a_n\}$ and $\{b_n\}$, $a_n=\Theta(b_n)$ or $a_n  \asymp b_n$
if there is an absolute constant $c>0$ such that $1/c\le a_n/ b_n \le c$.
All logarithms are natural and we use the convention $0 \log 0=0$. Let $\Phi$ and $Q$
denote the cumulative distribution function (CDF) and complementary CDF  of the standard normal distribution,
respectively.
For $\epsilon\in[0,1]$, define the binary entropy function $h(\epsilon)\triangleq \epsilon\log \frac{1}{\epsilon} + (1-\epsilon)\log \frac{1}{1-\epsilon}$.
We say a sequence of events $\calE_n$ holds with high probability, if $\prob{\calE_n} \to 1$ as $n \to \infty$.

 \subsection{Algorithms and main results}
To avoid a plethora of factors $\frac{1}{\sqrt{n}}$ in the notation, we describe the message-passing algorithm
using the scaled version $A=\frac{1}{\sqrt{n}}W.$   The entries of $A$ have variance $\frac{1}{n}$ and mean
$0$ or $\frac{\mu}{\sqrt{n}}.$   This section presents algorithms and theoretical guarantees
for the symmetric model \prettyref{eq:models}.   \prettyref{sec:bi_clusterMP} gives adaptations to the asymmetric
case for the biclustering problem \prettyref{eq:model}.

Let $f(\cdot, t)\colon \reals \to \reals$ be a scalar function for each iteration $t$. Let $\theta^{t+1}_{i \to j}$ denote the
message transmitted from index $i$ to index $j$ at iteration $t+1$, which is given by
\begin{align}   \label{eq:theta_update_ij}
\theta^{t+1}_{i \to j} = \sum_{\ell \in [n] \backslash \ \{i, j \} } A_{\ell i} f( \theta^t_{\ell \to i }, t), \quad \forall j \neq i \in [n].
\end{align}
with the initial conditions $\theta^0_{i\to j} \equiv 0.$
Moreover, let $\theta^{t+1}_{i}$ denote index $i$'s belief at iteration $t+1$, which is given by
\begin{align}   \label{eq:theta_update_i}
\theta^{t+1}_i = \sum_{\ell \in [n] \backslash \{i \} } A_{\ell i} f( \theta^t_{\ell \to i }, t).
\end{align}
The form of  \prettyref{eq:theta_update_ij} is inspired by belief propagation algorithms, which have
the natural non backtracking property:  the message sent from $i$ to $j$ at time $t+1$ does not depend
on the message sent from $j$ to $i$ at time $t,$  thereby reducing the effect of echoes of messages sent by
$j.$

Suppose as $n \to \infty$, the messages $\theta^{t}_i$ (for fixed $t$) are such that the
empirical distributions of $(\theta^{t}_i: i\in [n]\backslash C^*)$ and $(\theta^{t}_i  :  i\in C^*)$ converge
to Gaussian  distributions with a certain mean and variance. Specifically, $\theta^{t}_i$ is approximately $\calN( \mu_t , \tau_t)$ for $i \in C^\ast$
and $\calN(0, \tau_t)$ for $i \notin C^\ast$.
Then \prettyref{eq:theta_update_ij}, \prettyref{eq:theta_update_i}, and the fact
$\theta_{i \to j }^t \approx \theta_i^t$
for all $i, j$  suggest the following recursive equations for $t\geq 0$:
\begin{align}
\mu_{t+1} &=\sqrt{\lambda} \expect{ f( \mu_t + \sqrt{\tau_t} Z, t) },  \label{eq:state_evolution1}  \\
\tau_{t+1} & = \expect{ f( \sqrt{\tau_t} Z, t )^2},    \label{eq:state_evolution2}
\end{align}
where $Z$ represents a standard normal random variable, and the initial conditions are $\mu_0=\tau_0=0.$
Following \cite{Deshpande12}, we call \prettyref{eq:state_evolution1} and \prettyref{eq:state_evolution2}
the {\em state evolution equations}, which are justified in \prettyref{sec:state_evolution}.
Thus, it is reasonable to estimate $C^*$ by selecting those indices $i$ such that $\theta_i^t$ exceeds a
given threshold.

Suppose, for the time being, that message distributions are Gaussian with parameters accurately
tracked by the state evolution equations.       Then classifying an index $i$ based
on $\theta_i^t$ boils down to testing two Gaussian
hypotheses with signal-to-noise ratio  $\frac{\mu_{t+1}^2}{\tau_{t+1}}.$   This gives
guidance for selecting the functions $f(\cdot, t)$ based on $\mu_t$
and $\tau_t$
to maximize $\frac{\mu_{t+1}}{\sqrt{\tau_{t+1}}}$.   For $t=0$ any choice of $f$ is equivalent, so long as $f(0, 0) >0.$
Without loss of generality, for $t\geq 1,$ we  can assume that the variances are normalized, namely, $\tau_t=1$ (e.g. we take $f(0,0)=1$ to make $\tau_1=1$) and choose $f(\cdot, t)$ to be the maximizer of
\begin{equation}
	\max\{\Expect[g(\mu_t + Z)]\colon \Expect[g(Z)^2]=1  \}
	\label{eq:optf}
\end{equation}
where $Z \sim {\cal N}(0,1)$.
By change of measure, $\Expect[g(\mu_t + Z)] = \Expect[g(Z)\rho(Z)]$, where
\begin{equation}
	\rho(x)= \frac{\diff \calN(\mu_t,1)}{\diff \calN(0,1)}(x)= e^{\mu_t x - \mu_t^2/2}.
	\label{eq:rho}
\end{equation}
Clearly, the best $g$ aligns with $\rho$ and we obtain
\begin{equation}
f(x,t) = \frac{\rho(x)}{\sqrt{\Expect[\rho^2(Z)]}} =  e^{x\mu_t-\mu_t^2}.
	\label{eq:fexp}
\end{equation}
%
With this optimized $f$, we have $\tau_t \equiv 1$ and
the state evolution \prettyref{eq:state_evolution1}  reduces to
$$\mu_{t+1} = \sqrt{\lambda} \expect{ f( \mu_t + Z, t) } = \sqrt{\lambda} \eexp^{\frac{\mu_t^2}{2}},$$
or, equivalently,
\begin{equation}
\mu_{t+1}^2=\lambda \eexp^{\mu_t^2}.	
	\label{eq:mu-ideal}
\end{equation}
Therefore if $\lambda > 1/\eexp$, then \prettyref{eq:mu-ideal} has no fixed point and hence $\mu_t \rightarrow \infty$ as $t\rightarrow \infty$.

Directly carrying out the above heuristic program, however, seems challenging. To rigorously justify the state evolution equations in \prettyref{sec:state_evolution}
we rely on the the method  of moments,
requiring $f$ to be a polynomial, which prompts us to look for the best polynomial of a given degree that maximizes the signal-to-noise ratio.
Denoting the corresponding state evolution by $(\hat \mu_t,\hat \tau_t)$, we aim to solve the following finite-degree version of \prettyref{eq:optf}:
\begin{equation}
	\max\{\Expect[g(\hat \mu_t + Z)]\colon \Expect[g(Z)^2]=1, \deg(g) \leq d \}.
	\label{eq:optf-poly}
\end{equation}
As shown in \prettyref{lmm:hermite}, this problem can be easily solved via Hermite polynomials, which form an orthogonal basis with respect to the Gaussian measure, and the optimal choice, say, $f_{d}(\cdot,t)$ can be obtained by normalizing the first $d+1$ terms in the orthogonal expansion of relative density \prettyref{eq:rho}, \ie, the best degree-$d$ $L_2$-approximation.
Compared to \cite[Lemma 2.3]{Deshpande12} which shows the existence of a good choice of polynomial that approximates the ideal state evolution \prettyref{eq:mu-ideal} based on Taylor expansions, our approach is to find the best message-passing rule of a given degree which results in the following state evolution that is optimal among all $f$ of degree $d$:
\begin{equation}
\hat\mu_{t+1}^2=\lambda \sum_{k=0}^d \frac{\hat\mu_t^{2k}}{k!}.	
	\label{eq:mu-poly}
\end{equation}
For any $\lambda > 1/e$, there is an explicit choice of the degree $d$ depending only on $\lambda$,\footnote{As $\lambda$ gets closer to the critical value $1/e$, we need a higher degree to ensure \prettyref{eq:mu-poly} diverges and in fact $d$ grows quite slowly as $\Theta(\log \frac{1}{\lambda e-1} / \log \log \frac{1}{\lambda e-1})$
See \prettyref{rmk:deg-lambda}.} so that $\hat\mu_t \to \infty$ as $t \diverge$ and the state evolution \prettyref{eq:mu-poly} for fixed $t$ correctly predicts the asymptotic behavior of the messages when $n\diverge$.
As discussed above, $\tilde{C}$ produced by thresholding messages $\theta_i^t$,  is likely to contain a large portion of $C^\ast$,
but since $K=o(n)$, it may (and most likely will)  also contain a large number of indices not in $C^\ast$.
Following \cite[Lemma 2.4]{Deshpande12}, we show that the power iteration\footnote{Note that as far as statistical utility is concerned, we could replace $\hat{u}$  produced by the power iteration by the leading singular vector of $A_{\tilde{C}}$, but that would incur a higher time complexity because
singular value decomposition in general takes $O(n^3)$ time to compute.}  (a standard spectral method)
in Algorithm \ref{alg:MP} can remove a large portion of the outlier vertices in $\tilde{C}.$

Combining message passing plus spectral cleanup yields  the following algorithm for estimating $C^\ast$ based on the messages $\theta_i^t$.

\begin{algorithm}[htb]
\caption{Message passing}\label{alg:MP}
\begin{algorithmic}[1]
\STATE Input: $n, K \in \naturals$, $\mu>0$, $A \in \reals^{n\times n}$, $d^\ast, t^\ast \in \naturals,$  and $s^* > 0.$
\STATE Initialize: $\theta_{i \to j}^0=0$ for all $i, j \in [n]$ with $i \neq j$ and $\theta_{i}^0=0$.
For $t\geq 0$, define
the sequence of degree-$d^*$ polynomials $f_{d^*}(\cdot, t)$ as per  \prettyref{lmm:hermite}
and $\hat{\mu}_t$ in \prettyref{eq:mu-poly}.
\STATE Run $t^\ast-1$ iterations of message passing as in \eqref{eq:theta_update_ij} with $f =f_{d^*}$ and
compute $\theta_{i}^{t^{\ast} }$ for all $i \in [n]$ as per \eqref{eq:theta_update_i}.
\STATE Find the set $\tilde{C}=\{ i \in [n]: \theta_{i}^{t^\ast} \ge \hat{\mu}_{t^\ast}/2 \}$.
\STATE (Cleanup via power method)
Recall that $A_{\tilde{C}}$ denotes the restriction of $A$ to the rows and columns with index in $\tilde{C}.$
Sample $u^0$ uniformly from the unit sphere in $\reals^{\tilde{C}}$ and compute
$u^{t+1}=A_{\tilde{C}}u^t / \|   A_{\tilde{C}}u^t        \| $  for $ 0\leq t \leq \lceil s^*\log n\rceil -1.$  Let $\hat{u}=u^{\lceil s^*\log n\rceil }.$
Return  $\hat{C},$  the set of  $K$ indices $i$ in $\tilde C$ with the largest values of $|\hat{u}_i|.$
\end{algorithmic}
\end{algorithm}

%
%

The following theorem provides a performance guarantee for Algorithm \ref{alg:MP} to approximately recover $C^\ast.$
\begin{theorem}  \label{thm:almost_exactBP_submat}
Fix $\lambda > 1/\eexp.$   Let $K$ and $\mu$ depend on $n$ in such a way that
$ \frac{\mu^2 K^2}{n} \to  \lambda$ and $\Omega(\sqrt{n}) \leq K \leq o(n)$ as $n\to \infty.$
Consider the model \prettyref{eq:models} with $|C^*|/K \to 1$ in probability as $n\to \infty.$
Define $d^\ast(\lambda)$ as in \prettyref{eq:dlambda}.
For every $\eta \in (0,1),$  there exist explicit positive constants  $t^\ast, s^\ast, c$ depending on  $\lambda$ and $\eta$
such that Algorithm \ref{alg:MP} returns
$ | \hat{C} \Delta C^\ast| \le \eta K $,
with probability converging to one as $n \to \infty$, and the
total time complexity is bounded by $c(\eta, \lambda) n^2 \log n$, where $c(\eta,\lambda) \to \infty$ as either $\eta \to 0$ or $\lambda  \to 1/\eexp.$
\end{theorem}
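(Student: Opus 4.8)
\emph{Proof strategy.} The plan is to combine three ingredients: a rigorous justification of the state evolution for the polynomial message-passing rule; a choice of the degree $d^\ast=d^\ast(\lambda)$ and iteration count $t^\ast=t^\ast(\lambda,\eta)$ so that the thresholded set $\tilde C$ contains almost all of $C^\ast$ while having cardinality $o(n)$; and a perturbation analysis of the power-iteration cleanup showing it drives the number of false positives down from $\Theta(n)$ to $o(K)$. First I would invoke the state-evolution analysis of \prettyref{sec:state_evolution}, which, because $f_{d^\ast}(\cdot,t)$ has fixed degree, uses the method of moments to show that for every fixed $t$ the empirical law of $(\theta^t_i:i\in C^\ast)$ converges to $\calN(\hat\mu_t,1)$ and that of $(\theta^t_i:i\notin C^\ast)$ to $\calN(0,1)$, where $(\hat\mu_t)$ satisfies the recursion \prettyref{eq:mu-poly}; moreover the fraction of indices on each side exceeding any fixed level converges to the corresponding Gaussian tail probability. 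The non-backtracking structure of \prettyref{eq:theta_update_ij} is what makes the moment computation go through, the dominant contributions coming from tree-like neighborhoods; handling general $K$ in the regime \prettyref{eq:focus} under $\mu^2K^2/n\to\lambda$ (rather than only $K\asymp\sqrt n$) is where the bookkeeping differs from \cite{Deshpande12}, as flagged in \prettyref{rmk:differences}.

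Next, since $\lambda>1/e$, \prettyref{lmm:hermite} together with the explicit $d^\ast(\lambda)$ of \prettyref{eq:dlambda} makes the recursion \prettyref{eq:mu-poly} diverge, so $\hat\mu_t\diverge$. I would fix a small $q=q(\eta,\lambda)\in(0,1)$ (to be pinned down at the end) and take $t^\ast$ with $Q(\hat\mu_{t^\ast}/2)\le q$; then, with probability tending to one, $\tilde C=\{i:\theta^{t^\ast}_i\ge\hat\mu_{t^\ast}/2\}$ obeys $|C^\ast\setminus\tilde C|\le 2qK$ and $|\tilde C\setminus C^\ast|\le 2qn$, so with $S:=C^\ast\cap\tilde C$ we have $|S|=(1-q+o(1))K\le K$ and $|\tilde C|\le 3qn=o(n)$. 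For the cleanup, write $A_{\tilde C}=\frac{\mu}{\sqrt n}\ones_S\ones_S^\top+\frac1{\sqrt n}Z_{\tilde C}$; the rank-one signal has operator norm $\frac{\mu|S|}{\sqrt n}=(1-q+o(1))\sqrt\lambda$ with unit leading eigenvector $v=\ones_S/\sqrt{|S|}$, while a union bound over all index sets $T$ with $|T|\le 3qn$ (using Gaussian concentration of $\|Z_T\|$ about its mean $(2+o(1))\sqrt{|T|}$) gives $\max_{|T|\le 3qn}\|Z_T\|/\sqrt n\le C(\lambda)\sqrt{q\log(1/q)}$ with high probability — exactly the estimate that tames the dependence of $\tilde C$ on $Z$ and forces $q$ to be small. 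Hence for $q$ small enough $A_{\tilde C}$ has a positive top eigenvalue $\ge\sqrt\lambda/2$ and spectral-gap ratio $\gamma:=\sigma_2(A_{\tilde C})/\sigma_1(A_{\tilde C})\le C'(\lambda)\sqrt{q\log(1/q)}$, so by Davis--Kahan its top unit eigenvector $u^\ast$ satisfies $\|u^\ast-v\|^2\le 2\gamma^2$ (for a suitable sign), and a random-start power iteration of $\lceil s^\ast\log n\rceil$ steps — which has $|\langle u^0,u^\ast\rangle|\gtrsim n^{-1/2}$ with high probability — returns $\hat u$ with $\|\hat u-u^\ast\|=O(\gamma^{\,s^\ast\log n}\sqrt n)\to 0$ once $s^\ast>1/(2\log(1/\gamma))$; I would choose $s^\ast=s^\ast(\lambda,\eta)$ so. Since $v$ is the uniform unit vector on $S$ and $\|\hat u-v\|^2\le 3\gamma^2$ for large $n$, a counting argument comparing $|\hat u_i|$ to $\tfrac1{2\sqrt{|S|}}$ bounds the top-$K$ set $\hat C$ by $|\hat C\,\Delta\,S|\le C''(\lambda)\,q\log(1/q)\,K$, and with $|S\,\Delta\,C^\ast|=|C^\ast\setminus\tilde C|\le 2qK$ this yields $|\hat C\,\Delta\,C^\ast|\le C'''(\lambda)\,q\log(1/q)\,K$; pinning $q$ (hence $t^\ast$) so that the right-hand side is at most $\eta K$ completes the recovery guarantee.

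For the running time, one iteration of \prettyref{eq:theta_update_ij} can be done in $O(n^2)$: compute $b^t_{\ell\to i}:=f_{d^\ast}(\theta^t_{\ell\to i},t)$ for all ordered pairs in $O(d^\ast n^2)$ time, then $\sigma_i:=\sum_{\ell\ne i}A_{\ell i}b^t_{\ell\to i}$ for all $i$ in $O(n^2)$ (which also yields $\theta^{t+1}_i$ via \prettyref{eq:theta_update_i}), and finally $\theta^{t+1}_{i\to j}=\sigma_i-A_{ji}b^t_{j\to i}$ in $O(1)$ per pair; over $t^\ast$ iterations this is $O(t^\ast n^2)$. The power method performs $O(s^\ast\log n)$ products by $A_{\tilde C}$, each $O(|\tilde C|^2)=O(n^2)$, for $O(s^\ast n^2\log n)$. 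Thus the total is $c(\eta,\lambda)\,n^2\log n$ with $c(\eta,\lambda)=\Theta(t^\ast+s^\ast)$, which diverges as $\eta\to 0$ or $\lambda\to 1/e$, the latter also because $d^\ast(\lambda)\diverge$.

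The main obstacle will be the state-evolution step: establishing convergence of all mixed moments of the message vectors to the Gaussian predictions comes down to a delicate combinatorial estimate on non-backtracking walks and tree-like structures, in which the interplay of the $\mu,K,n$ scalings — and the fact that $|C^\ast|$ is only approximately $K$ — has to be controlled uniformly over the whole regime \prettyref{eq:focus}. Once that is in hand (in \prettyref{sec:state_evolution}), the remaining pieces are comparatively routine: \prettyref{lmm:hermite} supplies the optimal nonlinearity and the choice of $d^\ast(\lambda)$, and the cleanup rests on standard random-matrix and eigenvector-perturbation bounds.
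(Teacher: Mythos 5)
Your proposal follows essentially the same route as the paper: invoke the state-evolution lemma (the moment-method analysis of \prettyref{sec:state_evolution}) together with the Hermite-optimal polynomial of \prettyref{lmm:hermite} to get a thresholded set $\tilde C$ containing almost all of $C^\ast$ with $|\tilde C|=o(n)$, then control the power-iteration cleanup by a union bound over all candidate sub-blocks plus a Davis--Kahan perturbation bound and the standard random-start power-method analysis, and finally tally $O(n^2)$ work per iteration over $O(1)+O(\log n)$ iterations. The only cosmetic differences from the paper's own argument (which packages these steps as Lemmas~\ref{lmm:approximaterecovery} and \ref{lmm:power_clean}) are minor bookkeeping choices, e.g.\ parametrizing the tail by $q$ rather than $\epsilon$ and writing the spectral bound with a spurious $\lambda$-dependent constant.
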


After the message passing algorithm and spectral cleanup are applied in   \prettyref{alg:MP},
a final linear-time voting procedure is deployed to obtain weak or exact recovery, leading to \prettyref{alg:exactrecovery} next.
As in \cite{Deshpande12}, we consider a threshold estimator for each vertex $i$ based on a sum over
$\hat C$ given by $r_i=\sum_{j \in \hat{C} } A_{ij}$. Intuitively, $r_i$ can be viewed as the aggregated ``votes'' received by the index $i$ in $\hat C$,
and the algorithm picks the set of $K$ indices with the most significant ``votes''. To show that
this voting procedure succeeds in weak recovery, a key step is to prove that $r_i$ is close to $\sum_{j \in C^\ast } A_{ij}.$
If $\mu =\Theta(1)$ as in \cite{Deshpande12},  given that $| \hat C  \triangle C^* |  = o(K),$  the error incurred by  summing
over $\hat C$ instead of over $C^*$  could be bounded by truncating $A_{ij}$ to a large magnitude.
However, for $\mu \to 0$ that approach fails (see \prettyref{rmk:differences} for more details).
Our approach is to introduce the clean-up procedure in  \prettyref{alg:exactrecovery}
based on the {\em successive withholding} method described in \cite{HajekWuXu_one_info_lim15} (see also  \cite{Condon01,MosselNeemanSlyCOLT14,Mossel14} for variants of this method).
In particular, we randomly partition
the set of vertices into $1/\delta$ subsets.   One at a time,
one subset, say $S$, is withheld to produce a reduced set of vertices $S^c$, on which we apply \prettyref{alg:MP}.
The estimate obtained from $S^c$ is then used by the voting
procedure to classify the vertices in $S$.
The analysis of the two stages is decoupled because conditioned on $C^*$, the outcome of \prettyref{alg:MP} depends only on $A_{S^c}$, which is independent of $A_{SS^c}$
used in the voting.

\begin{algorithm}[htb]
\caption{Message passing plus voting}\label{alg:exactrecovery}
\begin{algorithmic}[1]
\STATE Input: $n, K \in \naturals$, $\mu>0$, $A \in \reals^{n\times n}$, $\delta \in (0,1)$ with $1/\delta, n\delta \in \naturals$, $d^\ast, t^\ast  \in \naturals,$
and $s^\ast > 0.$
\STATE  Partition $[n]$ into $ 1/\delta$ subsets $S_k$ of size $n\delta$ randomly.
\STATE (Approximate recovery) For each $k=1, \ldots,  1/\delta $,
run  Algorithm \ref{alg:MP} (message passing for approximate recovery)
with input $\left(n(1-\delta), \lceil K(1-\delta) \rceil, \mu, A_{S_k^c}, 
 d^\ast, t^\ast, s^\ast\right)$ which outputs $\hat{C}_k$.
\STATE (Clean up) For each $k=1, \ldots,  1/\delta $  compute $r_i=\sum_{j \in \hat{C}_k } A_{ij}$ for all $i \in S_k$ and return
$\check{C}$, the set of $K$ indices in $[n]$ with the largest values of $r_i.$
\end{algorithmic}
\end{algorithm}

The following theorem provides a sufficient condition for the message passing plus voting
cleanup procedure (\prettyref{alg:exactrecovery}) to achieve
weak  recovery, and, if the information-theoretic sufficient condition is also satisfied, exact recovery.

\begin{theorem}  \label{thm:weakexactBP_submat}
 Let $K$ and $\mu$ depend on $n$ in such a way that
$ \frac{\mu^2 K^2}{n} \to  \lambda$ for some fixed $\lambda > 1/\eexp$.
and $\Omega(\sqrt{n}) \leq K \leq o(n)$ as $n\to \infty.$
Consider the model \prettyref{eq:models} with $|C^*| \equiv K $.
Let $\delta > 0$  be such that
$\lambda \eexp (1-\delta) >1$.
Define $d^*=d^\ast(\lambda(1-\delta) )$ as per \prettyref{eq:dlambda}.
Then  there exist positive constants $t^\ast, s^\ast, c$ determined explicitly by
$\delta$ and $\lambda$, such that
\begin{enumerate}
	\item (Weak recovery) \prettyref{alg:exactrecovery} returns $\check{C}$ with
$ | C' \Delta C^\ast|/K  \to 0$ in probability   as $n \to \infty$.

\item (Exact recovery) Furthermore, assume that
\begin{equation}
\liminf_{n\to \infty}   \frac{ \sqrt{K}\mu }{ \sqrt{ 2 \log K} + \sqrt{ 2  \log n } } >1.   \label{eq:submat-mle-suff}
\end{equation}
Let $\delta > 0$  be chosen such that for all sufficiently large $n,$
$$\min\left\{ \lambda \eexp (1-\delta) , \frac{ K\mu(1-2\delta) }{ \sqrt{ 2K \log K} + \sqrt{ 2 K \log (n- K) } + \delta\sqrt{K}}  \right\} >1.$$
Then \prettyref{alg:exactrecovery} returns
$\check{C}$ with $\pprob{ \check{C}\neq C^\ast }\to 0 $  as $n \to \infty$.
\end{enumerate}
The total time complexity is bounded by $c(\delta,\lambda) n^2 \log n$, where $c(\delta,\lambda) \to \infty$ as $\delta \to 0$ or $\lambda  \to 1/\eexp$.
\end{theorem}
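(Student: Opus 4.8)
\emph{Strategy and decoupling.} The analytically heavy part — justification of the state evolution and the spectral cleanup — is already packaged in \prettyref{thm:almost_exactBP_submat}; what remains is to control the voting step, and the device that makes this tractable is the successive withholding in \prettyref{alg:exactrecovery}. Condition throughout on $C^*$ and on the random partition $\{S_k\}$. Then $\hat C_k$, the output of \prettyref{alg:MP} applied to $A_{S_k^c}$, is a measurable function of $A_{S_k^c}$ and of the power-iteration seed, hence independent of the rectangular block $(A_{ij})_{i\in S_k,\,j\in S_k^c}$, which is the only randomness entering the votes $r_i=\sum_{j\in\hat C_k}A_{ij}$, $i\in S_k$. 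Since $1/\delta$ is a fixed integer, a union bound reduces the approximate-recovery stage to a single $k$. The instance that \prettyref{alg:MP} sees on $S_k^c$ is, after the routine rescaling to the normalization of \prettyref{eq:models}, an instance of dimension $n(1-\delta)$ with mean $\mu$ and planted set $C^*\cap S_k^c$; as $|C^*\cap S_k^c|$ is hypergeometric with mean $K(1-\delta)$ and $K\to\infty$, we have $|C^*\cap S_k^c|/\lceil K(1-\delta)\rceil\to1$ in probability, and the effective signal-to-noise ratio is $\mu^2(K(1-\delta))^2/(n(1-\delta))\to\lambda(1-\delta)>1/\eexp$ by the choice of $\delta$. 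With $d^\ast=d^\ast(\lambda(1-\delta))$ as prescribed and $t^\ast,s^\ast$ chosen per \prettyref{thm:almost_exactBP_submat} for a small target fraction $\eta$ (fixed below), we conclude: on an event $\calE$ with $\Prob[\calE]\to1$, $|\hat C_k\,\Delta\,(C^*\cap S_k^c)|\le\eta K$ for every $k$; in particular, since $\hat C_k\subseteq S_k^c$, we get $|\hat C_k\setminus C^*|\le\eta K$ and $|\hat C_k\cap C^*|\ge\lceil K(1-\delta)\rceil-\eta K\ge K(1-\delta-\eta)$.

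\emph{Weak recovery.} Work on $\calE$ and condition also on the collection $\{\hat C_k\}$. By the decoupling, for $i\in S_k$ the vote is conditionally Gaussian, $r_i\sim\calN\!\big(\mu\,|\hat C_k\cap C^*|\,\indc{i\in C^*}/\sqrt n,\ |\hat C_k|/n\big)$, with $\{r_i\}_{i\in S_k}$ mutually independent. Fix $\tau=\tfrac12\sqrt\lambda(1-\delta)$ and take $\eta<(1-\delta)/2$. For $i\in C^*$ the conditional mean is at least $\mu K(1-\delta-\eta)/\sqrt n=\sqrt\lambda(1-\delta-\eta)(1-o(1))$, exceeding $\tau$ by a positive constant, while the conditional variance is $|\hat C_k|/n\le K/n=o(1)$; hence $\prob{r_i<\tau}=o(1)$ uniformly, so $\Expect\big[\#\{i\in C^*:r_i<\tau\}\big]=o(K)$. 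For $i\notin C^*$ the mean is $0$ and the variance is at most $K/n$, so $\prob{r_i>\tau}\le\eexp^{-cn/K}$ with $c=\lambda(1-\delta)^2/8$, whence $\Expect\big[\#\{i\notin C^*:r_i>\tau\}\big]\le n\,\eexp^{-cn/K}=o(K)$ since $n/K\to\infty$. The standard comparison of a threshold rule with the ``top-$K$'' rule gives $|\check C\,\Delta\,C^*|\le2\big(\#\{i\in C^*:r_i<\tau\}+\#\{i\notin C^*:r_i\ge\tau\}\big)$, which is $o(K)$ in conditional expectation, hence $o(K)$ in probability on $\calE$; combined with $\Prob[\calE]\to1$ this is weak recovery.

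\emph{Exact recovery and running time.} Now assume \prettyref{eq:submat-mle-suff}, so that $\mu\sqrt K\ge(1+c_0)\big(\sqrt{2\log K}+\sqrt{2\log n}\big)$ for some fixed $c_0>0$ and all large $n$, and choose $\delta$ (hence take $\eta\le\delta$) small enough that the two displayed inequalities in the theorem hold and, in addition, $(1+c_0)(1-\delta-\eta)\ge1+c_0/2$. Exact recovery amounts to $\min_{i\in C^*}r_i>\max_{i\notin C^*}r_i$, which holds once there is a level $\tau^{\ast\ast}$ with $\prob{\exists\,i\in C^*:r_i\le\tau^{\ast\ast}}\to0$ and $\prob{\exists\,i\notin C^*:r_i\ge\tau^{\ast\ast}}\to0$; here union bounds suffice and no independence across blocks is needed. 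Using $v_i:=|\hat C_k|/n\le\lceil K(1-\delta)\rceil/n$ and $Q(x)\le\eexp^{-x^2/2}$, the first probability is $o(1)$ as soon as $\mu K(1-\delta-\eta)/\sqrt n-\tau^{\ast\ast}$ exceeds $\sqrt{2v_i\log K}$ by a growing margin, and the second as soon as $\tau^{\ast\ast}$ exceeds $\sqrt{2v_i\log(n-K)}$ by a growing margin. Such a $\tau^{\ast\ast}$ exists because, after dividing through by $\sqrt{K/n}$, the requirement becomes $\mu\sqrt K(1-\delta-\eta)>\sqrt{2(1-\delta)}(1+o(1))\big(\sqrt{\log K}+\sqrt{\log(n-K)}\big)$ plus an $o(\sqrt{\log n})$ margin; since $\sqrt{2(1-\delta)}<\sqrt2$ and, by the choice above, $\mu\sqrt K(1-\delta-\eta)\ge(1+c_0/2)\big(\sqrt{2\log K}+\sqrt{2\log n}\big)$, the slack is $\Omega(\sqrt{\log n})$, which comfortably absorbs the $o(\sqrt{\log n})$ losses from the union bounds and the $v_i$-rounding. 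Hence $\Prob[\check C\neq C^*]\to0$. For the running time, each of the $1/\delta$ calls to \prettyref{alg:MP} costs $O(n^2\log n)$ with a constant governed by $d^\ast,t^\ast,s^\ast$ — which blow up as $\delta\to0$ (the $1/\delta$ factor) or as $\lambda\to1/\eexp$ (through $d^\ast(\lambda(1-\delta))$, see the footnote after \prettyref{eq:mu-poly}) — while computing all votes costs $O(nK)=O(n^2)$ and selecting the largest $K$ costs $O(n\log n)$, for a total of $c(\delta,\lambda)\,n^2\log n$.

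\emph{Main obstacle.} Granting \prettyref{thm:almost_exactBP_submat}, the one genuinely delicate point is quantitative: \prettyref{alg:MP} returns $\hat C_k$ that is accurate only to a \emph{constant} fraction $\eta$ of $K$, which injects an $O(\eta)$ multiplicative perturbation into the vote means $\mu|\hat C_k\cap C^*|/\sqrt n$. This is harmless for weak recovery (any small $\eta$ works), but for exact recovery it competes head-on with the information-theoretic margin; this is exactly why the exact-recovery claim needs the strengthened condition (the $1-2\delta$ factor and the $+\delta\sqrt K$ term) rather than \prettyref{eq:submat-mle-suff} alone, and why $\delta$ — which is simultaneously constrained from above by $\lambda\eexp(1-\delta)>1$ so that message passing still works on each withheld subproblem — and $\eta$ must be pushed to zero in tandem while still leaving a multiplicative buffer coming from \prettyref{eq:submat-mle-suff}. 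A minor additional bookkeeping point is confirming that feeding $A_{S_k^c}$ to \prettyref{alg:MP} is, after rescaling, an instance of signal-to-noise ratio $\lambda(1-\delta)$, not $\lambda$.
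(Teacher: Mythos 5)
Your proposal is correct and follows essentially the same route as the paper's proof: successive withholding to decouple the message-passing output from the votes, the hypergeometric concentration of $|C^*\cap S_k^c|$, invoking \prettyref{thm:almost_exactBP_submat} with effective SNR $\lambda(1-\delta)$, Gaussian concentration of the conditionally independent votes $r_i$ plus the top-$K$ comparison for weak recovery, and union-bound tail estimates (equivalently the max-of-Gaussians fact the paper cites) for exact recovery; the time-complexity argument matches as well. The only cosmetic differences are that you carry a separate accuracy target $\eta$ where the paper re-uses $\delta$, and you re-derive the Gaussian extremal behavior by hand rather than quoting it, which changes no substance.
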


%

\begin{remark} \label{rmk:two_step}
As shown in \cite[Theorem 7]{HajekWuXu_one_info_lim15}, if there is an algorithm that can approximately recover $|C^*|$ even if
$|C^*|$ is random and only approximately equal to $K,$  then that algorithm can be combined with a linear-time voting procedure to achieve exact recovery.
By \prettyref{thm:almost_exactBP_submat},    \prettyref{alg:MP}
indeed works for such random $|C^*|$ and  so the second part of \prettyref{thm:weakexactBP_submat} follows from  \prettyref{thm:almost_exactBP_submat} and
the general results of  \cite{HajekWuXu_one_info_lim15}.
\end{remark}

\begin{remark}\label{rmk:sufficient_comparison}
\prettyref{thm:weakexactBP_submat} ensures \prettyref{alg:exactrecovery} achieves exact recovery if
both  \prettyref{eq:submat-mle-suff} and $\lambda > 1/\eexp$ hold;  it is of interest
to compare these two conditions.  Note that
$$
\frac{ \sqrt{K} \mu  }{ \sqrt{ 2 \log K} + \sqrt{ 2  \log n } } =\sqrt{\lambda \eexp} \times \sqrt{\frac{n}{8\eexp K \log n}} \frac{2}{(1+ \sqrt{ \log K/\log n} )}.
$$
Hence, if $\liminf_{n\to \infty} K \log n/n  \ge \frac{1}{8 \eexp}$,  \prettyref{eq:submat-mle-suff} implies  $\lambda > 1/\eexp$
and thus \prettyref{eq:submat-mle-suff} alone is sufficient for \prettyref{alg:exactrecovery} to succeed;
if $ \limsup_{n\to \infty} K \log n/n  \le \frac{1}{8 \eexp}$, then $\lambda > 1/\eexp$ implies \prettyref{eq:submat-mle-suff}
and thus $\lambda > 1/\eexp$ alone is sufficient for \prettyref{alg:exactrecovery} to succeed. The asymptotic
regime considered in \cite{Deshpande12} entails $K=\Theta(\sqrt{n}),$ in which case the condition $\lambda > 1/\eexp$
is sufficient for exact recovery.
\end{remark}

\subsection{Comparison with information theoretic limits}  \label{sec:compare}

As noted in the introduction, in the regime $K= \Theta(n)$, a thresholding algorithm based on row sums
provides weak and, if a voting procedure is also used, exact recovery whenever it is informationally possible.
In this subsection,
we compare the performance of the message passing algorithms
to the information-theoretic limits on the recovery problem in the regime \prettyref{eq:focus}.
Notice that the comparison here takes into account the sharp constant factors.
Information-theoretic limits for the biclustering problem are discussed
in \prettyref{sec:bi_cluster_info_limits}.

\paragraph{Weak recovery}
The information-theoretic threshold for weak recovery has been determined in \cite[Theorem 2]{HajekWuXu_one_info_lim15}, which, in the regime of \prettyref{eq:focus}, boils down to the following:
Weak recovery is possible if
 \begin{equation}
\liminf _{n\to\infty} \frac{K \mu^2}{4 \log \frac{n }{K}}> 1,
	\label{eq:weak_Gaussian_suff}
\end{equation}
and impossible if
 \begin{equation}
\limsup _{n\to\infty} \frac{ K\mu^2}{4 \log \frac{n }{K}} < 1.\label{eq:weak_Gaussian_nec}
\end{equation}
This implies that the minimal signal-to-noise ratio for weak recovery is
\[
\lambda > (4+\epsilon) \frac{K}{n} \log \frac{n}{K}
\]
for any $\epsilon > 0$, which vanishes in the sublinear regime of $K=o(n)$.
In contrast, in the regime \prettyref{eq:focus},
to achieve weak recovery message passing (\prettyref{alg:MP}) demands a non-vanishing signal-to-noise ratio, namely, $\lambda > 1/\eexp$.
No polynomial-time algorithm is known to succeed if
$\lambda \leq  1/ \eexp$,  suggesting that computational complexity might incur a severe penalty on the statistical optimality when $K=o(n)$.

\paragraph{Exact recovery}


In the regime of \prettyref{eq:focus}, the information limits of exact recovery (see \cite[Theorem 4 and Remark 7]{HajekWuXu_one_info_lim15})
are as follows: Exact recovery is possible if
 \prettyref{eq:submat-mle-suff} holds, and impossible if
 \begin{equation}
\limsup_{n \to \infty} \frac{ \sqrt{K} \mu  }{ \sqrt{ 2 \log K} + \sqrt{ 2  \log n } }  < 1.
	\label{eq:submat-mle-nece}
\end{equation}
In view of \prettyref{rmk:sufficient_comparison}, we conclude that \prettyref{alg:exactrecovery} achieves the sharp threshold of exact recovery if
\begin{equation}
K \geq \pth{\frac{1}{8 \eexp} + o(1)} \frac{n}{\log n}.	
	\label{eq:Kexact}
\end{equation}
We note that a counterpart of this conclusion for the biclustering problem is obtained in \prettyref{rmk:K12exact} in terms of the submatrix sizes.

To further the discussion on weak and exact recovery,
consider the regime
$$
K=  \frac{\rho n }{\log^{s-1} n}, \quad  \mu^2= \frac{\mu_0^2 \log^s n}{n},$$
 where
$s\ge 1$, $\rho \in (0,1)$, and $\mu_0 > 0$ are fixed constants.
Throughout this regime, weak recovery is information theoretically possible because
the left-hand side of \prettyref{eq:weak_Gaussian_suff} is $\Omega(\frac{\log n}{\log\log n}) \to \infty$.
On one hand, in view of \prettyref{eq:submat-mle-suff} and \prettyref{eq:submat-mle-nece},
  exact recovery is possible if $\frac{\rho \mu_0^2}{8} > 1$ and impossible
 if   $\frac{\rho \mu_0^2}{8} <1$. On the other hand, $\lambda = \rho^2\mu_0^2(\log n)^{2-s},$ yielding:
\begin{itemize}
\item   When $1 \le s<2$,  then $\lambda =\Omega(\log^{2-s} n )\to \infty.$
Thus weak recovery is  achievable in polynomial-time
by the message passing algorithm,  spectral methods, or even row-wise thresholding.
If  $\frac{\rho \mu_0^2}{8} >1$, exact recovery is  attainable in polynomial-time by combining the weak recovery algorithm with a linear time
voting procedure as shown in \cite{HajekWuXu_one_info_lim15}.
\item
When $s=2$, then $\lambda = \rho^2\mu_0^2,$  and  weak recovery by the message passing algorithm is possible if  $\rho^2 \mu_0^2 \eexp  > 1.$
  \prettyref{fig:MP_phase_plot} shows the curve $\{(\mu_0, \rho):  \rho^2 \mu_0^2 \eexp =1 \}$ corresponding to the weak recovery condition by the message
passing algorithm,  and the curve $\{ (\mu_0, \rho): \rho \mu_0^2/8 =1\}$ corresponding to the information-theoretic exact recovery condition.
When $\rho \geq \frac{1}{8e}$, the latter curve dominates the former and \prettyref{alg:exactrecovery} achieves optimal exact recovery.
\item
When $s>2$,  $\lambda \to 0$, and no polynomial-time procedure is known to provide weak, let alone exact, recovery.
\end{itemize}

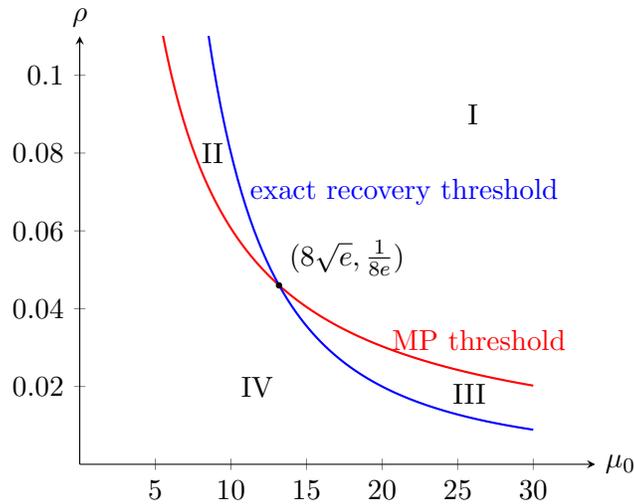
\begin{figure}[hbt]
\centering
\begin{tikzpicture}[transform shape,scale=1]
\begin{axis}[
    standard,
    enlargelimits=upper,
    xmin=0,   xmax=31,
	ymin=0,   ymax=0.1,
xlabel=$\mu_0$,ylabel=$\rho$,
scaled ticks=false, tick label style={/pgf/number format/fixed},
every axis plot post/.append style={
  mark=none,samples=50,smooth,thick} 
]

\addplot[domain=3:30,red]{exp(-1/2)/x};
\addplot[domain=3:30,blue]{8/(x^2)};

\node[anchor=west] at (axis cs:25,0.09){I};
\node[anchor=west] at (axis cs:24,0.018){III};
\node[anchor=west] at (axis cs:7.4,0.08){II};
\node[anchor=west] at (axis cs:10,0.02){IV};

\node[anchor=west,red] at (axis cs:20,0.032){MP threshold};
\node[anchor=west,blue] at (axis cs:10.6,0.07){exact recovery threshold};
\draw[fill=black] (axis cs:13.18,0.046) circle[radius=1pt] node[anchor=south west] {$(8\sqrt{e},\frac{1}{8e})$};
\end{axis}
\end{tikzpicture}
\caption{Phase diagram for the Gaussian model with $K=\rho n /\log n$ and
$\mu^2=\mu_0^2\log^2 n/n$  for  $\mu_0, \rho$ fixed as $n\to \infty.$
In region I, exact recovery is provided by the message passing (MP) algorithm plus voting cleanup.
In region II, weak recovery is provided by MP, but exact recovery is not information theoretically possible.
In region III exact recovery is possible, but no polynomial time algorithm is known for even weak recovery.
In region IV, with $\mu_0  >0$ and $\rho>0$, weak recovery, but not exact recovery, is possible
and no polynomial time algorithm is known for weak recovery.}
\label{fig:MP_phase_plot}
\end{figure}

\subsection{Comparison with the spectral limit}
\label{sec:spectral}

It is reasonable to conjecture that $\lambda > 1$ is the spectral limit for recoverability by spectral estimation
methods.   This conjecture is rather vague, because it is difficult to define what constitutes spectral
methods.   Nevertheless, some evidence for this conjecture is provided by \cite[Proposition 1.1]{Deshpande12},
which, in turn, is  based on results on the spectrum of a random matrix perturbed by adding a
rank-one deterministic matrix \cite[Theorem 2.7]{KnowlesYin13}.

The message passing framework used in this paper itself provides some evidence for the conjecture.  Indeed,
if $f(x,0)\equiv 1$ and $f(x,t)=x$ for all $t\geq 1$, the iterates $\theta^t$ are close to what is obtained by iterated multiplication by the matrix $A,$  beginning with the all one vector, which is the power method for computation of the eigenvector corresponding to the principal eigenvalue of $A$.\footnote{Note that if we included $i,j$ in the summation in \prettyref{eq:theta_update_ij} and \prettyref{eq:theta_update_i}, then we would have $\theta^t = A^t \ones$ exactly. Since the entries of $A$ are $O_P(1/\sqrt{n})$, we expect this only incurs a small difference to the sum for finite number of iterations.}
To be more precise, with this linear $f$ the message passing equation \prettyref{eq:theta_update_ij} can be expressed in terms of powers of the {\em non-backtracking matrix} $\mathbf{B} \in \reals^{\binom{n}{2} \times \binom{n}{2} }$ associated with the
matrix $A$, defined by $B_{e f} =  A_{e_1,e_2} \indc{ e_2=f_1} \indc{e_1 \neq f_2}$, where $e=(e_1,e_2)$ and $f=(f_1,f_2)$
are directed pairs of indices.   Let ${\Theta}^t \in \reals^{n(n-1)}$ denote the
messages on directed edges with ${\Theta}_e^t =\theta^t_{e_1 \to e_2}$.
Then,  \prettyref{eq:theta_update_ij} simply becomes   $ \Theta^t =  \mathbf{B}^t \mathbf{1}.$
To evaluate the performance of this method, we turn to the state evolution equations \prettyref{eq:state_evolution1} and \prettyref{eq:state_evolution2}, which yield
$\mu_t = \lambda^{t/2}$  and $\tau_t = 1$ for all $t \geq 1.$
Therefore,  by a simple variation of \prettyref{alg:MP} and   \prettyref{thm:almost_exactBP_submat},
if $\lambda > 1,$  the linear message passing algorithm can provide weak recovery.



For the submatrix {\em detection} problem, namely, testing $\mu=0$ (pure noise) versus $\mu>0$,
as opposed to support recovery, if $\lambda$ is fixed with $\lambda >1,$
a simple thresholding test based on the largest eigenvalue of the matrix $A$
provides detection error probability converging to zero \cite{FeralPeche2007},
while if $\lambda < 1$  no test based solely on the eigenvalues of $A$ can achieve vanishing probability of error \cite{MontanariReichmanZeitouni14}.   It remains, however, to establish a solid
connection between the detection and estimation problem for submatrix
localization for spectral methods.

\subsection{Computational barriers}
A recent line of work \cite{kolar2011submatrix,ma2013submatrix,ChenXu14,CLR15} has uncovered
a fascinating interplay between statistical optimality and computational efficiency for the \emph{recovery} problem
and the related \emph{detection} and \emph{estimation} problem.\footnote{The papers \cite{kolar2011submatrix,ma2013submatrix,ChenXu14,CLR15} considered the  biclustering version of the submatrix localization problem \prettyref{eq:model}.}
Assuming the hardness of the planted clique problem, rigorous computational lower bounds have been obtained  in \cite{ma2013submatrix,CLR15} through reduction arguments.
In particular, it is shown in \cite{ma2013submatrix} that when $K=n^{\alpha}$ for $0<\alpha<2/3$, merely achieving the information-theoretic limits of detection within any constant factor (let alone sharp constants) is as hard as
detecting the planted clique; the same hardness also carries over to exact recovery in the same regime.
Furthermore, it is shown that the hardness of estimating this type of matrix, which is both low-rank and sparse, highly depends on the loss function \cite[Section 5.2]{ma2013submatrix}. For example, for $K=\Theta(\sqrt{n})$, entry-wise thresholding attains an $O(\log n)$ factor of the minimax mean-square error; however, if the error is gauged in squared operator norm instead of Frobenius norm, attaining an $O(\sqrt{n}/\log n)$ factor of the minimax risk is as hard as solving planted clique.
Similar reductions have been shown in \cite{CLR15} for exact recovering of the submatrix of size $K=n^{\alpha}$
and the planted clique recovery problem for any $0<\alpha<1$.

The results in \cite{ma2013submatrix,CLR15} revealed that the difficulty of submatrix localization crucially depends on the size and planted clique hardness kicks in if $K = n^{1-\Theta(1)}$. In search of the exact phase transition point where statistical and computational limits depart,
  we further zoom into the regime of $K=n^{1-o(1)}$. We showed in~\cite{HajekWuXu_one_sdp15}
no computational gap exists in the regime $K=\omega(n/\log n)$, since a semi-definite programming relaxation of the maximum likelihood estimator can achieve the information limit for exact recovery with sharp constants.
The current paper further pushes the boundary to $ K \ge \frac{n}{\log n} (\frac{1}{8 \eexp}+o(1))$, in which case the sharp information limits can be attained in nearly linear-time via message passing plus clean-up. However, as soon as $\limsup_{n\to \infty} K \log n/n  < \frac{1}{8 \eexp}$, there is a gap between
the information limits and the sufficient condition of message passing plus clean-up, given by $\lambda > 1/\eexp$.
For weak recovery, a similar departure emerges whenever $K=o(n)$.


\section{Justification of state evolution equations}   \label{sec:state_evolution}

In this section, we justify the state evolution equations by establishing the following key lemma.
The method of moments is used, closely following \cite{Deshpande12}.  \prettyref{rmk:differences}
describes the main differences between the analysis here and in \cite{Deshpande12}.

\begin{lemma} \label{lmm:approximatecentrallimit}
Let $f(\cdot, t)$ be a finite-degree polynomial for each $t \geq 0$. For each $n$,
let $W \in \reals^{n\times n} $ be defined in \prettyref{eq:models} with
$K$ and $\mu$ such that $\frac{K^2\mu^2}{n} \equiv \lambda$ for some $\lambda > 0$ and $\Omega(\sqrt{n}) \le K \le o(n).$
Let $A=W/\sqrt{n}$
and set $\theta^{0}_{i \to j} =0$.  Consider the message passing algorithm defined by
\eqref{eq:theta_update_ij} and \eqref{eq:theta_update_i}.
Denote the Kolmogorov-Smirnov distance between distributions $\mu$ and $\nu$ by $\dKS(\mu,\nu) \triangleq \sup_{x\in \reals} |\mu((-\infty,x]) - \nu((-\infty,x])|$.
Then as $n\diverge$,
\begin{align*}
\dKS\pth{ \frac{1}{|C^*|} \sum_{i \in C^\ast} \delta_{\theta_i^t}, \calN(\mu_t, \tau_t^2)} & \overset{p}{\to} 0, \\
\dKS\pth{ \frac{1}{n-|C^*|} \sum_{i \notin C^\ast} \delta_{\theta_i^t}, \calN(0,\tau_t^2)} & \overset{p}{\to} 0,
\end{align*}
where $\mu_t$ and $\tau_t$ are defined in \prettyref{eq:state_evolution1} and \prettyref{eq:state_evolution2}, respectively.
\end{lemma}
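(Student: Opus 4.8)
The plan is to prove the claimed Gaussianity of the empirical distributions of the messages by the method of moments, following the strategy of \cite{Deshpande12}. The key reduction is that, because $f(\cdot,t)$ is a polynomial of fixed degree, each iterate $\theta_i^t$ is itself a fixed polynomial in the entries of $A$, so moments of linear statistics of the $\theta_i^t$'s become sums over labelled multigraphs, and only the tree-like (non-self-intersecting) terms survive in the limit. Concretely, I would proceed in the following steps. First, fix $t$ and unroll the recursion \prettyref{eq:theta_update_ij} to express $\theta_{i\to j}^t$ as a sum over non-backtracking walks of length $t$ in the complete graph on $[n]$, weighted by products of entries $A_{\ell i}$ and by nested evaluations of $f$; since $f$ is a polynomial of degree $d$, expanding these evaluations turns each walk contribution into a sum over rooted trees of depth $t$ with branching bounded by $d$, with each edge carrying a power of an $A$-entry. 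Second, separate the mean and fluctuation parts: write $A_{\ell i} = \frac{\mu}{\sqrt n}\xi_\ell\xi_i + \frac{1}{\sqrt n}Z_{\ell i}$ (after scaling), and track how the planted signal contributes the drift term $\mu_t$ when $i\in C^*$ and zero otherwise, while the noise contributes the variance $\tau_t^2$; the scaling $K^2\mu^2/n\equiv\lambda$ and $\Omega(\sqrt n)\le K\le o(n)$ is exactly what makes the signal along a depth-$t$ tree contribute an $O(1)$ amount.

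Third, compute, for a fixed index $i$, the limiting moments $\Expect[(\theta_i^t)^k\mid i\in C^*]$ and $\Expect[(\theta_i^t)^k\mid i\notin C^*]$: the dominant contribution comes from pairing up the $k$ copies of the depth-$t$ tree so that edges are matched in pairs (giving Wick/Gaussian combinatorics for the noise part) and so that no extra cycles are created; all terms where two trees share a vertex off the root, or where an edge is used an odd number of times, are lower order because they cost a factor $n^{-1/2}$ without a compensating gain — here one uses $K=o(n)$ to control how many of the $n$ possible intermediate labels can lie in $C^*$. This shows the single-coordinate marginal of $\theta_i^t$ converges to $\calN(\mu_t,\tau_t^2)$ or $\calN(0,\tau_t^2)$ with $(\mu_t,\tau_t)$ obeying \prettyref{eq:state_evolution1}–\prettyref{eq:state_evolution2}, by induction on $t$ (the inductive hypothesis feeds the distribution of $\theta_{\ell\to i}^{t-1}$ into the definition of $f$). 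Fourth, upgrade from marginals to the empirical distribution: compute the second moment of a linear statistic $\frac{1}{|C^*|}\sum_{i\in C^*}\phi(\theta_i^t)$ for bounded test functions (or, equivalently, the joint moments of $\theta_i^t,\theta_j^t$ for $i\ne j$) and show the cross terms factorize in the limit, i.e.\ $\theta_i^t$ and $\theta_j^t$ are asymptotically independent because their defining trees are disjoint with probability $1-o(1)$. A standard second-moment/Chebyshev argument then gives convergence in probability of the empirical CDF at each continuity point, and since the limit is a continuous CDF, Pólya's theorem upgrades pointwise convergence to convergence in Kolmogorov--Smirnov distance, which is the stated conclusion.

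The main obstacle — and the place where this argument genuinely differs from \cite{Deshpande12}, cf.\ \prettyref{rmk:differences} — is that here $\mu\to 0$ rather than $\mu=\Theta(1)$, so one cannot afford crude bounds on individual edge weights and must track the interplay of the three scales $\mu\to 0$, $K\to\infty$, $K/n\to 0$ simultaneously when estimating the non-tree graph terms. The bookkeeping is: each ``defect'' in a graph term (a repeated vertex, an odd edge multiplicity, or an extra cycle) must be shown to cost a net power of $n$ that is not recovered by the extra combinatorial freedom of choosing labels, and the power counting has to come out favorably for every $K$ in the window \prettyref{eq:focus}, uniformly. A secondary technical point is controlling the number of iterations: $t$ is fixed, so the trees have bounded depth, but the degree $d$ of $f$ and the moment order $k$ are also fixed, which keeps the number of graph topologies finite; one then only needs the per-topology asymptotics, which reduces everything to the elementary estimate that a non-self-intersecting depth-$t$ $d$-ary tree contributes $\Theta(1)$ and any self-intersecting one contributes $o(1)$. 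Once this power-counting lemma is in hand, the remaining steps are routine moment matching and a Carleman-type argument to conclude convergence to the (moment-determinate) Gaussian.
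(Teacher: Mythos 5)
Your high-level strategy — method of moments via a rooted-tree expansion of $\theta_i^t$, a second-moment/Chebyshev argument to upgrade marginal convergence to convergence of the empirical distribution, and a P\'olya-type argument to pass from weak convergence to Kolmogorov--Smirnov — is the same as the paper's, and your power-counting heuristics for the ``defect'' graph terms (repeated vertices, odd edge multiplicities, extra cycles, the $K$ vs.\ $n$ accounting for labels in or out of $C^*$) is exactly where the extension of \cite{Deshpande12} beyond $K=\Theta(\sqrt n)$, $\mu=\Theta(1)$ needs care. However, your Step 3 has a genuine gap. You propose to compute the limiting moments of $\theta_i^t$ ``directly'' by Wick-type pairings and then induct, with ``the inductive hypothesis feed[ing] the distribution of $\theta^{t-1}_{\ell\to i}$ into the definition of $f$.'' The problem is that $\theta^{t}_{i\to j} = \sum_\ell A_{\ell i} f(\theta^{t-1}_{\ell\to i},t-1)$ is \emph{not} a sum of conditionally independent terms: for distinct $\ell,\ell'$ the messages $\theta^{t-1}_{\ell\to i}$ and $\theta^{t-1}_{\ell'\to i}$ share edges of $A$ in their unrolled trees and are correlated, so no CLT applies to the update as written, and identifying the limiting moments of a polynomial of correlated near-Gaussians from raw graph combinatorics (matching Hermite structure to pairings) is a substantially heavier task than your sketch suggests.

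The paper closes this gap by introducing the auxiliary process $\xi^t$ built from \emph{fresh, independent} copies $A^0, A^1,\ldots$ of the matrix at each iteration (\prettyref{eq:defxi}). First, a pure graph-counting argument (\prettyref{lmm:momentmatching}, using precisely the $K^{n_1}n^{n_2}$ type bound you outline, refined by keeping track of the number $k$ of singly-covered edges inside $C^*$ to balance $\mu\to 0$ against $K/\sqrt n\to\infty$) shows that all finite moments of $\theta_i^t$ and $\xi_i^t$ agree up to $O(n^{-1/2})$. Only then does one compute the limit: with fresh $A^t$, the terms $A^t_{\ell i} f(\xi^t_{\ell\to i},t)$ \emph{are} conditionally independent given $\calF_t=\sigma(A^0,\ldots,A^{t-1})$, so the update is a genuine triangular array and the Lyapunov CLT plus the inductive hypothesis on the empirical averages of $f(\xi^t_{\ell\to i})^m$ (\prettyref{lmm:momentvariance}, \prettyref{lmm:momentcompute}) identify the limit as Gaussian with the state-evolution parameters. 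This decoupling step is load-bearing; without it, the induction in your Step 3 is not a proof.
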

We note that a version of the above lemma is proved in \cite{Deshpande12} by assuming $\mu=\Theta(1)$ and $K= \Theta( \sqrt{n} )$.
Let $f(x, t) = \sum_{i=0}^d q_i^t x^i$ with $|q_i^t|  \le C$ for a constant $C$.
Let $\{A^t, t \ge 1\}$ be i.i.d.\  matrices distributed as $A$ conditional on $C^\ast$
and let $A^0=A$. We now define a sequence of vectors $\{ \xi^t, t \ge 1\}$ with $\xi^t \in \reals^n$ given by
\begin{align}
\xi^{t+1}_{i \to j} & = \sum_{\ell \in [n] \backslash \ \{i, j \} } A^t_{\ell i} f( \xi^t_{\ell \to i }, t), \quad \forall j \neq i \in [n]  \label{eq:defxi}  \\
\xi^{t+1}_i & = \sum_{\ell \in [n] \backslash \{i \} } A^t_{\ell i} f( \xi^t_{\ell \to i }, t) \nonumber\\
\xi^0_{i \to j} &  = 0.
\end{align}
Note that in the definition of  $\xi^t$, fresh samples, $A^t,$ of $A$ are used at each iteration, and thus the moments of $\xi^t$ in
the asymptotic limit are easier to compute than those of $\theta^t$.     Use of the fresh samples $A^t$ does not make
the messages $(\xi^t_{i \to \ell} : i \in [n]\backslash\ell )$ independent for fixed $\ell \in [n]$ and fixed $t\geq 2$, because at $t=1$ the messages
sent by any one vertex to all other vertices are statistically dependent, so at $t=2$ the messages sent by all vertices are
statistically dependent.   However, we can take advantage of the fact that the contribution of each individual message is small
in the limit as $n\to \infty$.      Hence, we first prove that $\xi^t$ and $\theta^t$ have the
same moments of all orders as $n \to \infty,$ and then prove the lemma using the method of moments.

The first step is to represent $(\theta^t_{i \to j}, \theta^{t}_i)$ and $(\xi^t_{i \to j}, \xi^{t}_i)$ as sums over a family of finite rooted labeled trees
as shown by \cite[Lemma 3.3]{Deshpande12}. We next introduce this family in detail.
We shall consider rooted trees $T$ of the following form.   All edges are directed towards the root.
The set of vertices and the set of (directed) edges in  a tree $T$  are denoted by $V(T)$ and $E(T)$, respectively.
Each vertex has at most $d$ children.  The set of leaf vertices of $T$, denoted by $L(T),$ is the set of vertices with no children.  Every vertex in the tree has a {\em label} which includes the {\em type} of the vertex, where the
types are selected from $[n].$
The label of the root vertex consists of the type of the root vertex,  and for every non-root vertex the
label has two arguments,
where the first argument in the label is the type of the vertex (in $[n]$), and the second one is the {\em mark} (in $\{0, \ldots , d\}$).
For a vertex $v$ in $T$, let $\ell(v)$ denote is type, $r(v)$ its mark (if $v$ is not the root), and $|v|$ its distance from the
root in $T$. For clarity, we
restate the definition of family of rooted labeled trees introduced in \cite[Definition 3.2]{Deshpande12}.
\begin{definition}
Let $\calT^t$ denote the family of labeled trees $T$ with exactly $t$ generations satisfying the conditions:
\begin{enumerate}
\item The root of $T$ has degree 1.
\item Any path $(v_1, v_2, \ldots, v_k)$ in the tree is non-backtracking, \ie, the types $\ell(v_i), \ell(v_{i+1}), \ell(v_{i+2})$ are
distinct for all $i, k$.
\item For a vertex $u$ that is not the root or a leaf, the mark $r(u)$ is set to the number of children of $v$.
\item Note that $t= \max_{v \in L(T)} |v|$. All leaves $u$ with $|u| \le t-1$ have mark $0$.
\end{enumerate}
Let $\calT^t_{i \to j} \subset \calT^t$ be the subfamily satisfying the following additional conditions:
\begin{enumerate}
\item The type of the root is $i$.
\item The root has a single child with type distinct from $i$ and $j$.
\end{enumerate}
Similarly, let $\calT^t_i \subset \calT^t$ be the subfamily satisfying the following:
\begin{enumerate}
\item The type of the root is $i$.
\item The root has a single child with type distinct from $i$.
\end{enumerate}
\end{definition}
We point out that under the above definition,  a vertex of a tree in $\calT^t$
can have siblings of the same type and mark. Also two trees in $\calT^t$ are considered
to be the same if and only if the labels of all nodes are the same, with the understanding that
the order of the children of any given node matters.   In addition, the mark of a leaf $u$ with $|u|=t$
is not specified and can possibly take any value in $\{0, \ldots , d\}$.
The following lemma is proved by induction on $t$ and the proof can be found in  \cite[Lemma 3.3]{Deshpande12}.
\begin{lemma}\label{lmm:treerepresentation}
\begin{align*}
\theta^t_{i \to j} &= \sum_{T \in \calT^t_{i \to j} } A(T) \Gamma(T, \mathbf{q}, t) \theta(T), \\
\theta^t_i & = \sum_{T \in \calT^t_{i} } A(T) \Gamma(T, \mathbf{q}, t) \theta(T),
\end{align*}
where\footnote{Often the initial messages for message passing are taken, with some abuse of notation,
to have the form $\theta_{i\to j}^0=\theta_i^0$ for all $j$, and then only the $n$ variables $\theta^0_i$ need
to be specified.  In that case, the expression for $\theta(T)$ simplifies
to  $\theta(T) \triangleq \prod_{u \in L(T) } (\theta^0_{\ell(u)} )^{r(u)}. $
}
\begin{align*}
A(T) & \triangleq \prod_{u \to v \in E(T) } A_{\ell(u), \ell(v)}, \\
\Gamma(T, \mathbf{q}, t) & \triangleq \prod_{u \to v \in E(T) } q^{t-|u|}_{r(u)}, \\
\theta(T) &\triangleq \prod_{u \to v \in E(T) : u \in L(T) } (\theta^0_{\ell(u)\to \ell(v) } )^{r(u)}.
\end{align*}
Similarly,
\begin{align*}
\xi^t_{i \to j} &= \sum_{T \in \calT^t_{i \to j} } \bar{A}(T) \Gamma(T, \mathbf{q}, t) \theta(T), \\
\xi^t_i & = \sum_{T \in \calT^t_{i} } \bar{A}(T) \Gamma(T, \mathbf{q}, t) \theta(T),
\end{align*}
where
\begin{align*}
\bar{A}(T) & \triangleq \prod_{u \to v \in E(T) } A^{t-|u| }_{\ell(u), \ell(v)}.
\end{align*}
\end{lemma}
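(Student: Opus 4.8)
The plan is to prove the four identities together by induction on $t$, carrying out the work for $\theta^t_{i\to j}$ and reading off the rest. The belief identity \eqref{eq:theta_update_i} is not a separate induction: since $\theta^{t+1}_i$ is built from the very same messages $\theta^t_{\ell\to i}$, once the message identity holds at level $t$ the belief identity at level $t+1$ follows by relaxing only the requirement that the root's child avoid $j$, which turns $\calT^{t+1}_{i\to j}$ into $\calT^{t+1}_i$. The two $\xi$-identities have identical combinatorics, with the single extra point of tracking which fresh sample lands on which edge, handled at the end. For the base case, $\calT^0_{i\to j}=\calT^0_i=\emptyset$ because the root is required to have degree one, while $\theta^0_{i\to j}=\theta^0_i=0$, so the identities hold trivially; level $t=1$ is the one-edge check $\theta^1_{i\to j}=\sum_{\ell\neq i,j}A_{\ell i}f(0,0)$ against the sum of the weights of the single-edge trees in $\calT^1_{i\to j}$.

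\textbf{Inductive step.} Assuming the representation at level $t$, write $f(x,t)=\sum_{m=0}^d q^t_m x^m$ and substitute the level-$t$ formula for $\theta^t_{\ell\to i}$ into \eqref{eq:theta_update_ij}:
\[
\theta^{t+1}_{i\to j}=\sum_{\ell\neq i,j}A_{\ell i}\sum_{m=0}^d q^t_m\Bigl(\sum_{T'\in\calT^t_{\ell\to i}}A(T')\,\Gamma(T',\mathbf q,t)\,\theta(T')\Bigr)^{m}.
\]
Expanding the $m$-th power turns it into a sum over ordered tuples $(T'_1,\ldots,T'_m)\in(\calT^t_{\ell\to i})^m$. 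To each datum $(\ell,m,T'_1,\ldots,T'_m)$ I attach a new root of type $i$ above a new vertex of type $\ell$ with mark $m$, then graft the single-child subtrees of $T'_1,\ldots,T'_m$ under that vertex, identifying their roots with it. The first thing to verify is that this is a bijection onto $\calT^{t+1}_{i\to j}$: the tree so built has a degree-one root whose child has type $\ell\notin\{i,j\}$; non-backtracking is inherited because each $T'_p$ already forbids type $i$ at its depth-one vertex; siblings of equal type and mark are permitted on both sides; and the mark/leaf conventions transfer since every depth increases by exactly one, so a depth-$(\le t-1)$ leaf of $T'_p$ becomes a depth-$(\le t)$ leaf of $T$, precisely where marks are forced to $0$ in $\calT^{t+1}$. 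The second thing is multiplicativity of the three weights under this grafting: the new edge $\ell\to i$ contributes $A_{\ell i}$ to $A(T)$ and $q^{(t+1)-1}_m=q^t_m$ to $\Gamma(T,\mathbf q,t+1)$; an interior edge out of a vertex $u$ keeps its $A$-factor and has its $\Gamma$-factor pass from $q^{t-|u|_{T'}}_{r(u)}$ to $q^{(t+1)-|u|_{T}}_{r(u)}$, the same number because $|u|_T=|u|_{T'}+1$; and $\theta(T)=\prod_p\theta(T'_p)$, the empty product $1$ when $m=0$. Summing over $(\ell,m,T'_1,\ldots,T'_m)$ and invoking the bijection produces $\sum_{T\in\calT^{t+1}_{i\to j}}A(T)\,\Gamma(T,\mathbf q,t+1)\,\theta(T)$, which closes the induction.

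\textbf{The $\xi$-identities and the main obstacle.} For $\xi$, the recursion \eqref{eq:defxi} uses the fresh sample $A^t$ at step $t+1$, so the new top edge contributes $A^t_{\ell i}=A^{(t+1)-1}_{\ell i}$, and an interior edge out of $u$ that carried $A^{t-|u|_{T'}}$ in $\bar A(T')$ now carries $A^{(t+1)-|u|_{T}}$ since $|u|_T=|u|_{T'}+1$; hence $\bar A(T)=\prod_{u\to v\in E(T)}A^{(t+1)-|u|}_{\ell(u)\ell(v)}$, while $\Gamma$ and $\theta$ are untouched. I expect no step to be deep; the entire difficulty sits in the bookkeeping of the inductive step --- making the grafting bijection precise despite the siblings-of-equal-type and prematurely-terminating-branch subtleties, and checking that all three (for $\theta$) or four (for $\xi$) product functionals are simultaneously multiplicative with the $\Gamma$-superscripts and $\bar A$-superscripts shifting in step with the depths. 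This is exactly the content of \cite[Lemma 3.3]{Deshpande12}, which I would follow closely.
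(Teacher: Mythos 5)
Your proof is correct and follows exactly the approach the paper prescribes: induction on $t$ via the grafting bijection of \cite[Lemma 3.3]{Deshpande12}, which the paper simply cites rather than reproving. You correctly verify that the $\Gamma$-superscripts and $\bar A$-superscripts shift in lockstep with depths under the grafting, that non-backtracking and the mark-zero conditions on shallow leaves are inherited, and that the constant-term ($m=0$) case of the expansion is absorbed as a leaf at depth one; this is the full content of the inductive step.
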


Since the initial messages are zero, $f(\theta_{i\to j}^0 , 0 ) = q_0^0.$  {\em Thus,
for notational convenience in what follows, we can assume without loss of generality that
$f(x,0) \equiv q_0^0$, i.e., $f(x,0)$ is a degree zero polynomial.}     With this assumption, it follows
that for a labeled tree $T \in {\cal T}^t,$   $\Gamma(T, \mathbf{q}, t)=0$ unless the mark
of every leaf of  $T$ is zero.     If the mark of every leaf is zero, then $\theta(T)=1,$ because
in this case $\theta(T)$ is a product of terms of the form $0^0,$  which are all one, by convention.
Therefore,  $\Gamma(T, \mathbf{q}, t)\theta(T)=\Gamma(T, \mathbf{q}, t)$ for all $T \in {\cal T}_t.$
Consequently, the factor $\theta(T)$  can be dropped from the representations of
$\theta_{i\to j}^t,$  $\theta_i^t,$  $\xi_{i\to j}^t,$  and $\xi_i^t$ given in \prettyref {lmm:treerepresentation}.
Applying \prettyref{lmm:treerepresentation}, we can prove that all finite moments of $\theta_i^t$
and $\xi_i^t$ are asymptotically the same.

\begin{lemma}\label{lmm:momentmatching}
For any $ t \ge 1$, there exists a constant $c$ independent of $n$ and dependent on $m, t, d, C$ such that
for any $i \in [n]$:
\begin{align*}
\big| \expect{(\theta_i^t)^m } - \expect{(\xi_i^t)^m } \big| \le c n^{-1/2}.
\end{align*}
\end{lemma}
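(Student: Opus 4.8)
The plan is to expand both $(\theta_i^t)^m$ and $(\xi_i^t)^m$ using the tree representation of \prettyref{lmm:treerepresentation} (with the factor $\theta(T)$ already dropped, as noted above), and to compare the two expectations term by term. Expanding the $m$-th power produces a sum over $m$-tuples $(T_1, \ldots, T_m) \in (\calT^t_i)^m$ of
\[
\expect{\textstyle\prod_{s=1}^m A(T_s) } \,\prod_{s=1}^m \Gamma(T_s, \mathbf q, t)
\quad\text{versus}\quad
\expect{\textstyle\prod_{s=1}^m \bar A(T_s) } \,\prod_{s=1}^m \Gamma(T_s, \mathbf q, t),
\]
where the $\Gamma$ factors are identical on both sides and bounded by $C^{|E(T_s)|}$. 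The difference therefore reduces to comparing the mixed moments $\expect{\prod_s A(T_s)}$ and $\expect{\prod_s \bar A(T_s)}$. Recall $A(T_s) = \prod_{u\to v \in E(T_s)} A_{\ell(u),\ell(v)}$ while $\bar A(T_s)$ uses the layer-indexed independent copies $A^{t-|u|}$. Since the entries of $A$ (and each $A^r$) are independent up to symmetry with mean $0$ or $\mu/\sqrt n$ and variance $1/n$, each such expectation is a product over ``coincidence classes'' of edges: edges of the overlaid forest that get identified (same unordered type-pair, and for $\bar A$ also the same layer) must occur with multiplicity at least $2$ for a nonzero contribution beyond the mean term, and each distinct type-pair contributes a factor $O(1/n)$ per pair of identified edge-slots, times powers of $\mu/\sqrt n$ for unmatched slots.

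The key combinatorial step is the standard one from \cite[Lemma 3.4]{Deshpande12} (adapted to the scaling $\mu \to 0$, $K=o(n)$): organize the sum by the ``shape'' of the overlaid forest $\bigcup_s T_s$ — i.e., the quotient multigraph obtained by identifying vertices with the same type — and count how many ways the $n$ types can be assigned. A shape with $v$ distinct vertices and $e$ distinct edges (counting an edge once even if hit with higher multiplicity) contributes at most $n^v$ choices of types, a factor $(1/\sqrt n)^{(\text{total edge-slots})}$ from the variances/means — using $|\mu/\sqrt n| \le 1/\sqrt n$ since $\mu \to 0$ — and $O(1)$ from the bounded $\Gamma$'s and bounded number of tree shapes (as $t,d,m$ are fixed). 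The crucial point, exactly as in \cite{Deshpande12}, is that terms surviving to leading order are precisely those where the overlaid forest is a disjoint union of ``doubled trees'' (every edge-slot matched in a pair), for which the $A$-moments and $\bar A$-moments \emph{agree}: matched edges within the same original tree $T_s$ are automatically in the same layer, and cross-tree matches at leading order force $T_s = T_{s'}$ on the matched portion, again putting the identified edges in equal layers. All remaining configurations — those where $A$ and $\bar A$ genuinely differ because a layer mismatch breaks a would-be pairing, or where an odd edge-slot multiplicity appears — are lower order, contributing $O(n^{v}\, n^{-(e+1)}) = O(n^{-1/2})$ after one checks that such a configuration loses at least one full power of $\sqrt n$ relative to the doubled-tree bound $n^v n^{-e} = O(1)$ (here one uses that in a doubled forest $v - e$ equals the number of connected components, and any defect costs at least a half-integer in the exponent; the worst surviving discrepancy is exactly $n^{-1/2}$).

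The main obstacle is the bookkeeping in the last step: verifying that the $\mu\to 0$, $\Omega(\sqrt n) \le K \le o(n)$ regime does not create new leading-order terms absent in the $\mu=\Theta(1)$, $K=\Theta(\sqrt n)$ analysis of \cite{Deshpande12}. Concretely, one must confirm that replacing the bound $|A_{ij}| = O_P(1)$ used there by $\mu/\sqrt n \le 1/\sqrt n$ only helps — mean-edges are now even cheaper — and that the factor $\lambda = \mu^2 K^2/n$ staying bounded means any appearance of the elevated-mean entries comes packaged with enough factors of $K/\sqrt n \cdot \mu \asymp \sqrt{\lambda}/\sqrt{K}$ or $\mu = \sqrt{\lambda n}/K$ to stay controlled; since $f$ and hence $\mathbf q$ have $n$-independent coefficients and degree, no combinatorial explosion occurs. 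Once the shape-counting bound is in place, summing the $O(n^{-1/2})$ bound over the fixed finite number of non-doubled shapes yields the claimed $c\,n^{-1/2}$ with $c = c(m,t,d,C)$.
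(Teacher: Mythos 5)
Your overall architecture (tree representation, reduce to comparing $\expect{\prod_\ell A(T_\ell)}$ with $\expect{\prod_\ell \bar A(T_\ell)}$, partition by shape, show doubled shapes agree via level-matching and defective shapes are $O(n^{-1/2})$) is the right one, and it is the one the paper follows. But there is a real gap in the counting step, and it is precisely the place where the extension from $K=\Theta(\sqrt n)$, $\mu=\Theta(1)$ to $\Omega(\sqrt n)\le K\le o(n)$, $\mu\to 0$ is nontrivial. You propose to count types uniformly by $n^v$ (or $n^{v-1}$ with the root pinned) and to charge every edge slot $1/\sqrt n$, bounding the mean $\mu/\sqrt n$ crudely by $1/\sqrt n$ because $\mu\to 0$. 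This bound fails in the extended regime. A tree-shaped $G$ with $k$ singly-covered edges in $E(G_1)$ (both endpoints in $C^*$) has $e$ distinct edges and $\alpha = 2e-k$ edge slots, so your bound reads $n^{e}\cdot n^{-\alpha/2} = n^{k/2}$, which diverges for $k\ge 1$. In fact such configurations are not ``lower order'': a single tree with one singly-covered $E(G_1)$ edge already gives $\expect{\theta_i^1}=K\mu/\sqrt n=\sqrt\lambda=\Theta(1)$, and your scheme would bound it by $n\cdot \mu/\sqrt n = \mu\sqrt n\to\infty$ when $K=\omega(\sqrt n)$.

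The missing idea is to distinguish the type-count for vertices with types in $C^*$ (at most $K$ choices) from those outside $C^*$ (at most $n$ choices). The paper's proof partitions each equivalence class according to $\alpha$ and $k$, writes $|S|\le K^{n_1}n^{n_2}$ where $n_1=|V_1|$ ($C^*$-type vertices) and $n_2=|V_2|$, derives the connectivity constraints $n_1+n_2 \le k+\frac{\alpha-k-1}{2}$ and $n_2\le\frac{\alpha-k-1}{2}$ for the $R_1$ case (similarly for $R_2$), and then multiplies by the per-shape moment bound $c\mu^k n^{-\alpha/2}$ to arrive at $(\mu K/\sqrt n)^k\,n^{-1/2} = \lambda^{k/2}\,n^{-1/2}$. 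It is the pairing of $\mu^k$ with $K^k$ that keeps things finite; your proposal has neither the $K^{n_1}$ count nor the $\mu^k$ factor retained, so it produces a divergent estimate. Also, your characterization ``terms surviving to leading order are precisely disjoint unions of doubled trees'' is too strong even for Deshpande--Montanari: singly-covered $E(G_1)$ edges are part of the leading order (they are the entire source of the $\sqrt\lambda$ factors in the state evolution), and the reason they do not spoil the lemma is the level-matching argument in the tree case $R_3$, which shows the $A$- and $\bar A$-moments coincide exactly there, not that they vanish. Finally, $G$ is never a disconnected forest in this setting because all $m$ trees share the root type $i$.
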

\begin{proof}
As explained just before the lemma, the assumption that $f(x,0) \equiv q_0^0$ implies that
the factor $\theta(T)$ can be dropped in the representations given in \prettyref{lmm:treerepresentation}.
Therefore, it follows from \prettyref{lmm:treerepresentation} that for $t\geq 1,$
\begin{align*}
\expect{(\theta_i^t)^m } = \sum_{T_1, \ldots, T_m \in  \calT^t_{i}}  \prod_{\ell=1}^m \Gamma(T_\ell, \mathbf{q}, t) \expect{\prod_{\ell=1}^m A(T_\ell ) }, \\
\expect{(\xi_i^t)^m } = \sum_{T_1, \ldots, T_m \in  \calT^t_{i}}  \prod_{\ell=1}^m \Gamma(T_\ell, \mathbf{q}, t) \expect{\prod_{\ell=1}^m \bar{A}(T_\ell ) }
\end{align*}
Because the coefficients in the polynomial are bounded by $C$ and there are $m$ trees with each tree containing at most
$1+d+ \cdots + d^{t-1} \leq  (d+1)^t$ edges,
$ | \prod_{\ell=1}^m  \Gamma(T_\ell, \mathbf{q}, t) | \le C^{m (d+1)^t }$. Therefore, it suffices to show
\begin{align*}
 \sum_{T_1, \ldots, T_m \in  \calT^t_{i}}  \Bigg|  \expect{\prod_{\ell=1}^m A(T_\ell ) } -  \expect{\prod_{\ell=1}^m \bar{A}(T_\ell ) } \Bigg| \le cn^{-1/2}.
\end{align*}
In the following, let $c$ denote a constant only depending on $m, t, d$ and its value may change line by line.
Let $\phi(T)_{rs}$ denote the number of occurrences of edges $(u \to v)$ in the tree $T$ with types $\ell(u),\ell(v)=\{r, s\}$.
Let $G$ denote the undirected graph obtained by identifying the vertices of the same type in the tuple of trees $T_1, \ldots, T_m$ and
removing the edge directions. Let $E(G)$ denote the edge set of $G$. Then an edge $(r,s)$ is  in $E(G)$ if and only if $\sum_{\ell=1}^m \phi(T_\ell)_{rs} \ge 1$,
 \ie, the number of times covered is at least one. Let $G_1$ denote the restriction of $G$ to the vertices in $C^\ast$
and $G_2$ the restriction of $G$ to the vertices in $[n] \backslash C^\ast$. Let $E(G_1)$ and $E(G_2)$ denote the edge set of $G_1$ and $G_2$, respectively.
Let $E_J$ denote the set of edges in $G$ with one endpoint in $G_1$ and the other end point in $G_2$.
We partition set $\{ (T_1, \ldots, T_m) : T_\ell \in  \calT^t_{i} \}$ as a union of four disjoint sets $  Q \cup R_1 \cup R_2 \cup R_3$, where
\begin{enumerate}
\item $Q$ consists of $m$-tuples of trees $(T_1, \ldots, T_m)$ such that there exists an edge  $(r, s)$ in $E(G_2) \cup E_J$ which is covered exactly once.
\item $R_1$ consists of $m$-tuples of trees $(T_1, \ldots, T_m )$ such that all edges in $E(G_2) \cup E_J$ are covered at least twice and
at least one of them  is covered at least $3$ times.
\item $R_2$ consists of $m$-tuples of trees $(T_1, \ldots, T_m )$ such that each edge in $E(G_2) \cup E_J$ is covered exactly twice and the graph $G$ contains a cycle.
\item $R_3$ consists of $m$-tuples of trees $(T_1, \ldots, T_m)$ such that  each edge in $E(G_2) \cup E_J$ is covered exactly twice and the graph $G$ is a tree.
\end{enumerate}
Fix any $(T_1, \ldots, T_m) \in Q$ and let $(r,s)$ be an edge in $E(G_2) \cup E(J)$ which is covered exactly once. Since $\expect{A_{rs}} =0$ and $A_{rs}$ appears
in the product $\prod_{\ell=1}^m A(T_\ell )$ once,  it follows that
$\expect{\prod_{\ell=1}^m A(T_\ell ) } =0$. Similarly, $\expect{\prod_{\ell=1}^m \bar{A}(T_\ell ) } =0$. Therefore, it is sufficient to show that for $j=1,2,3$,
\begin{align*}
 \sum_{(T_1, \ldots, T_m) \in  R_j }  \Bigg|  \expect{\prod_{\ell=1}^m A(T_\ell ) } -  \expect{\prod_{\ell=1}^m \bar{A}(T_\ell ) } \Bigg| \le cn^{-1/2}.
\end{align*}
First consider $R_1$.   Further,  divide $R_1$ according to the total number of edges in $T_1, \ldots, T_m$
and the number of edges in $E(G_1)$ which are covered exactly once. In particular,
for $\alpha=1, \ldots, m(d+1)^t$ and $k=0, 1, \ldots, \alpha$,  let $R_{1,\alpha,k}$ denote the subset of $R_1$ consisting of $m$-tuples of trees $T_1, \ldots, T_m $ such that
there are $\alpha$ edges in $T_1, \ldots, T_m$  and  there are $k$ edges in $E(G_1)$ which are covered exactly once. It suffices to show that
\begin{align}
 \sum_{(T_1, \ldots, T_m) \in  R_{1,\alpha,k} }  \Bigg|  \expect{\prod_{\ell=1}^m A(T_\ell ) } -  \expect{\prod_{\ell=1}^m \bar{A}(T_\ell ) } \Bigg|  \le cn^{-1/2}.
 \label{eq:boundR1}
\end{align}

Fix $\alpha, k$ and an $m$-tuple of trees $(T_1, \ldots, T_m) \in R_{1, \alpha, k}$. Then
\begin{align}
\Bigg| \expect{ \prod_{\ell=1}^m  A(T_\ell) } \Bigg| &= \Bigg| \expect{ \prod_{j < j' } (A_{j j'} ) ^{\sum_{\ell=1}^m \phi (T_\ell)_{jj'} } } \Bigg| = \prod_{j<j'}
\Bigg| \expect{ (A_{j j'}) ^{\sum_{\ell=1}^m \phi (T_\ell)_{jj'} } } \Bigg|  \nonumber   \\
& = \left( \frac{\mu}{\sqrt{n} } \right)^{k} \prod_{j<j':  \sum_{\ell=1}^m \phi (T_\ell)_{jj'}  \ge 2 }  \Bigg|\expect{ (A_{j j'}  )^{\sum_{\ell=1}^m \phi (T_\ell)_{jj'} } } \Bigg| \nonumber  \\
& \le  \left( \frac{\mu}{\sqrt{n} } \right)^{k}  \prod_{j<j':  \sum_{\ell=1}^m \phi (T_\ell)_{jj'}  \ge 2 } \expect{ | A_{j j'} | ^{\sum_{\ell=1}^m \phi (T_\ell)_{jj'} } }  \nonumber  \\
& \le c \left( \frac{\mu}{\sqrt{n} } \right)^{k} \prod_{j<j':  \sum_{\ell=1}^m \phi (T_\ell)_{jj'}  \ge 2 }   \left( \frac{1}{\sqrt{n} } \right)^{\sum_{\ell=1}^m \phi (T_\ell)_{jj'} }  \nonumber  \\
&= c \mu^k n^{-\alpha /2},  \label{eq.sincle_tree_bnd}
\end{align}
where the last inequality follows because for $1\leq p \leq m(d+1)^t$, if  $Z$ is a standard Gaussian random variable then
$\expect{  \bigg| \frac{Z}{\sqrt{n}}\bigg|^p } \leq cn^{-p/2}$  and $\expect{  \bigg| \frac{Z+ \mu}{\sqrt{n}}\bigg|^p } \leq cn^{-p/2}$
where $c=\expect{|Z+\mu_{\max} |^{ m(d+1)^t}},$  and $\mu_{\max}$ is an upper bound on $\mu$ for all $n$, which is finite by
the assumptions.\footnote{This is where the assumption $K=\Omega(\sqrt{n})$ is used because $\frac{K^2\mu^2}{n}$ is assumed to be a constant $\lambda$.}

We consider breaking $R_{1,\alpha, k}$ down into a large number of smaller sets.   While large,
the number of these smaller sets depends on $m,t,d$, but not on $n.$    One way to describe
these sets is that they are equivalence classes for the following equivalence relation over  $R_{1,\alpha, k}:$
Two $m$-tuples in $R_{1,\alpha, k}$ are equivalent if there is a
permutation of  the set of types $[n]$ such that $i$ maps to $i$,  $C^*$ maps $C^*$,
and the second $m$-tuple is obtained by applying the
permutation to the types of the vertices of the first $m$-tuple.
In particular the marks of the two $m$-tuples must be the same.

 Another way to think about these equivalence classes is the following.   Given an $m$-tuple
$(T_1, \ldots , T_m)$ in $R_{1,\alpha, k}$,   form the graph $G$ as described above.   Let the type
of each vertex in $G$ be the common type of the vertices it represents in the $m$-tuple.
For convenience, refer to the vertex of $G$ with type $i$ as vertex $i.$
Let $V_1$ be the set of vertices in $G$ with types in $C^*\backslash \{i\}$ and $V_2$  be
the set of vertices in $G$ with types in $([n]-C^*)\backslash \{i\}.$
Record $V_1$ and $V_2,$
and then erase the types of the vertices in $G\backslash \{i\}.$
Then the class of $m$-tuples equivalent to  $(T_1, \ldots , T_m)$ is the set
of $m$-tuples in $R_{1,\alpha, k}$ that can be obtained by assigning distinct types
to the vertices of $G$ (which are inherited by the corresponding vertices
in the $m$-tuple of trees) consistent with the specified vertex of type
$i$ and sets $V_1$ and $V_2.$    Note that the marks (as opposed to the types)
of all $m$-tuples in the equivalence class are the same as the
marks on the representative $m$-tuple.

The number of equivalence classes
is bounded by a function of $m,t,d$ alone, because
the total number of vertices of an $m$-tuple
$(T_1, \ldots , T_m)$ is bounded independently of $n$, therefore so are
the number of ways to partition these vertices to be identified with each other to
form vertices in a graph $G,$   along with binary designations
on the subsets of the partitions of whether the types of the vertices in
the subset are in $C^*$ or not (i.e. determining $V_1$ and $V_2$)
and the number of ways to assign
marks to the vertices of the trees.
Not all partitions with binary designations on the partition
subsets correspond to valid equivalence classes because valid
partitions must respect the non-backtracking rule and they
should have all the root vertices in the same partition set.
Also, whether the type of the subset of the partition containing
the root vertices corresponds to a type in $C^*$ or not is already
determined by $i.$    The purpose here is only to verify that the
number of such equivalence classes is bounded above by
a function of  $m,t,d,$  independently of $n.$

Hence, fix such an  equivalence class $S \subset R_{1,\alpha, k}.$
It follows from \eqref{eq.sincle_tree_bnd}
\begin{align}
\sum_{(T_1, \ldots, T_m) \in  S }  \Bigg| \expect{\prod_{\ell=1}^m A(T_\ell ) } \Bigg| \le   c \mu^k  n^{-\alpha /2} | S  |. \label{eq:countingS}
\end{align}
Note that $|S| \leq K^{n_1}n^{n_2},$   where $n_i=|V_i|$ for $i=1,2,$ because there are at most $K$ choices of
type for each vertex in $V_1$ and fewer than $n$ choices of type for each vertex in $V_2.$
The graph $G$ is connected (because all the trees have a root of type $i$),
so $n_1+n_2$ (the number of vertices of $G$ minus one) is less than or equal to the number of edges in $G.$
The number of edges in $G$ is at most $k + \frac{\alpha - k - 1}{2}$ because
there are $k$ edges in $G$ covered once, and the rest are covered at least twice, with one edge covered at least three times.
So $n_1+n_2 \leq  k + \frac{\alpha - k - 1}{2}.$
Also, since $k$ of the edges in $G$ have both endpoints in $C^*$,  and the vertices of $V_2$ have types in  $[n]-C^*,$  there are at most $\frac{\alpha - k - 1}{2}$ edges
in $G$ with at least one endpoint in $V_2.$  Therefore, since $G$ is connected, $n_2 \leq \frac{\alpha - k - 1}{2}$;
otherwise, there must exist a node in $V_2$ which has no neighbors in $G$, contradicting the connectedness of $G$.
The bound $K^{n_1}n^{n_2}$ is maximized subject to $n_1+n_2 \leq  k + \frac{\alpha - k - 1}{2}$ and
$n_2 \leq \frac{\alpha - k - 1}{2}$ by letting equality hold in both constraints,  yielding  $|S| \leq  (K)^kn^{ \frac{\alpha - k - 1}{2} }.$
Combining with \eqref{eq:countingS} shows that
\begin{align}
\sum_{(T_1, \ldots, T_m) \in  S }  \Bigg| \expect{\prod_{\ell=1}^m A(T_\ell ) } \Bigg| \le   c \mu^k  n^{-\alpha /2}  K^kn^{ \frac{\alpha - k - 1}{2} }
=   \left( \frac{\mu K}{\sqrt{n}}\right)^k n^{-1/2}   \leq  c n^{-1/2},
\end{align}
where we've used the fact that $\frac{\mu K}{\sqrt{n}}$ is bounded independently of $n.$
In a similar way, it can be shown that
 \begin{align*}
 \sum_{(T_1, \ldots, T_m) \in  S}  \Bigg| \expect{\prod_{\ell=1}^m \bar{A}(T_\ell ) } \Bigg|\le c n^{-1/2}
 \end{align*}
 and thus
\begin{align}
\sum_{(T_1, \ldots, T_m) \in  S }  \Bigg|  \expect{\prod_{\ell=1}^m A(T_\ell ) } -  \expect{\prod_{\ell=1}^m \bar{A}(T_\ell ) } \Bigg|  \le cn^{-1/2}.\end{align}
Since the number of equivalence classes $S$ does not depend on $n,$  \eqref{eq:boundR1} follows.

 Next consider $R_2$. The previous argument carries over with a minor adjustment.
 In particular, define $R_{2,\alpha,k}$ accordingly as $R_{1,\alpha,k}$ and then consider
 an equivalence class $S \subset R_{2,\alpha,k}$  corresponding to some representative
 $m$-tuple in $R_{2,\alpha,k}.$  Let $G$ and the partition of its vertices into $\{i\}$, $V_1,$ and $V_2$
 be determined by the $m$-tuple as before.
The number of edges in $G$ is at most $k + \frac{\alpha - k}{2}$ because
there are $k$ edges in $G$ covered once, and the rest are covered at least twice.
Since $G$ has $n_1+n_2+1$ vertices, is connected, and has a cycle, $n_1+n_2 $ is less than or equal to the number
of edges of $G$ minus one, so $n_1+n_2 \leq k + \frac{\alpha - k-2}{2}.$
Also, since $k$ of the edges in $G$ have both endpoints with types in $C^*$,  and $V_2$ has types in $[n]-C^*,$  there are at most $\frac{\alpha - k }{2}$ edges
in $G$ with at least one endpoint in $V_2.$  Therefore, since $G$ is connected, $n_2 \leq \frac{\alpha - k}{2}$.
The bound $K^{n_1}n^{n_2}$ is maximized subject to these constraints
 by letting equality hold in both constraints,  yielding
 $|S| \leq  K^{k-1}n^{ \frac{\alpha - k}{2} }.$   So
$|S| \mu^k  n^{ -\alpha/2 } \leq
 \left( \frac{\mu K}{\sqrt{n}}\right)^k/K \leq  c /K \leq  cn^{-1/2},$
 and the reminder of the proof for bounding the contribution of $R_2$ is the
 same as for $R_1$ above.

Finally, consider $R_3$.   It suffices to establish the following claim.
The claim is that for any $m$-tuple such that $G$ has no cycles,
if two directed edges $(a \to b)$ and $(c\to d)$ map to the same edge in $G$, then they
are at the same level in their respective trees (their trees might be the same).   Indeed,
if the claim is true, then  for any $m$-tuple $(T_1, \ldots , T_m)$ in $R_3$ and any
 pair $\{r,s\}\subset  [n] $, $A_{rs}^t$ appears in $\prod_{\ell=1}^m \bar{A}(T_{\ell})$
 for at most one value of $t,$ so that
$ \expect{\prod_{\ell=1}^m A(T_\ell ) } =  \expect{\prod_{\ell=1}^m \bar{A}(T_\ell ) }.$

We now prove the claim.   Let $\{r,s\}$ denote the edge in $G$ covered by both $(a \to b)$
and $(c\to d)$, i.e.  $\{\ell(a), \ell(b) \} = \{\ell(c), \ell(d) \} = \{r, s\}.$     First consider
the case that $\ell(b)=\ell(d).$
Let $u_1, \ldots , u_k$ denote the directed path in the tree containing $b$ that goes from $b$ to the
root of that tree, so $b=u_1$ and $u_k$ is the root of the tree.   Since there are no cycles in $G$,
and hence no cycles in the set of edges $\{ \{\ell(u_1),\ell(u_2)\}, \ldots  , \{\ell(u_{k-1}),\ell(u_k)\} \},$
(viewed as a simple set, i.e. with duplications removed)  it follows from the non-backtracking property
that $\ell(u_1), \ldots , \ell(u_k)$ are distinct vertices in $G.$     That is,
$(\ell(u_1), \ldots , \ell(u_k))$ is a simple path in $G$.   Similarly, let $v_1, \ldots , v_{k'}$ denote
the path in the tree containing $d$ that goes from $d$ to the root of that tree, so $d=v_1$
and $v_{k'}$ is the root of that tree.
As for the first path,  $(\ell(v_1), \ldots , \ell(v_{k'}))$ is also a simple path in $G.$
Since the roots of all $m$ trees have the same type, $\ell(u_k)$ and $\ell(v_{k'})$  are the
same vertex in $G.$    Therefore,
$( \ell(u_1),  \ldots  , \ell(u_{k}), \ell(v_{k'-1}), \ldots  , \ell(v_1)  )$ is a closed walk
in $G$ that is the concatenation of two simple paths.  Since $G$ has no cycles those
two paths must be reverses of each other.   That is,  $k=k'$ and $\ell(u_j)=\ell(v_j)$ for
all $j$, and hence $(a \to b)$ and $(c\to d)$ are at the same level in their trees.

Consider the remaining case, namely, that $\ell(b)=\ell(c).$    Let $u_1, \ldots , u_k$
be defined as before, and let $v_1, \ldots , v_{k'}$ denote the path in the tree
containing $c$ that goes from $c$ to the root of that tree, so $c=v_1$, $d=v_2$,  and $v_{k'}$
is the root of that tree.   Arguing as before yields that $k=k'$ and
 $\ell(u_j)=\ell(v_j)$ for $1\leq j \leq k.$
Note that $k'\geq 2$ and so $k\geq 2$
 and $\ell(u_2)=\ell(v_2)=\ell(d)=\ell(a).$   Thus, the types along the directed path $a \to u_1 \to u_2$
 within one of the trees violates the non-backtracking property, so the case
 $\ell(b)=\ell(c)$  cannot occur.    The claim is proved.   This completes the
 proof of Lemma 3.  \end{proof}

The second step is to compute the moments of $\xi^t$ in the asymptotic limit $n\to \infty$. We need the following lemma to ensure
that all moments of $\xi^t$ are bounded by a constant independent of $n$.
\begin{lemma}\label{lmm:momentbound}
For any $ t \ge 1$, there exists a constant $c$ independent of $n$ and dependent on $m, t, d, C$ such that
for any $i, j \in [n]$
\begin{align*}
| \expect{(\xi_{i \to j}^t)^m } | \le c, \quad | \expect{(\xi_{i}^t)^m } | \le c.
\end{align*}
\end{lemma}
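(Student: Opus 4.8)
The plan is to mirror the proof of \prettyref{lmm:momentmatching}, except that instead of bounding the \emph{difference} between the moments of $\theta^t$ and $\xi^t$ I would bound the magnitude of the moments of $\xi^t$ directly. First I would invoke \prettyref{lmm:treerepresentation}, together with the reduction explained before \prettyref{lmm:momentmatching} (which lets us drop the factor $\theta(T)$ since $f(x,0)\equiv q_0^0$), to write for $t\ge 1$
\[
\expect{(\xi_i^t)^m} = \sum_{T_1,\ldots,T_m\in\calT^t_i}\Big(\prod_{\ell=1}^m\Gamma(T_\ell,\mathbf{q},t)\Big)\,\expect{\prod_{\ell=1}^m\bar A(T_\ell)},
\]
and likewise for $\xi_{i\to j}^t$ with $\calT^t_{i\to j}$. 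Each tree in $\calT^t$ has at most $(d+1)^t$ edges and the coefficients $q_i^t$ are bounded by $C$, so $|\prod_\ell\Gamma(T_\ell,\mathbf{q},t)|\le C^{m(d+1)^t}$; hence it suffices to bound $\sum_{(T_1,\ldots,T_m)}|\expect{\prod_\ell\bar A(T_\ell)}|$ by a constant depending only on $m,t,d$.

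Next I would reuse the partition of the set of $m$-tuples into $Q\cup R_1\cup R_2\cup R_3$ from the proof of \prettyref{lmm:momentmatching}, defined via the aggregated graph $G$ obtained by identifying vertices of the same type across $T_1,\ldots,T_m$. On $Q$ some edge of $E(G_2)\cup E_J$ is covered exactly once, so $\expect{\prod_\ell\bar A(T_\ell)}=0$. On $R_1$ and $R_2$ the proof of \prettyref{lmm:momentmatching} already shows $\sum_{(T_1,\ldots,T_m)\in R_j}|\expect{\prod_\ell\bar A(T_\ell)}|\le cn^{-1/2}$ (this is precisely the estimate derived there for the $\bar A$-version within each equivalence class, summed over the $O(1)$ classes), so these contributions are $o(1)$ and in particular bounded. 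The only genuinely new estimate is for $R_3$, the case in which $G$ is a tree.

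For $R_3$ I would again split into the finitely many equivalence classes $S$ (their number depending only on $m,t,d$), each of size $|S|\le K^{n_1}n^{n_2}$, where $n_1=|V_1|$ and $n_2=|V_2|$ count the non-root vertices of $G$ with type inside $C^*$ and outside $C^*$ respectively. Writing $\alpha$ for the total number of tree edges in the $m$-tuple and $k$ for the number of $E(G_1)$-edges covered exactly once, the single-$m$-tuple bound \eqref{eq.sincle_tree_bnd} gives $|\expect{\prod_\ell\bar A(T_\ell)}|\le c\mu^k n^{-\alpha/2}$ (extra powers of $\mu$ from $C^*$-edges split across distinct levels are harmless, since $\mu$ is bounded and the number of edges is bounded independently of $n$). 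Since $G$ is a connected tree with its root outside $V_2$, counting edges yields $n_1+n_2\le(\alpha+k)/2$ and $n_2\le(\alpha-k)/2$; as $K\le n$, the quantity $K^{n_1}n^{n_2}$ is maximized at $n_2=(\alpha-k)/2$, $n_1=k$, so $|S|\le K^k n^{(\alpha-k)/2}$ and the contribution of $S$ is at most $c\,\mu^k n^{-\alpha/2}K^k n^{(\alpha-k)/2}=c\,(\mu K/\sqrt n)^k\le c$, using that $\mu K/\sqrt n$ is bounded. Summing over the finitely many classes and over $Q\cup R_1\cup R_2\cup R_3$, and restoring the factor $C^{m(d+1)^t}$, gives the claim for $\xi_i^t$; the identical argument applies to $\xi_{i\to j}^t$ with $\calT^t_{i\to j}$ in place of $\calT^t_i$.

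The step requiring the most care is the $R_3$ estimate. In \prettyref{lmm:momentmatching} the tree case contributed \emph{zero} to the difference $\expect{\prod_\ell A(T_\ell)}-\expect{\prod_\ell\bar A(T_\ell)}$, so no size bound was needed there; here one must instead show that it contributes only $O(1)$ to the moment itself, which works only because the tree structure of $G$ pins down the number of distinct vertex types tightly enough that the surplus of $K^{n_1}$ type-assignments is exactly absorbed by the $\mu^k n^{-\alpha/2}$ decay through the bounded quantity $(\mu K/\sqrt n)^k$.
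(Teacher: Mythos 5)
Your proposal matches the paper's proof essentially step for step: the same reduction via \prettyref{lmm:treerepresentation}, the same $Q\cup R_1\cup R_2\cup R_3$ partition with $Q$ vanishing and $R_1\cup R_2$ being $O(n^{-1/2})$, and for $R_3$ the identical counting argument ($n_1+n_2 \le (\alpha+k)/2$, $n_2\le(\alpha-k)/2$, hence $|S| \le K^k n^{(\alpha-k)/2}$) yielding $(\mu K/\sqrt{n})^k = O(1)$ per equivalence class. Your parenthetical remark about extra $\mu$-factors from $C^*$-edges split across levels in $\bar A$ is a genuine subtlety the paper's phrasing glosses over; you resolve it correctly by observing $\mu = O(1)$ under $K = \Omega(\sqrt n)$.
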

\begin{proof}
We prove the claim for $\xi_i^t$; the claim for $\xi_{i\to j}^t$ follows by the similar argument. Since $\xi^0_i=\theta^0_i=0$ for all $i \in [n]$, it follows from \prettyref{lmm:treerepresentation} that
\begin{align*}
\expect{(\xi_i^t)^m } = \sum_{T_1, \ldots, T_m \in  \calT^t_{i}}  \prod_{\ell=1}^m \Gamma(T_\ell, \mathbf{q}, t) \expect{\prod_{\ell=1}^m \bar{A}(T_\ell ) }
\end{align*}
Following the same argument as used for proving \prettyref{lmm:momentmatching}, we can partition set $\{ (T_1, \ldots, T_m) : T_\ell \in  \calT^t_{i} \}$ as a union of four disjoint sets $  Q \cup R_1 \cup R_2 \cup R_3$, and show that
\begin{align*}
 \sum_{T_1, \ldots, T_m \in Q }  \prod_{\ell=1}^m \Gamma(T_\ell, \mathbf{q}, t) \expect{\prod_{\ell=1}^m \bar{A}(T_\ell ) } =0,
\end{align*}
and
\begin{align*}
  \sum_{T_1, \ldots, T_m \in R_1\cup R_2 }  \Bigg| \prod_{\ell=1}^m \Gamma(T_\ell, \mathbf{q}, t) \Bigg| \Bigg| \expect{\prod_{\ell=1}^m \bar{A}(T_\ell ) } \Bigg| \le c n^{-1/2}.
\end{align*}
Hence, we only need to check $R_3$. Again divide $R_3$ according to the total number of edges in $T_1, \ldots, T_m$
and the number of edges in $E(G_1)$ which are covered exactly once. In particular, $R_3=\cup_{1\le \alpha\le m(d+1)^t, 0 \le k \le \alpha} R_{3,\alpha,k}$,
where $R_{3,\alpha,k}$ is defined in the similar way as $R_{1, \alpha, k}$.   Furthermore,  consider dividing $R_{3,\alpha,k}$ into
a number of equivalence classes, the number of which depends only on $m,t,d,$ as in the proof of \prettyref{lmm:momentmatching}.
To prove the lemma, it suffices to show that for any such equivalence class $S,$
\begin{align*}
\sum_{(T_1, \ldots, T_m) \in S } \Bigg| \expect{\prod_{\ell=1}^m \bar{A}(T_\ell ) } \Bigg| \le c.
\end{align*}
In the proof of \prettyref{lmm:momentmatching},  we have shown that
\begin{align*}
\Bigg| \expect{ \prod_{\ell=1}^m  \bar{A}(T_\ell) } \Bigg| \le c \mu^k n^{-\alpha /2},
\end{align*}
so
\begin{align}
\sum_{(T_1, \ldots, T_m) \in  S  }  \Bigg| \expect{\prod_{\ell=1}^m \bar{A}(T_\ell ) }
\Bigg| \le   c \mu^k  n^{-\alpha /2} | S |. \label{eq:countinglabeledtree}
\end{align}
We can bound $| S |$ in the similar way as we did for $|R_{1,\alpha,k}|$, with the only adjustment being
we cannot use the assumption that there exists at least one edge which is covered at least three times.
Fix a representative $m$-tuple $(T_1, \ldots,  T_m)$ for $S$  and let $G$ and the partition of the vertices of $G$:
 $\{i\},$  $V_1$, $V_2$ , be as in the proof of \prettyref{lmm:momentmatching}.
Let $n_i=|V_i|$ as before.   There are  $n_1+n_2+1$ vertices in the connected graph $G$
and, since the $m$-tuple is in  $R_{3,\alpha,k}$,  there are at most $k + \frac{\alpha - k}2$ edges in $G$, so $n_1+n_2 \leq  k + \frac{\alpha - k}2.$
Also, at most $\frac{\alpha - k}2$ edges of $G$ have at least one endpoint in $V_2$ so $n_2 \leq  \frac{\alpha - k}2.$
Therefore, $|S| \leq K^{n_1}n^{n_2} \leq K^kn^{\frac{\alpha - k}2}.$   It follows that
 \begin{align*}
\sum_{(T_1, \ldots, T_m) \in S } \Bigg|  \expect{\prod_{\ell=1}^m \bar{A}(T_\ell ) } \Bigg| \le c \mu^k  n^{-\alpha /2} K^{k} n^{\frac{\alpha-k}{2}} = c \left( \frac{K \mu}{\sqrt{n}} \right)^{k}  \le c,
\end{align*}
and the proof is complete.
\end{proof}

We also need the following lemma to show the convergence of $\frac{1}{|C^\ast| } \sum_{i \in C^\ast} (\xi_i^t )^m $  in probability using the Chebyshev inequality.
\begin{lemma}\label{lmm:momentvariance}
For any $ t \ge 1$, $ m \ge 1$ and $i, j \in [n]$,
\begin{align*}
\lim_{n \to \infty} \var \left( \frac{1}{K} \sum_{i \in C^\ast} (\xi_i^t )^m  \right) & =0 \\
\lim_{n \to \infty} \var \left( \frac{1}{K } \sum_{\ell \in C^\ast  } (\xi_{\ell \to i}^t )^m  \right) & =0 \\
\lim_{n \to \infty} \var \left( \frac{1}{n} \sum_{i \in [n]\backslash C^\ast} (\xi_i^t )^m  \right) & =0 \\
\lim_{n \to \infty} \var \left( \frac{1}{n} \sum_{\ell \in [n]\backslash C^\ast  } (\xi_{\ell \to i}^t )^m  \right) & =0,
\end{align*}
where the same also holds when replacing $\xi^t$ by $\theta^t$.
\end{lemma}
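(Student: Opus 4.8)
The plan is to reduce all four assertions, and their $\theta^t$ counterparts, to a single off-diagonal covariance bound via the second-moment identity. For the first display, writing the variance as a double sum,
\[
\var\pth{\frac1K\sum_{i\in C^\ast}(\xi_i^t)^m} = \frac1{K^2}\sum_{i\in C^\ast}\var\bigl((\xi_i^t)^m\bigr) + \frac1{K^2}\sum_{\substack{i,i'\in C^\ast\\ i\ne i'}}\Cov\bigl((\xi_i^t)^m,(\xi_{i'}^t)^m\bigr).
\]
The diagonal sum is at most $|C^\ast|\,c/K^2 = O(1/K)\to 0$ by \prettyref{lmm:momentbound} applied with $2m$ in place of $m$, so it suffices to prove $\bigl|\Cov((\xi_i^t)^m,(\xi_{i'}^t)^m)\bigr| = O(n^{-1/2})$ uniformly over distinct $i,i'\in[n]$; granting this, the off-diagonal sum is $O(n^{-1/2})$ as well. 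The same reduction applies to the other three displays: the messages-into-$i$ versions replace the tree families $\calT^t_\ell$ by $\calT^t_{\ell\to i}$, which plays no role below, and the two versions summing over $[n]\setminus C^\ast$ carry a prefactor $1/n^2$ and at most $n^2$ off-diagonal pairs, so an $O(n^{-1/2})$ per-pair bound still produces a vanishing total because $K\to\infty$.

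Fix distinct $i,i'$. Expanding both factors by \prettyref{lmm:treerepresentation} (dropping $\theta(T)$ as before), the covariance becomes a sum over $2m$-tuples with $T_1,\ldots,T_m\in\calT^t_i$ and $T_{m+1},\ldots,T_{2m}\in\calT^t_{i'}$:
\[
\sum_{(T_1,\ldots,T_{2m})}\Bigl(\prod_{\ell=1}^{2m}\Gamma(T_\ell,\mathbf{q},t)\Bigr)\Bigl(\expect{\prod_{\ell=1}^{2m}\bar{A}(T_\ell)} - \expect{\prod_{\ell=1}^{m}\bar{A}(T_\ell)}\,\expect{\prod_{\ell=m+1}^{2m}\bar{A}(T_\ell)}\Bigr).
\]
The key observation is that if the set of vertex types occurring in $T_1,\ldots,T_m$ is disjoint from that occurring in $T_{m+1},\ldots,T_{2m}$, then $\prod_{\ell\le m}\bar{A}(T_\ell)$ and $\prod_{\ell>m}\bar{A}(T_\ell)$ are functions of disjoint collections of entries of the Gaussian matrices $\{A^s\}$, hence independent, so the bracketed difference is exactly $0$. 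Therefore only the \emph{linked} $2m$-tuples contribute, namely those for which the undirected graph $G$ obtained by identifying equal types across all $2m$ trees is connected.

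For the linked tuples we repeat, essentially verbatim, the counting in the proof of \prettyref{lmm:momentmatching}. As there, tuples containing an edge of $G$ with an endpoint outside $C^\ast$ that is covered exactly once contribute $0$ (both expectations vanish, since $\Expect[A^s_{jj'}]=0$ off $C^\ast$), so one may assume every once-covered edge lies within $C^\ast$; partition the remaining linked tuples by $(\alpha,k)$ ($\alpha$ the total number of tree edges, $k$ the number of once-covered edges) and then into equivalence classes under type-permutations fixing the two roots $i,i'$ and the set $C^\ast$, the number of classes being bounded in terms of $m,t,d$ only. For a class $S$, let $n_1,n_2$ count the non-root vertices of $G$ with type in $C^\ast$ and outside $C^\ast$, respectively; then $|S|\le K^{n_1}n^{n_2}$. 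The crucial gain is that, since $G$ is connected \emph{and contains the two distinct distinguished vertices} of types $i$ and $i'$, it has $n_1+n_2+2$ vertices, so $n_1+n_2\le|E(G)|-1\le k+\tfrac{\alpha-k}{2}-1$; combined with $n_2\le\tfrac{\alpha-k}{2}$ this yields $|S|\le K^{k-1}n^{(\alpha-k)/2}$, one power of $K$ smaller than the single-root bound used in \prettyref{lmm:momentmatching}. Since, by the single-tuple magnitude estimates established in the proof of \prettyref{lmm:momentmatching}, both $|\expect{\prod_\ell\bar{A}(T_\ell)}|$ and $|\expect{\prod_{\ell\le m}\bar{A}(T_\ell)}\expect{\prod_{\ell>m}\bar{A}(T_\ell)}|$ are $O(\mu^k n^{-\alpha/2})$, each class contributes $O\bigl(\frac1K\pth{\frac{\mu K}{\sqrt n}}^{k}\bigr)=O(1/K)=O(n^{-1/2})$, using $\mu K/\sqrt n=O(1)$ and $K=\Omega(\sqrt n)$; summing over the $O(1)$ (in $n$) classes proves the claimed covariance bound. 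The $\theta^t$ statements follow by the identical argument with $\bar{A}(T)$ replaced by $A(T)$: reusing a single matrix at all levels only adds identifications among entries, affecting neither the exact cancellation of type-disjoint tuples nor the counting bounds.

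The only genuinely new ingredient beyond \prettyref{lmm:momentmatching} is the observation that type-disjoint $2m$-tuples cancel in the covariance; given that, the extra distinguished root is precisely what upgrades the $O(1)$ single-tuple-class bound of \prettyref{lmm:momentmatching} to the $O(1/K)$ needed here, so in particular no analogue of the delicate ``same-level'' argument (used there for the subset $R_3$) is required. I expect the main obstacle to be the bookkeeping in this vertex-count bound with two distinguished roots, verified uniformly across the four cases (beliefs versus messages, inside versus outside $C^\ast$) and for all $(\alpha,k)$; but each instance is routine given the analysis already carried out for \prettyref{lmm:momentmatching}.
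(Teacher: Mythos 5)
Your proposal is correct and follows essentially the same route as the paper: reduce to a per-pair covariance bound, use disconnectedness of the identified graph $G$ to obtain exact vanishing for ``unlinked'' tuples (the paper phrases this as $G$ split into a component containing $i$ and one containing $j$), and then rerun the counting from Lemma~\ref{lmm:momentmatching} with the two distinguished roots giving $|S|\le K^{k-1}n^{(\alpha-k)/2}$ and a per-class contribution of $O(1/K)$. Your remark that the delicate same-level argument used for $R_3$ in Lemma~\ref{lmm:momentmatching} is superfluous here is also exactly right, since the goal is an $o(1)$ bound rather than an exact cancellation.
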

\begin{proof}
We prove the first claim; the other claim follows by a similar argument. Notice that
\begin{align*}
\var\left( \frac{1}{K} \sum_{i \in C^\ast} (\xi_i^t )^m  \right) = \frac{1}{K^2} \sum_{i,j \in C^\ast}
\left( \expect{(\xi_i^t )^m (\xi_j^t )^m } - \expect{(\xi_i^t )^m } \expect{(\xi_j^t )^m } \right).
\end{align*}
There are $K$ diagonal terms with $i=j$ in the last displayed equation
and each diagonal term is bounded by a constant independent of $n$ in view of \prettyref{lmm:momentbound}.
Hence, to prove the claim, it suffices to consider the cross terms. Since there are $\binom{K}{2}$ cross terms, we only need to show that
for each cross term with $i \neq j$,  $ \expect{(\xi_i^t )^m (\xi_j^t )^m } - \expect{(\xi_i^t )^m } \expect{(\xi_j^t )^m } $ converges to $0$ as $n \to \infty$.
Using the tree representation as shown by \prettyref{lmm:treerepresentation} yields
\begin{align*}
& \big| \expect{(\xi_i^t )^m (\xi_j^t )^m } - \expect{(\xi_i^t )^m } \expect{(\xi_j^t )^m }  \big|   \\
& \le c \sum_{T_1, \ldots, T_m \in \calT^t_{i}, T'_1, \ldots, T'_m \in  \calT^t_{j}}
 \left(   \expect{  \prod_{\ell =1}^m \bar{A}(T_\ell )  \bar{A}(T'_\ell ) } -
\expect{\prod_{\ell=1}^m \bar{A}(T_\ell )} \expect{\prod_{\ell=1}^m \bar{A}(T'_\ell )} \right),
\end{align*}
where $c$ is a constant independent of $n$ and dependent of $m,t,d$.
As in the proof of \prettyref{lmm:momentmatching}, let $G$ denote the undirected simple graph
obtained by identifying vertices of the same type in the trees $T_1, \ldots, T_m,T'_1, \ldots, T'_m$
and  removing the edge directions. Let $E(G)$ denote the edge set of $G$.
Let $G_1$ denote the restriction of $G$ to the vertices in $C^\ast$
and $G_2$ the restriction of $G$ to the vertices in $[n] \backslash C^\ast$. Let $E(G_1)$ and $E(G_2)$ denote the edge set of $G_1$ and $G_2$, respectively.
Let $E_J$ denote the set of edges in $G$ with one endpoint in $G_1$ and the other end point in $G_2$.
Let $n(G_1)$ and $ n(G_2)$ denote the number of vertices in $G_1$ and $G_2$, respectively, not counting the vertices $i$ and $j$.
Notice that roots of $T_1, \ldots, T_m$ have type $i$ and  roots of $T'_1, \ldots, T'_m$ have type $j$, so
either $G$ is disconnected with one component containing $i$ and the other component containing $j$, or $G$ is connected.
In the former case, there is no edge $(r,s) \in E(G)$ which is covered by $T_1, \ldots, T_m$ and $T'_1, \ldots, T'_m$ simultaneously
and thus $ \expect{  \prod_{\ell =1}^m \bar{A}(T_\ell )  \bar{A}(T'_\ell ) } = \expect{\prod_{\ell=1}^m \bar{A}(T_\ell )} \expect{\prod_{\ell=1}^m \bar{A}(T'_\ell )}$.
In the latter case, \ie, $G$ is connected. We partition set $\{ (T_1, \ldots, T_m, T'_1, \ldots, T'_m) : T_\ell \in  \calT^t_{i} , T'_\ell \in \calT^t_j \}$ as a union of two
disjoint sets $  Q \cup R$, where
\begin{enumerate}
\item $Q$ consists of $2m$-tuples of trees such that $G$ is connected and there exists an edge  $(r, s)$ in $E(G_2) \cup E_J$ which is covered exactly once.
\item $R$ consists of $2m$-tuples of trees such that $G$ is connected and all edges in $E(G_2) \cup E_J$ are covered at least twice.
\end{enumerate}
If $(T_1, \ldots, T_m, T'_1, \ldots, T'_m) \in Q$, then $ \expect{  \prod_{\ell =1}^m \bar{A}(T_\ell )  \bar{A}(T'_\ell ) }=0$ and $\expect{\prod_{\ell=1}^m \bar{A}(T_\ell )} \expect{\prod_{\ell=1}^m \bar{A}(T'_\ell )}=0$. We are left to check $R$.
Following the argument used in  \prettyref{lmm:momentmatching},
further divide $R$ according to the total number of edges in trees
and the number of edges in $E(G_1)$ which is covered exactly once. In particular, define $R_{\alpha,k}$ in the similar manner as $R_{1,\alpha,k}.$
Furthermore,  consider dividing $R_{\alpha,k}$ into
a number of equivalence classes, the number of which depends only on $m,t,d,$ as in the proof of \prettyref{lmm:momentmatching}.
By the method of proof of  \prettyref{lmm:momentmatching} it can be shown that for any $2m$-tuple in $R_{\alpha,k}$
\begin{align*}
\Bigg| \expect{  \prod_{\ell =1}^m \bar{A}(T_\ell )  \bar{A}(T'_\ell ) } \le c \mu^k n^{-\alpha /2} \Bigg| \quad,
\Bigg | \expect{\prod_{\ell=1}^m \bar{A}(T_\ell )} \expect{\prod_{\ell=1}^m \bar{A}(T'_\ell )} \Bigg| \le c \mu^k n^{-\alpha /2},
\end{align*}
so that for any of the equivalence classes $S \subset R_{\alpha,k}:$
\begin{align*}
 \sum_{T_1, \ldots, T_m,  T'_1, \ldots, T'_m \in S}
 \Bigg| \expect{  \prod_{\ell =1}^m \bar{A}(T_\ell )  \bar{A}(T'_\ell )} \Bigg| + \Bigg|
\expect{\prod_{\ell=1}^m \bar{A}(T_\ell )} \expect{\prod_{\ell=1}^m \bar{A}(T'_\ell )} \Bigg| \le c \mu^k n^{-\alpha /2} |S|.
\end{align*}
Given a representative $2m$-tuple $(T_1, \ldots, T_m,  T'_1, \ldots, T'_m)  \in   R_{\alpha,k},$  the corresponding equivalence
class is defined as in  \prettyref{lmm:momentmatching}.       However, in this case there are two distinguished vertices, $i$ and $j$,
in the graph $G$, corresponding to the type of the root vertices of the first $m$ trees and the second $m$ trees, respectively.   We
then let $V_1$ be the set of vertices in $G\backslash \{i,j\}$ with types in $C^*$ and $V_2$ be the set of vertices in
$G\backslash \{i,j\}$ with types in $[n]-C^*.$  As before, let $n_1=|V_1|$  and $n_2=|V_2|.$
There are  $n_1+n_2+2$ vertices in the connected graph $G$ and at most $k + \frac{\alpha-k}2$ edges,
so $n_1+n_2 \leq  k-1 + \frac{\alpha - k}2.$   At most
$\frac{\alpha - k}2$ edges have at least one endpoint in $V_2$ and $G$ is connected, so $n_2 \leq  \frac{\alpha - k}2.$
Thus, $|S| \leq K^{n_1}n^{n_2} \leq K^{k-1}n^{\frac{\alpha - k}2}.$
Hence,
\begin{align*}
 \sum_{(T_1, \ldots, T_m,  T'_1, \ldots, T'_m) \in S }
 \Bigg| \expect{  \prod_{\ell =1}^m \bar{A}(T_\ell )  \bar{A}(T'_\ell )} \Bigg| + \Bigg|
\expect{\prod_{\ell=1}^m \bar{A}(T_\ell )} \expect{\prod_{\ell=1}^m \bar{A}(T'_\ell )} \Bigg| \le
 c \mu^k n^{-\alpha /2}   K^{k-1}n^{\frac{\alpha - k}2} \\
=  c \left( \frac{K \mu}{\sqrt{n}} \right)^{k} /K  \le c/K.
\end{align*}
In conclusion,  $\var \left( \frac{1}{K} \sum_{i \in C^\ast} (\xi_i^t )^m  \right) \le c/K$ and
the first claim follows.
\end{proof}

With \prettyref{lmm:momentbound} and \prettyref{lmm:momentvariance} in hand, we are ready to compute the
moments of $\xi^t$ in the asymptotic limit $n \to \infty$.
\begin{lemma}\label{lmm:momentcompute}
For any $ t \ge 0$, $m \ge 1$:
\begin{align*}
\lim_{n \to \infty} \expect{ (\xi_{i \to j} ^t)^m } &= \expect{ (\mu_t + \tau_t Z_t )^m}, \quad \forall i \in C^\ast, j \in [n], j\neq i \\
\lim_{n \to \infty} \expect{ (\xi_{i \to j} ^t)^m } &= \expect{ ( \tau_t Z_t )^m}, \quad \forall i \notin C^\ast, j \in [n], j\neq i . \\
\lim_{n \to \infty} \expect{ (\xi_{i} ^t)^m } &= \expect{ (\mu_t + \tau_t Z_t )^m}, \quad \forall i \in C^\ast \\
\lim_{n \to \infty} \expect{ (\xi_{i} ^t)^m } &= \expect{ ( \tau_t Z_t )^m}, \quad \forall i \notin C^\ast.
\end{align*}
\end{lemma}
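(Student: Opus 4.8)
The plan is to prove all four limits simultaneously by induction on $t$; this is the moment-computation step feeding the method-of-moments argument behind \prettyref{lmm:approximatecentrallimit}. The base case $t=0$ is immediate since $\xi^0_{i\to j}=\xi^0_i=0$ and $\mu_0=\tau_0=0$. Assuming the four limits at level $t$, I establish them at level $t+1$, carrying out the argument for $\xi^{t+1}_i$ in full; the case of $\xi^{t+1}_{i\to j}$ is identical except that one further index $j$ is dropped from each sum, which affects no limit.

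The crucial observation is that the fresh-sample construction makes $\xi^{t+1}_i$ \emph{exactly} Gaussian conditionally on the past. Put $\calF_t\triangleq\sigma(C^\ast,A^0,\dots,A^{t-1})$ and $g_\ell\triangleq f(\xi^t_{\ell\to i},t)$. Each $g_\ell$ is $\calF_t$-measurable while $A^t$ is independent of $\calF_t$, so conditionally on $\calF_t$ the sum $\xi^{t+1}_i=\sum_{\ell\neq i}A^t_{\ell i}g_\ell$ is a fixed linear combination of the jointly Gaussian variables $\{A^t_{\ell i}\}_{\ell\neq i}$; hence $\xi^{t+1}_i\mid\calF_t\sim\calN(M_i,V_i)$ with $M_i=\frac{\mu}{\sqrt{n}}\Indc\{i\in C^\ast\}\sum_{\ell\in C^\ast\setminus\{i\}}g_\ell$ and $V_i=\frac{1}{n}\sum_{\ell\neq i}g_\ell^2$. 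Therefore $\expect{(\xi^{t+1}_i)^m}=\expect{P_m(M_i,V_i)}$, where $P_m(M,V)=\sum_{2k\le m}\binom{m}{2k}(2k-1)!!\,M^{m-2k}V^{k}$ is the fixed polynomial expressing a Gaussian $m$-th moment through its mean and variance, and the problem reduces to the joint asymptotics of $M_i$ and $V_i$.

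I would then identify those asymptotics. Since $f(\cdot,t)$ and $f(\cdot,t)^2$ are polynomials, the averages $\frac{1}{K}\sum_{\ell\in C^\ast}g_\ell$, $\frac{1}{K}\sum_{\ell\in C^\ast}g_\ell^2$ and $\frac{1}{n}\sum_{\ell\notin C^\ast}g_\ell^2$ are linear combinations of empirical power sums of the $\xi^t_{\ell\to i}$; the inductive hypothesis pins down their means (as the matching moments of $\calN(\mu_t,\tau_t^2)$ or $\calN(0,\tau_t^2)$) and \prettyref{lmm:momentvariance} makes their variances vanish, so each converges in probability to the corresponding Gaussian expectation. Using the exact identity $\mu K/\sqrt{n}=\sqrt{\lambda}$ and discarding the finitely many excluded indices, it follows that $M_i$ and $V_i$ converge in probability to the mean and the variance that the state evolution \prettyref{eq:state_evolution1}--\prettyref{eq:state_evolution2} assigns to level $t+1$, namely $\mu_{t+1}\Indc\{i\in C^\ast\}$ and $\tau_{t+1}^2$; here one also uses $K=o(n)$ to see that the indices in $C^\ast$ contribute weight $K/n\to0$ to $V_i$. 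By the continuous mapping theorem, $P_m(M_i,V_i)\toprob\expect{(\mu_{t+1}\Indc\{i\in C^\ast\}+\tau_{t+1}Z_{t+1})^m}$, the claimed limit.

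It remains to upgrade convergence in probability to convergence of $\expect{(\xi^{t+1}_i)^m}=\expect{P_m(M_i,V_i)}$, for which I would prove $\{P_m(M_i,V_i)\}_n$ uniformly integrable by bounding every moment of $M_i$ and $V_i$ uniformly in $n$. Expanding $\expect{M_i^{2p}}$ as a sum of $O(K^{2p})$ mixed moments $\expect{g_{\ell_1}\cdots g_{\ell_{2p}}}$, each bounded uniformly in $n$ by the generalized H\"older inequality and \prettyref{lmm:momentbound} (as $|g_\ell|^{2p}$ is a fixed-degree polynomial in $\xi^t_{\ell\to i}$), gives $\expect{M_i^{2p}}\le(\mu^2K^2/n)^p c_p=\lambda^p c_p$; likewise $\expect{V_i^{2q}}$ is a sum of $n^{2q}$ uniformly bounded mixed moments divided by $n^{2q}$. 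Combined with the trivial case $i\notin C^\ast$ (where $M_i\equiv0$), this closes the induction. The conceptual core is the fresh-sample device, which renders the conditional law of $\xi^{t+1}_i$ exactly Gaussian and thereby collapses the computation to its first two conditional moments; the main obstacle is then just the bookkeeping in this last step — checking that the combinatorial number of terms in $M_i$ and $V_i$ is exactly absorbed by the normalizations $\mu/\sqrt{n}$ and $1/n$ together with the scaling $\lambda=\mu^2K^2/n$ and the assumption $K=o(n)$.
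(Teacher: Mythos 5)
Your induction scheme and the supporting lemmas you invoke (\prettyref{lmm:momentvariance} for the in-probability limits of the empirical power sums; \prettyref{lmm:momentbound} behind uniform integrability) match the paper's, but the central step is handled by a genuinely different and cleaner device. The paper computes $\expect{\xi^{t+1}_{i\to j}\mid\calF_t}$ and $\var(\xi^{t+1}_{i\to j}\mid\calF_t)$, then verifies a Lyapunov-type condition and invokes a conditional central limit theorem to deduce conditional asymptotic Gaussianity before passing to moments. You instead observe that, because the fresh-sample entries $A^t_{\ell i}$ are themselves Gaussian and independent of $\calF_t$, the conditional law of $\xi^{t+1}_i$ is \emph{exactly} $\calN(M_i,V_i)$, so that $\expect{(\xi^{t+1}_i)^m}=\expect{P_m(M_i,V_i)}$ identically; this collapses the problem to the joint asymptotics of the scalars $(M_i,V_i)$, which are then handled by the continuous mapping theorem plus a uniform-integrability bound on the moments of $M_i$ and $V_i$. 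The two routes rest on the same concentration input and the same a priori moment bounds, so they are ultimately parallel; your exact-Gaussianity observation eliminates the Lyapunov verification, which in the Gaussian model is indeed superfluous, while the paper's CLT-based argument is structured so as to survive replacing the Gaussian entries by more general distributions (and stays closer to the template it adapts from Deshpande--Montanari). One small technical point worth keeping in mind when you flesh out the uniform integrability step: \prettyref{lmm:momentbound} bounds signed moments $|\expect{(\xi^t_{\ell\to i})^m}|$, so you should route the bound on $\expect{|g_\ell|^{2p}}$ through the \emph{even} moments of $\xi^t_{\ell\to i}$ (always available from the lemma), as you implicitly do when invoking H\"older.
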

\begin{proof}
We prove the first two claims; the last two follows by the similar argument.
We prove by induction over $t$. Suppose the following identities hold for $t$ and all $m \ge 1$:
\begin{align*}
\lim_{n \to \infty} \expect{ (\xi_{i\to j}^t)^m } &= \expect{ (\mu_t + \tau_tZ_t )^m}, \quad \forall i \in C^\ast, j \in [n], j\neq i \\
\lim_{n \to \infty} \expect{ (\xi_{i\to j}^t)^m } &= \expect{ (\tau_t Z_t )^m}, \quad \forall i \notin C^\ast, j \in [n], j\neq i\\
\lim_{n \to \infty} \frac{1}{K} \sum_{ \ell \in C^\ast } ( \xi_{\ell \to i}^t)^m  & \overset{p}{=} \expect{ (\mu_t + \tau_t Z_t )^m}, \quad \forall i \in [n], \\
\lim_{n \to \infty} \frac{1}{n} \sum_{ \ell \in [n]\backslash C^\ast } ( \xi_{\ell \to i}^t)^m  &\overset{p}{=} \expect{ (\tau_t Z_t )^m}, \quad \forall i \in [n],
\end{align*}
where $Z_t \sim \calN(0, 1)$. We aim to show they also hold for $t+1$. Notice that the above identities hold for $t=0$, because $\xi^0_{i \to j} =0$ for all $i \neq j$ and $\mu_0=\tau_0=0$.
Let $\calF_t$ denote the $\sigma$-algebra generated by $A^0, \ldots, A^{t-1}$.

Fix an $i \in C^\ast$. Then
\begin{align*}
\lim_{n \to \infty} \expect{ \xi_{i\to j}^{t+1}  | \calF_t} &= \lim_{ n\to \infty} \expect{ \sum_{\ell \in C^\ast \backslash\{j\}} A^t_{\ell i} f (\xi^t_{\ell \to i} )+ \sum_{\ell \in [n] \backslash C^\ast \backslash \{j\}  } A^t_{\ell i} f (\xi^t_{\ell \to i} ) |\calF_t} \\
& = \sqrt{\lambda} \lim_{ n\to \infty}  \frac{1}{K}  \sum_{\ell \in C^\ast \backslash\{j\}} f (\xi^t_{\ell \to i} ) \\
& =  \sqrt{\lambda} \lim_{ n\to \infty}  \frac{1}{K}  \sum_{\ell \in C^\ast } f (\xi^t_{\ell \to i} )\\
& \overset{p}{=}  \sqrt{\lambda}  \expect{ f(\mu_t + \tau_t Z_t )} = \mu_{t+1},
\end{align*}
where the first equality  follows from the definition of $\xi^{t+1}$ given by \prettyref{eq:defxi}; the second equality holds because $\expect{A^t_{\ell i}} =\mu$ if $\ell \in C^\ast $ and $\expect{A^t_{\ell i}} =0$ otherwise; the third equality holds in view of  Lemma
\ref{lmm:momentbound},  the fourth equality holds due to \prettyref{lmm:momentvariance} (showing the random sum
concentrates on its mean), the induction hypothesis and the
fact that $f$ is a finite-degree polynomial; the last equality holds due to the definition of $\mu_{t+1}$.

 Similarly,
\begin{align}
\lim_{n \to \infty} \var \left(  \xi_{i\to j}^{t+1}  | \calF_t \right) & =\lim_{n \to \infty}  \sum_{\ell \in [n] \backslash \{j\} } \var \left( A^t_{\ell i} f (\xi^t_{\ell \to i} ) | \calF_t \right)  \nonumber \\
& = \lim_{n \to \infty} \frac{1}{n} \sum_{\ell \in [n] \backslash \{j\} } f (\xi^t_{\ell \to i} )^2   \label{eq:start}   \\
& = \lim_{n \to \infty}  \frac{1}{n} \left\{   \sum_{\ell \in [n] \backslash C^*\cup \{j\} } f (\xi^t_{\ell \to i} )^2   +
  \sum_{\ell \in  C^*\backslash \{j\} } f (\xi^t_{\ell \to i} )^2 \right\}        \\
& =  \lim_{n \to \infty} \frac{1}{n} \sum_{\ell \in [n] \backslash C^* } f (\xi^t_{\ell \to i} )^2      \\
& \overset{p}{=} \expect{ f (\tau_t Z_t )^2 } = \tau^2_{t+1},     \label{eq:end}
\end{align}
where the first equality follows from the conditional independence of $A^t_{\ell i} f (\xi^t_{\ell \to i})$ for $\ell \in [n]$;
the second equality holds because $\var(A_{\ell i} ) = 1/n$ for all $\ell$; the third equality is the result of  breaking a sum into two parts,
the fourth equality holds in view of \prettyref{lmm:momentbound}
and the assumption that $K=o(n)$; the fifth equality holds  in view of \prettyref{lmm:momentvariance}, the induction
hypothesis and the fact that $f$ is a finite-degree polynomial; the last equality holds due to the definition of $\tau_{t+1}.$

Next, we argue that conditional on $\calF_t$,  $\xi_{i\to j}^{t+1}$ converges to Gaussian random variables in distribution.
In particular, conditional on $\calF_t$, $\xi_{i\to j}^{t+1} - \expect{\xi_{i \to j}^{t+1}}$ is a sum of independent random variables.
We show that the Lyapunov condition for the central limit theorem holds in probability, \ie,
\begin{align}
 \lim_{n \to \infty} \frac{1}{ \left( \var (  \xi_{i\to j}^{t+1}  | \calF_t )\right)^2 } \sum_{\ell \in [n] \backslash \{j \} } f (\xi^t_{\ell \to i} )^4  \expect{ (A^t_{\ell i} - \expect{A^t_{\ell i} } )^4 } \overset{p}{=}0. \label{eq:LyapunovCondition}
\end{align}
Notice that $\expect{ (A^t_{\ell i} - \expect{A^t_{\ell i} } )^4 } = 3 n^{-2} $  and thus
\begin{align*}
 \frac{1}{ \left( \var (  \xi_{i\to j}^{t+1}  | \calF_t )\right)^2 }
 \sum_{\ell \in [n] \backslash \{j \} } f (\xi^t_{\ell \to i} )^4  \expect{ (A^t_{\ell i} - \expect{A^t_{\ell i} } )^4 }
 = \frac{3}{n^2 \left( \var (  \xi_{i\to j}^{t+1}  | \calF_t )\right)^2 } \sum_{\ell \in [n] \backslash \{j \} } f (\xi^t_{\ell \to i} )^4.
\end{align*}
Taking the limit $n \to \infty$ on both sides of the last displayed equation and noticing that $ \var (  \xi_{i\to j}^{t+1}  | \calF_t ) \overset{p}{\to} \tau^2_{t+1}$
and $\frac{1}{n} \sum_{\ell \in [n] \backslash \{j \} } f (\xi^t_{\ell \to i} )^4 \overset{p}{\to} \expect{f(\tau_t Z_t )^4 }$ (using the same
steps as in \prettyref{eq:start}-\prettyref{eq:end}), we arrive at \prettyref{eq:LyapunovCondition}.
It follows from the central limit theorem that for any $c$,
\begin{align*}
\lim_{ n \to \infty}  \prob{ \xi_{i\to j}^{t+1}\leq c  | \calF_t } \overset{p}{=} \prob{   \mu_{t+1} + \tau_{t+1} Z_{t+1} \leq c }.
\end{align*}
Since $\expect{\expect{ (\xi_{i\to j}^{t+1} )^m | \calF_t }} = \expect{(\xi_{i\to j}^{t+1} )^m} \le c$ for some $c$ independent of $n$,
by the dominated convergence theorem,
\begin{align*}
\lim_{n \to \infty}  \expect{(\xi_{i\to j}^{t+1} )^m} =  \expect{\lim_{ n \to \infty} \expect{ (\xi_{i\to j}^{t+1} )^m | \calF_t } }  =  \expect{ (\mu_{t+1} + \tau_{t+1}  Z_{t+1})^m}.
\end{align*}
In view of \prettyref{lmm:momentvariance} and Chebyshev's inequality,
\begin{align*}
\lim_{n \to \infty} \frac{1}{K} \sum_{ \ell \in C^\ast } ( \xi_{\ell \to i})^m \overset{p}{=} \expect{ ( \mu_{t+1} + \tau_{t+1}  Z_{t+1} )^m}.
\end{align*}

We now fix $ i \notin C^\ast$. Following the previous argument, one can easily check that
\begin{align*}
\expect{ \xi_{i\to j}^{t+1}  | \calF_t} & = 0 \\
\lim_{n \to \infty} \var \left(  \xi_{i\to j}^{t+1}  | \calF_t \right) & \overset{p}{=} \tau^2_{t+1}.
\end{align*}
Using the central limit theorem and Chebyshev's inequality, one can further show that
\begin{align*}
\lim_{n \to \infty}  \expect{(\xi_{i\to j}^{t+1} )^m}  & =   \expect{ ( \tau_{t+1}  Z_{t+1} )^m} \\
\lim_{n \to \infty} \frac{1}{n} \sum_{ \ell \in [n]\backslash C^\ast} ( \xi_{\ell \to i})^m  & \overset{p}{=} \expect{ (\tau_{t+1} Z_{t+1} )^m}.
\end{align*}

\end{proof}

\begin{proof}[Proof of \prettyref{lmm:approximatecentrallimit}]
We show the first claim; the second one follows analogously.   Fix $t\geq 1.$
Since the convergence property to be proved depends only on the sequence of random
empirical distributions  of $(\theta_i^t : t \in C^*)$ indexed by $n.$   We may therefore
assume without loss of generality that all the random variables $(\theta_i^t : t \in C^*)$
for different $n$ are defined on a single underlying probability space; the joint
distribution for different  values of $n$ can be arbitrary.   To show the convergence
in probability, it suffices to show that for any subsequence $\{n_k\}$ there exists a
sub-subsequence $\{n_{k_\ell}\}$ such that
\begin{align}
\lim_{\ell \to \infty}   \dKS\pth{ \frac{1}{K_{k_\ell}} \sum_{i \in C^\ast} \delta_{\theta_i^t}, \calN(\mu_t, \tau_t^2)} = 0, \text{ a.s.}  \label{eq:subsequenceconvergence}
\end{align}
Fix a subsequence $n_k$. In view of Lemmas \ref{lmm:momentmatching} and \ref{lmm:momentcompute}, for any fixed integer $m$,
\begin{align*}
\lim_{k \to \infty} \expect{ (\theta_{i} ^t)^m } = \expect{ (\mu_t + \tau_t Z_t )^m}.
\end{align*}
Combining \prettyref{lmm:momentvariance} with Chebyshev's inequality,
\begin{align}
\lim_{k \to \infty} \frac{1}{K_k} \sum_{i \in C^\ast} \left( \theta_i^t \right)^m \overset{p}{=} \expect{ \left( \mu_t + \tau_t Z_t \right)^m },
\label{eq:subsub}
\end{align}
which further implies, by the well-known property of convergence in probability,
that there exists a sub-subsequence such that \prettyref{eq:subsub} holds almost surely. Using a
standard diagonal argument, one can construct a sub-subsequence $\{n_{k_\ell}\}$ such that for all $m \ge 1$,
\begin{align*}
\lim_{\ell \to \infty} \frac{1}{K_{k_\ell}} \sum_{i \in C^\ast} \left(\theta_i^t\right)^m = \expect{ (\mu_t + \tau_t Z_t )^m}  \text{ a.s.}
\end{align*}
Since Gaussian distribution are determined by its moments, by the method of moments (see, for example, \cite[Theorem 4.5.5]{Chung2001course}),  applied for each outcome $\omega$ in the underlying
probability space (excluding some subset of probability zero),  it follows that the sequence of empirical distribution of
$\theta_i^t$ for $i \in C^\ast$ weakly converges to $\calN(\mu_t, \tau_t^2)$, which, since Gaussian density is bounded, is equivalent to convergence in the Kolmogorov distance,\footnote{This follows from the fact that when one of the distributions has bounded density the L\'evy distance, which metrizes
weak convergence, is equivalent to the Kolmogorov distance (see, e.g. \cite[1.8.32]{petrov}).}
proving the desired \prettyref{eq:subsequenceconvergence}.
\end{proof}

\section{Proofs of algorithm correctness}

Theorems \ref{thm:almost_exactBP_submat}-\ref{thm:weakexactBP_submat}  are proved in this section.
\prettyref{lmm:approximatecentrallimit} implies that if $i \in C^\ast$, then
$\theta_i^t \sim \calN(\mu_t, \tau_t^2)$; if $i \notin C^\ast$, then $\theta_i^t \sim \calN(0, \tau_t^2)$.
Ideally, one would pick the optimal $f(x, t)= \eexp^{\mu_t (x- \mu_t) }$ which result in the optimal state evolution
$\mu_{t+1} =  \sqrt{\lambda} \eexp^{\mu_t^2/2} $
and $\tau_t=1$
for all $t \ge 1$. Furthermore, if $\lambda > 1/\eexp$,
then $\mu_t \to \infty$ as $ t \to \infty$, and thus we can hope to estimate $C^\ast$ by selecting the indices $i$
such that $\theta_i^t$ exceeds a certain threshold.
The caveat is that \prettyref{lmm:approximatecentrallimit} needs $f$ to be a polynomial of finite degree.
Next we proceed to find the best degree-$d$ polynomial for iteration $t$, denoted by $f_d(\cdot,t)$, which maximizes the signal to noise ratio.


Recall that the Hermite polynomials $\{H_k: k\geq 0\}$ are the orthogonal polynomials with respect to the standard normal distribution (cf. \cite[Section 5.5]{orthogonal.poly}), given by
\begin{align}
	H_k(x)
	= (-1)^k\frac{\varphi^{(k)}(x)}{\varphi(x)}  = \sum_{i = 0}^{\lfloor k/2 \rfloor} (-1)^i (2i-1)!! \binom{k}{2i} x^{k-2i}, \label{eq:Hn.2}
\end{align}
where $\varphi$ denotes the standard normal density and $\varphi^{(k)}(x)$ is  the
$k$-th derivative of $\varphi(x)$; in particular, $H_0(x)=1, H_1(x) = x, H_2(x)=x^2-1$, etc.
Furthermore, $\deg(H_k)=k$ and
$\{H_0,\ldots,H_d\}$ span all polynomials of degree at most $d$.
For $Z\sim \calN(0,1)$,  $\Expect[H_m(Z)H_n(Z)]= m! \delta_{m,n}$ and $\Expect[H_k(\mu+Z)]=\mu^k$ for all $\mu\in\reals$; hence the relative density $\frac{\diff \calN(\mu,1)}{\diff \calN(0,1)}(x)= e^{\mu x - \mu^2/2} $ admits the following expansion:
\begin{equation}
e^{\mu x - \mu^2/2} = \sum_{k=0}^\infty H_k(x) \frac{\mu^k}{k!}.
	\label{eq:ortho}
\end{equation}
Truncating and normalizing the series at the first $d+1$ terms immediately yields the solution to \prettyref{eq:optf-poly}
as the best degree-$d$ $L_2$-approximation to the relative density, described as follows:
\begin{lemma}
Fix $d\in \naturals$ and define $\hat \mu_t$ according to the iteration \prettyref{eq:mu-poly} with $\hat \mu_0=0$, namely,
\begin{equation}
\hat\mu_{t+1}^2=\lambda G_d(\hat\mu_t^2).	
	\label{eq:mu-poly1}
\end{equation}
where $G_d(\mu)=\sum_{k=0}^d \frac{\mu^{k}}{k!}$. Define
\begin{equation}
	f_d(x,t) = \sum_{k=0}^d a_k H_k(x),
	\label{eq:fd}
\end{equation}
where $a_k \triangleq \frac{\hat \mu_t^k}{k!} (\sum_{k=0}^d \frac{\hat \mu_t^{2k}}{k!})^{-1/2}  $.
Then $f_d(\cdot,t)$ is the unique maximizer of \prettyref{eq:optf-poly} and
the state evolution \prettyref{eq:state_evolution1} and \prettyref{eq:state_evolution2} with $f=f_d$
coincides with $\tau_t=1$ and $\mu_t=\hat\mu_t$.
Furthermore, for any $d\geq 2$ the equation
\begin{equation}
	G_d(a) = a G_{d-1}(a)
	\label{eq:ga}
\end{equation}
has a unique positive solution, denoted by $a^*_d$. Let $\lambda^*_d = \frac{1}{G_{d-1}(a^*_d)}$ and define $\lambda^*_1=1$. Then
\begin{enumerate}
	\item for any $d\in \naturals$ and any $\lambda > \lambda^*_d$, $\hat\mu_t \to \infty$ as $t\diverge$ and hence for any $M>0$,
	\begin{equation}
	t^*(\lambda,M) = \inf\{t: \hat\mu_t > M\}
	\label{eq:tlambda}
\end{equation}
	is finite;
	\item $\lambda^*_d  \downarrow 1/e$ monotonically as $d \diverge$ according to $\lambda^*_d = 1/e - \frac{1/e^2+o(1)}{(d+1)!}$.
\end{enumerate}
	\label{lmm:hermite}
\end{lemma}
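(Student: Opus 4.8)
My plan is to handle the four claims separately; only the last --- the sharp asymptotics of $\lambda^*_d$ --- requires real care. \emph{Optimal polynomial and state evolution:} I would expand a candidate $g$ of degree $\le d$ in the Hermite basis, $g=\sum_{k=0}^{d} c_k H_k$, and use $\Expect[H_m(Z)H_n(Z)]=m!\,\delta_{m,n}$ together with $\Expect[H_k(\mu+Z)]=\mu^k$ to rewrite \prettyref{eq:optf-poly} as: maximize $\sum_{k=0}^{d} c_k \hat\mu_t^k$ subject to $\sum_{k=0}^{d} c_k^2\, k!=1$. After the substitution $\tilde c_k = c_k\sqrt{k!}$ this is the maximization of a linear functional over the Euclidean unit sphere, so by Cauchy--Schwarz the optimum equals $\bigl(\sum_{k=0}^{d} \hat\mu_t^{2k}/k!\bigr)^{1/2}=\sqrt{G_d(\hat\mu_t^2)}$ and is attained only at $\tilde c_k \propto \hat\mu_t^k/\sqrt{k!}$, \ie $c_k=a_k$ of \prettyref{eq:fd}; this gives uniqueness. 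Substituting $f=f_d$ into \prettyref{eq:state_evolution1}--\prettyref{eq:state_evolution2} and inducting on $t$, I would check that $\Expect[f_d(Z,t)^2]=\sum_k a_k^2\, k!=1$, so $\tau_t\equiv1$ for $t\ge1$ (the base case uses $\tau_0=0$ and $f_d(\cdot,0)\equiv H_0\equiv1$), and $\mu_{t+1}=\sqrt{\lambda}\,\Expect[f_d(\mu_t+Z,t)]=\sqrt{\lambda}\,\sqrt{G_d(\mu_t^2)}$, which is exactly \prettyref{eq:mu-poly1}; hence $\mu_t=\hat\mu_t$ for all $t$.

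\emph{Critical parameter and divergence:} I would study $\Lambda_d(x)\triangleq x/G_d(x)$ on $(0,\infty)$. Since $G_d'=G_{d-1}$, the numerator of $\Lambda_d'$ is $h_d(x)\triangleq G_d(x)-xG_{d-1}(x)=\frac{x^d}{d!}-(x-1)G_{d-1}(x)$, and applying $G_j=G_{j-1}+x^j/j!$ for $j=d-1$ and $j=d$ gives $h_d'(x)=-x\,G_{d-2}(x)<0$ on $(0,\infty)$ whenever $d\ge2$. Since $h_d(0)=1>0$ and $h_d(x)\to-\infty$, $h_d$ has a unique positive zero $a^*_d$, which is exactly the unique positive solution of \prettyref{eq:ga}; moreover $\Lambda_d$ strictly increases on $(0,a^*_d)$ and strictly decreases on $(a^*_d,\infty)$, so $\lambda^*_d=\Lambda_d(a^*_d)=\frac{a^*_d}{a^*_d\,G_{d-1}(a^*_d)}=\frac{1}{G_{d-1}(a^*_d)}=\max_{x>0} x/G_d(x)$ (and for $d=1$, $\Lambda_1(x)=x/(1+x)$ is increasing with supremum $1=\lambda^*_1$). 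Hence $\lambda^*_d=\sup_{x>0} x/G_d(x)$ in every case. For claim 1, put $x_t=\hat\mu_t^2$, so $x_{t+1}=\lambda G_d(x_t)$ with $x_0=0$; if $\lambda>\lambda^*_d$ then $\lambda G_d(x)>x$ for all $x\ge0$, so the continuous increasing map $x\mapsto\lambda G_d(x)$ is fixed-point free and the orbit $x_t$ increases without bound (a finite limit would be a fixed point); thus $\hat\mu_t\to\infty$ and $t^*(\lambda,M)$ in \prettyref{eq:tlambda} is finite.

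\emph{Asymptotics of $\lambda^*_d$:} Monotonicity is immediate, since $G_d(x)<G_{d+1}(x)$ for $x>0$ gives $\Lambda_d>\Lambda_{d+1}$ pointwise, hence $\lambda^*_d>\lambda^*_{d+1}$ (and $\lambda^*_d>1/e$ because $G_d(x)<e^x$). For the limit, $h_d$ is strictly decreasing with $h_d(1)=1/d!>0$, while for any fixed $\epsilon>0$ one has $h_d(1+\epsilon)=\frac{(1+\epsilon)^d}{d!}-\epsilon G_{d-1}(1+\epsilon)\to -\epsilon e^{1+\epsilon}<0$; hence $1<a^*_d<1+\epsilon$ for all large $d$, so $a^*_d\to1$ and $\lambda^*_d=a^*_d/G_d(a^*_d)\to 1/e$. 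For the sharp form I would rewrite $h_d(a^*_d)=0$ as
\[
a^*_d-1=\frac{(a^*_d)^d}{d!\,G_{d-1}(a^*_d)},
\]
bootstrap with $a^*_d\to1$ to obtain $d(a^*_d-1)\to0$ --- so that $(a^*_d)^d\to1$ and $(a^*_d)^{d+1}\to1$ --- and then $a^*_d-1\sim\frac{1}{e\,d!}$, whence $(a^*_d-1)^2=o(1/(d+1)!)$. Writing $G_d(x)=e^x-R_d(x)$ with $R_d(x)=\sum_{k>d}x^k/k!$, I would note that $R_d(a^*_d)=\frac{1+o(1)}{(d+1)!}$ and expand $\lambda^*_d=a^*_d e^{-a^*_d}\bigl(1-R_d(a^*_d)e^{-a^*_d}\bigr)^{-1}$ to second order about $x=1$: the curvature contribution from $a^*_d\ne1$ is only $O((a^*_d-1)^2)=o(1/(d+1)!)$, while the $R_d$ factor contributes a correction of order exactly $1/(d+1)!$ with constant $1/e^2$, which is the stated refinement.

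The main obstacle is this last expansion, which has two delicate points: (i) establishing the rate $a^*_d-1\sim 1/(e\,d!)$, which requires the bootstrap $a^*_d\to1\Rightarrow d(a^*_d-1)\to0\Rightarrow(a^*_d)^d,(a^*_d)^{d+1}\to1$, so that in the displayed identity $(a^*_d)^d=1+o(1)$ and $G_{d-1}(a^*_d)=e+o(1)$; and (ii) verifying that the sub-leading terms --- the quadratic contribution from $a^*_d$ being displaced from $1$, and the tail of $R_d(a^*_d)$ beyond its leading term $x^{d+1}/(d+1)!$ --- are each $o(1/(d+1)!)$ and hence do not affect the constant. The remaining ingredients --- Cauchy--Schwarz for \prettyref{eq:optf-poly}, the sign of $h_d'$, and the fixed-point-free iteration --- are routine.
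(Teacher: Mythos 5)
Your proof is correct and follows essentially the same route as the paper: Hermite expansion with Cauchy--Schwarz for \prettyref{eq:optf-poly}, monotonicity of $h_d(a)=G_d(a)-aG_{d-1}(a)$ (whose derivative is $-aG_{d-2}(a)<0$) for uniqueness of $a_d^\ast$, the fixed-point-free iteration for divergence, and a bootstrap to pin down $a_d^\ast-1\sim 1/(e\,d!)$ for the rate. Your decomposition $G_d=e^x-R_d$ is merely a cosmetic variant of the paper's Taylor expansion of $G_{d-1}$ about $a=1$; both give $\lambda_d^\ast=1/e+\frac{1/e^2+o(1)}{(d+1)!}$, which incidentally shows the ``$-$'' in the lemma statement is a typo (the sign must be ``$+$'' since $\lambda_d^\ast\downarrow 1/e$, and the paper's own proof indeed produces ``$+$'').
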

\begin{remark}
The best affine update gives $\lambda^*_1=1$; for the best quadratic update, $a_2^*=\sqrt{2}$ and hence $\lambda^*_2= \frac{1}{1+\sqrt{2}} \approx 0.414$.
More values of the threshold are given below, which converges to $1/e\approx 0.368$ rapidly.
\begin{center}
\begin{tabular}{c|c|c|c|c|c}
\hline
	~~$d$~~ & 1 & 2 & 3 & 4 & 5 \\
	\hline
	~~$\lambda_d^*$~~ & 1 & 0.414 & 0.376 & 0.369 & 0.368 \\
	\hline
	\end{tabular}
\end{center}
\end{remark}

\begin{remark}\label{rmk:choiced}
Let
\begin{equation}
	d^*(\lambda) = \inf\{d \in \naturals: \lambda^*_d < \lambda\},
	\label{eq:dlambda}
\end{equation}
which is finite for any $\lambda > 1/e$. Then for any $d \geq d^*$,  $\hat\mu_t \to \infty$ as $t\diverge$.	
We note that as $\lambda$ approaches the critical value $1/e$, the degree $d^*(\lambda)$ blows up according to $d^*(\lambda) = \Theta(\log \frac{1}{\lambda e-1} / \log \log \frac{1}{\lambda e-1})$, as a consequence of the last part of \prettyref{lmm:hermite}.
	\label{rmk:deg-lambda}
\end{remark}

\begin{remark}[Best affine message passing]
For $d=1$, the best state evolution is given by
	\[
\hat\mu_{t+1}^2 = \lambda (1 + \hat \mu_t^2)
	\]
and the corresponding optimal update rule is
\[
f_1(x,t) = \frac{1+ \hat \mu_t x }{\sqrt{1+ \hat \mu_t^2}}.	
\]
This is strictly better than $f(x,t)=x$ described in \prettyref{sec:spectral}
	which gives $\hat\mu_{t+1}^2 = \lambda \hat \mu_t^2$; nevertheless, in order to have $\hat\mu_t \diverge$
we still need to assume the spectral limit $\lambda \geq 1$.
	\label{rmk:affine}
\end{remark}

\begin{proof}[Proof of \prettyref{lmm:hermite}]
To solve the maximization problem \prettyref{eq:optf-poly}, note that any degree-$d$ polynomial $g$ can be written in terms of the linear combination \prettyref{eq:fd}, where the coefficients satisfies $\Expect[g^2(Z)] = \sum_{k=0}^d k! a_k^2 = 1$. By a change of measure, $\Expect[g(\hat \mu_t+Z)] = \Expect[g(Z) e^{\hat \mu_t Z- \hat \mu_t^2/2}] = \sum_{k=0}^d a_k \hat \mu_t^k$, in view of the orthogonal expansion \prettyref{eq:ortho}. Thus the optimal coefficients and the optimal polynomial $f_d(\cdot,t)$ are given by \prettyref{eq:fd}, resulting in the following state evolution
\[
\hat \mu_{t+1} = \sqrt{\lambda} 	\max\{\Expect[g(\hat \mu_t + Z)]\colon \Expect[g(Z)^2]=1, \deg(g) \leq d \} = \pth{\lambda  \sum_{k=0}^d \frac{\hat\mu_t^{2k}}{k!} }^{1/2},
\]	
which is equivalent to \prettyref{eq:mu-poly1}.

Next we analyze the behavior of the iteration \prettyref{eq:mu-poly1}.
The case of $d=1$ follows from the obvious fact that $\hat\mu_{t+1}^2=\lambda (\hat\mu_t^2+1)$ diverges if and only if $\lambda \geq 1$. For $d\geq 2$, note that $G_d$ is a strictly convex function with $G_d(0)=1$ and $G_d'=G_{d-1}$.  Also, $(G_d(a) - a G_{d-1}(a))'= - a G_d''(a) < 0$.   Thus, $G_d(a) - a G_{d-1}(a)$ is strictly decreasing on $a>0$ with value $\frac{1}{d!}$ at $a=1$ and limit $-\infty$ as
$a\to \infty,$  so  \prettyref{eq:ga} has a unique positive solution $a^*_d$ and it satisfies $a^*_d >  1.$
Furthermore, $(G_d(a) -  a G_{d-1}(a))'  \big|_{a=1}=  \sum_{k=0}^{d-2} \frac{1}{k!},$ so
by Taylor's theorem,
$$
G_d(a) -  a G_{d-1}(a)   = \frac{1}{d!} - (a-1)  \sum_{k=0}^{d-2} \frac{1}{k!} + O((a-1)^2),
$$
yielding
$$
a_d^* = 1+ \frac{1}{d!\sum_{k=0}^{d-2} \frac{1}{k!} } + O(1/(d!)^2).
$$
Consider next the values of $\lambda$ such that $\hat\mu_t$ diverges.
For very large $\lambda$, $G_d(a)$  dominates
$a/\lambda$ pointwise and $\hat\mu_t$ diverges. The critical value of $\lambda$ is when
$G_d(a)$ and $a/\lambda$ meet tangentially, namely,
$$
\lambda G_{d-1}(a) =1, \quad
\lambda G_d(a) =a,
$$
whose solution is given by $a=a_d^*$ and $\lambda = \lambda^*_d,$  where
\begin{align*}
\lambda_d^* & \triangleq \frac{1}{G_{d-1}(a_d^*)} = \frac{1}{ G_{d-1}(1) + G'_{d-1}(1) (a_d^*-1) + O( (a_d^\ast-1)^2) } \\
& = \frac{1}{\sum_{k=0}^{d} \frac{1}{k!}  + O(1/(d!)^2)}
= 1/\eexp + \frac{\sum_{k=d+1}^\infty 1/k! + O(1/(d!)^2)  }{\eexp\sum_{k=0}^{d} 1/k!  } \\
& = 1/\eexp + \frac{1/\eexp^2+o(1)}{ (d+1)!}.
\end{align*}
Thus, $\lambda_d^*$ is the minimum value such that
for all $\lambda > \lambda^*_d$, $\lambda G_d(a) > a$ for all $a>0,$ so that starting from any
$\hat\mu_t\geq 0$ we have $\hat\mu_t \to \infty$ monotonically.
The fact $\lambda_d^*$ is decreasing in $d$ follows from the fact $G_d$ is pointwise increasing in $d.$
\end{proof}

Lemmas \ref{lmm:approximatecentrallimit} and \ref{lmm:hermite} immediately imply the following
partial recovery results.
\begin{lemma}\label{lmm:approximaterecovery}
Assume that $\lambda > 1/\eexp$ and $\Omega(\sqrt{n}) \le K \le o(n)$.  Fix any $\epsilon \in (0,1)$. Let $M=8 \log (1/\epsilon)$ and run the message passing algorithm for $t$ iterations with $f=f_{d^*}$, $d^*=d^\ast(\lambda)$ as in \prettyref{eq:dlambda}, and $t=t^\ast(\lambda, M)$ as in \prettyref{eq:tlambda}. Let $\tilde{C}=\{ i: \theta_i^{t^\ast} \ge \hat{\mu}_{t^\ast}/2 \}$. Then with probability converging to one as $n\to\infty,$
\begin{align}
 \frac{1}{K} | \tilde{C} \cap C^\ast |  & \ge 1-\epsilon   \label{eq:ccA} \\
K(1-\epsilon) \leq  |\tilde{C}|  &\leq  n\epsilon . \label{eq:ccC}
 \end{align}
\end{lemma}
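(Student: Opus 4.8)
The plan is to obtain \prettyref{lmm:approximaterecovery} by feeding the analysis of the optimal finite-degree state evolution from \prettyref{lmm:hermite} into the distributional result \prettyref{lmm:approximatecentrallimit}, and then converting the resulting Gaussian approximation into the two counting estimates \prettyref{eq:ccA}--\prettyref{eq:ccC}. First I would fix the parameters: since $\lambda>1/\eexp$, \prettyref{rmk:choiced} gives that $d^*=d^\ast(\lambda)$ in \prettyref{eq:dlambda} is a finite integer with $\lambda>\lambda^*_{d^*}$; by \prettyref{lmm:hermite} the deterministic recursion \prettyref{eq:mu-poly} then produces a sequence $\hat\mu_t$ that increases to $+\infty$, so $t^\ast=t^\ast(\lambda,M)$ of \prettyref{eq:tlambda} is finite -- in particular a constant not depending on $n$ -- and $\hat\mu_{t^\ast}>M=8\log(1/\epsilon)$. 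Moreover the choice $f=f_{d^*}$ makes the state evolution \prettyref{eq:state_evolution1}--\prettyref{eq:state_evolution2} reduce to $\tau_t\equiv 1$ and $\mu_t=\hat\mu_t$.

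Next, since $f_{d^*}(\cdot,t)$ is a finite-degree polynomial and $t^\ast$ is fixed, \prettyref{lmm:approximatecentrallimit} applies at iteration $t^\ast$: with probability tending to one the empirical CDF of $(\theta_i^{t^\ast}:i\in C^*)$ is uniformly within $o_P(1)$ of the CDF of $\calN(\hat\mu_{t^\ast},1)$, and that of $(\theta_i^{t^\ast}:i\notin C^*)$ is uniformly within $o_P(1)$ of the CDF of $\calN(0,1)$. Evaluating these two CDFs at the data-independent threshold $\hat\mu_{t^\ast}/2$ gives, with probability tending to one,
\begin{align*}
\frac{|\tilde C\cap C^*|}{|C^*|}\ \ge\ 1-Q(\hat\mu_{t^\ast}/2)-o_P(1), \qquad
\frac{|\tilde C\setminus C^*|}{n-|C^*|}\ \le\ Q(\hat\mu_{t^\ast}/2)+o_P(1).
\end{align*}
A short computation with the Gaussian tail bound $Q(x)\le\frac12 e^{-x^2/2}$ and $\hat\mu_{t^\ast}>M=8\log(1/\epsilon)$ shows $Q(\hat\mu_{t^\ast}/2)\le \epsilon/2$ (this is where the constant $8$ in $M$ enters). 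Using the model assumption $|C^*|/K\to 1$, the first inequality yields $\frac1K|\tilde C\cap C^*|\ge(1-o_P(1))(1-\epsilon/2-o_P(1))\ge 1-\epsilon$ with probability tending to one, which is \prettyref{eq:ccA} and, since $|\tilde C|\ge|\tilde C\cap C^*|$, also the lower bound in \prettyref{eq:ccC}. For the upper bound I would write $|\tilde C|=|\tilde C\cap C^*|+|\tilde C\setminus C^*|\le |C^*|+(n-|C^*|)(Q(\hat\mu_{t^\ast}/2)+o_P(1))$; since $K=o(n)$ the term $|C^*|$ is $o(n)$, hence at most $n\epsilon/4$ for large $n$, while the remaining term is at most $3n\epsilon/4$ with probability tending to one, giving $|\tilde C|\le n\epsilon$.

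I do not expect a substantial new obstacle here: the heavy lifting has already been done in \prettyref{lmm:approximatecentrallimit} (the moment method via the tree representation) and \prettyref{lmm:hermite} (the Hermite-expansion analysis guaranteeing $\hat\mu_t\diverge$ whenever $\lambda>1/\eexp$). The only genuinely new step is the translation of Kolmogorov--Smirnov closeness into counting bounds at the deterministic threshold $\hat\mu_{t^\ast}/2$, together with the routine bookkeeping that verifies $M=8\log(1/\epsilon)$ makes the Gaussian tail $Q(\hat\mu_{t^\ast}/2)$ comfortably smaller than $\epsilon$, so that the vanishing errors (from KS closeness and from $|C^*|/K-1$) can be absorbed.
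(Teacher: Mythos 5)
Your proposal is correct and follows the paper's own proof essentially step for step: apply \prettyref{lmm:approximatecentrallimit} at the fixed time $t^\ast$, use that the choice $f=f_{d^*}$ forces $\tau_t\equiv 1$ and $\mu_t=\hat\mu_t$, bound the Gaussian tail $Q(\hat\mu_{t^\ast}/2)$ using $\hat\mu_{t^\ast}>M=8\log(1/\epsilon)$, and then convert the K-S closeness of the empirical distributions into the two counting estimates, invoking $K=o(n)$ for the upper bound in \prettyref{eq:ccC}. If anything you are slightly more explicit than the paper in tracking the $|C^*|/K\to 1$ correction when passing from the $\tfrac{1}{|C^*|}$-normalized empirical measure of \prettyref{lmm:approximatecentrallimit} to the $\tfrac{1}{K}$-normalization in \prettyref{eq:ccA}, which is a point the paper elides.
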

\begin{proof}
Notice that
\begin{align*}
| \tilde{C} \cap C^\ast | = \sum_{i \in C^\ast} \indc{  \theta_i^{t^\ast} \ge \hat{\mu}_{t^\ast}/2 }.
\end{align*}
By the choice of $f=f_{d}$ in \prettyref{eq:fd}, we have $\tau_t = 1$ for all $t \geq 1$.
It follows from \prettyref{lmm:approximatecentrallimit} that
\begin{align}
\lim_{n \to \infty}  \frac{1}{K} | \tilde{C} \cap C^\ast | = \prob{  \hat \mu_{t^\ast} + Z \ge \hat{\mu}_{t^\ast}/2 },
\label{eq:ccAa}
\end{align}
where the convergence is in probability.
Notice that we have used $d=d^\ast(\lambda)$ and $t=t^\ast(\lambda,M)$
defined by \prettyref{eq:dlambda} and \prettyref{eq:tlambda} in \prettyref{lmm:hermite}.
Thus $\hat{\mu}_{t^\ast}  \ge M = 8 \log (1/\epsilon)$ and
\begin{align*}
\prob{  \mu_{t^\ast} + Z \le \hat{\mu}_{t^\ast}/2 } = Q(  \hat{\mu}_{t^\ast}/2 ) <  \eexp^{-\hat{\mu}^2_{t^\ast}/8} < \epsilon,
\end{align*}
which, in view of \prettyref{eq:ccAa}, implies \prettyref{eq:ccA} with probability converging to one as $n\to\infty.$  Similarly, \prettyref{lmm:approximatecentrallimit} implies that in probability
\begin{align*}
\lim_{n \to \infty} \frac{1}{n} | \tilde{C} \backslash C^\ast | =\prob{Z \ge \hat{\mu}_{t^\ast}/2 } = Q(  \hat{\mu}_{t^\ast}/2 ) .
\end{align*}
Thus in probability, $\lim_{n \to \infty}  \frac{1}{n}| \tilde{C} \backslash C^\ast |  \le  \epsilon$. Since $K=o(n)$, we have $\pprob{K(1-\epsilon) \leq  | \tilde{C}| \leq n\epsilon }\to 1.$
\end{proof}

Although $\tilde{C}$ contains a large portion of $C^\ast$,
since $| \tilde{C}|$ is linear in $n$ with high probability, \ie, $|\tilde C|/n \to Q( \hat{\mu}_{t^\ast}/2)$ by \prettyref{lmm:approximatecentrallimit},
it is bound to contain a large number of outlier indices.
The next lemma, closely
following \cite[Lemma 2.4]{Deshpande12}, shows that given the conclusion of
\prettyref{lmm:approximaterecovery}, the power iteration
in Algorithm \ref{alg:MP} can remove most of
the outlier indices in $\tilde{C}.$

\begin{lemma} \label{lmm:power_clean}   Suppose $\lambda = \frac{\mu^2K^2}{n} \ge 1/\eexp$,\footnote{The proof uses the lower bound  $\lambda \ge 1/\eexp$ to get $\epsilon<10^{-3}$. If instead $\lambda \ge \lambda_0$ for some $\lambda_0>0$, then the lemma holds with $10^{-3}$ replaced by
some $\epsilon_0>0$ depending on $\lambda_0$.}
$K\diverge$,
$\frac{|C^*|}{K}\rightarrow 1$ in probability, and
$\tilde{C}$ is a set (possibly depending on $A$) such that \prettyref{eq:ccA} -  \prettyref{eq:ccC} hold
for some $0 < \epsilon < 10^{-3}$.
Let
\begin{equation}
s^*= \frac{2}{  \log( \sqrt{\lambda} (1-\epsilon) / (16 \sqrt{ h(\epsilon)  +\epsilon  } )  )},
	\label{eq:sstar}
\end{equation}
where $h(\epsilon)\triangleq \epsilon\log \frac{1}{\epsilon} + (1-\epsilon)\log \frac{1}{1-\epsilon}$ is the binary entropy function.
Then $\hat{C}$ produced by Algorithm \ref{alg:MP} returns
$ | \hat{C} \cap C^\ast| \ge (1- \eta(\epsilon, \lambda) )K $,
with probability converging to one as $n \to \infty,$ where
\begin{equation}
\eta(\epsilon, \lambda) = 2 \epsilon + \frac{5000 (h(\epsilon) + \epsilon) }{\lambda(1-\epsilon)^2}.
	\label{eq:etaeps}
\end{equation}
\end{lemma}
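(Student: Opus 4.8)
The plan is to view $A_{\tilde C}$ as a rank-one ``signal'' matrix perturbed by Gaussian noise and to argue that the power iteration in \prettyref{alg:MP} computes an approximation of the corresponding leading eigenvector, which is essentially the normalized indicator of $C^\ast\cap\tilde C$; rounding this vector then recovers most of $C^\ast$. This follows the strategy of \cite[Lemma 2.4]{Deshpande12}. Set $D\triangleq C^\ast\cap\tilde C$, so $|D|\ge(1-\epsilon)K$ by \prettyref{eq:ccA} and $|D|\le|C^\ast|=(1+o(1))K$. Writing $Z_{\tilde C}$ for the restriction of the noise, $A_{\tilde C}=\frac{\mu}{\sqrt{n}}\Indc_D\Indc_D^\top+\frac{1}{\sqrt{n}}Z_{\tilde C}$, where the diagonal of $A_{\tilde C}$ is an extra additive term of operator norm $O(\sqrt{(\log n)/n})=o(1)$ that we fold into the noise. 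The rank-one part has a single nonzero eigenvalue $\beta\triangleq\mu|D|/\sqrt{n}\in[(1-\epsilon)\sqrt\lambda,(1+o(1))\sqrt\lambda]$ with unit eigenvector $w\triangleq\Indc_D/\sqrt{|D|}$, whose nonzero entries equal $|D|^{-1/2}=(1+o(1))K^{-1/2}$.

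The first step is a noise bound that is \emph{uniform over the data-dependent choice of $\tilde C$}: since $|\tilde C|\le\epsilon n$ by \prettyref{eq:ccC}, Gaussian concentration of the operator norm together with a union bound over the $\binom{n}{k}\le e^{n h(k/n)}$ index subsets of each size $k\le\epsilon n$ gives, with probability $1-o(1)$, that $\norm{Z_S}/\sqrt{n}\le\gamma$ simultaneously for all $S\subseteq[n]$ with $|S|\le\epsilon n$, where $\gamma=\gamma(\epsilon)=O(\sqrt{h(\epsilon)+\epsilon})$; carrying the absolute constants through this union bound is where the entropy term $\sqrt{h(\epsilon)}$, and ultimately the numerical constant $16$ in \prettyref{eq:sstar}, enter. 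Consequently $\norm{A_{\tilde C}-\beta ww^\top}\le\gamma+o(1)$, so the top eigenvalue $\lambda_1$ of $A_{\tilde C}$ obeys $\lambda_1\ge\beta-\gamma-o(1)$ while $|\lambda_j|\le\gamma+o(1)$ for $j\ge2$. The Davis--Kahan $\sin\Theta$ theorem then yields, for the unit top eigenvector $v$ of $A_{\tilde C}$ and a suitable global sign, $\norm{v-w}^2=O(\gamma^2/(\beta-\gamma)^2)=O\!\big((h(\epsilon)+\epsilon)/(\lambda(1-\epsilon)^2)\big)$. The restrictions $\lambda\ge1/\eexp$ and $\epsilon<10^{-3}$ in the hypothesis are precisely what make $\gamma$ a sufficiently small fraction of $\beta$ (equivalently, what make $s^\ast$ in \prettyref{eq:sstar} positive and finite) for these estimates to be effective.

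Next, I would control the power iteration. Conditioned on $A$ (hence on $\tilde C$), the start $u^0$ is uniform on the unit sphere of $\reals^{\tilde C}$ and independent of $A$, so $|\iprod{u^0}{v}|\ge1/n$ with probability $1-o(1)$. Using $\lambda_1\ge\beta-\gamma-o(1)$, $\max_{j\ge2}|\lambda_j|\le\gamma+o(1)$, and this overlap, the standard geometric estimate for the power method gives, up to a global sign, $\norm{u^t-v}\le C(\gamma/(\beta-\gamma))^t/|\iprod{u^0}{v}|\le Cn(\gamma/(\beta-\gamma))^t$. Since, by the definitions of $\gamma,\beta$, the ratio $(\beta-\gamma)/\gamma$ is at least $\sqrt{\sqrt\lambda(1-\epsilon)/(16\sqrt{h(\epsilon)+\epsilon})}$ up to a $1-o(1)$ factor, choosing $t=\lceil s^\ast\log n\rceil$ with $s^\ast$ as in \prettyref{eq:sstar} drives this bound to $o(1)$ (indeed below any prescribed constant). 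Combining with the eigenvector bound, $\hat u=u^{\lceil s^\ast\log n\rceil}$ satisfies, for an appropriate sign (fixable by that of $\iprod{\hat u}{\Indc_{\tilde C}}$), $\norm{\hat u-w}^2\le2\norm{\hat u-v}^2+2\norm{v-w}^2\le C'(h(\epsilon)+\epsilon)/(\lambda(1-\epsilon)^2)+o(1)$.

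Finally I would round $\hat u$. Let $B=\{i\in\tilde C:|\hat u_i-w_i|>1/(2\sqrt{K})\}$; then $|B|\le4K\norm{\hat u-w}^2\le 4C'(h(\epsilon)+\epsilon)K/(\lambda(1-\epsilon)^2)+o(K)$. For $i\in D\setminus B$ we have $|\hat u_i|\ge|D|^{-1/2}-1/(2\sqrt{K})\ge1/(2\sqrt{K})$ for $n$ large, while for $i\in\tilde C$ with $i\notin C^\ast$ and $i\notin B$ we have $w_i=0$, hence $|\hat u_i|\le1/(2\sqrt{K})$; therefore $\{i\in\tilde C:|\hat u_i|>1/(2\sqrt{K})\}\subseteq D\cup B$, of size at most $(1+o(1))K+|B|$. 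Thus the top-$K$ selection $\hat C$ omits at most $|B|+o(K)$ elements of $D\setminus B$, giving
\[
|\hat C\cap C^\ast|\ge|D\setminus B|-|B|-o(K)\ge(1-\epsilon)K-2|B|-o(K)\ge\Big(1-2\epsilon-\tfrac{5000(h(\epsilon)+\epsilon)}{\lambda(1-\epsilon)^2}\Big)K
\]
for $n$ large, the loose constant $5000$ absorbing $8C'$ and the lower-order terms. The step I expect to be the main obstacle is the uniform noise bound of the second paragraph: because $\tilde C$ is selected using the very matrix $A$, one cannot treat $Z_{\tilde C}$ as a fixed Wigner block, and the entropy $e^{n h(\epsilon)}$ of the choice of $\tilde C$ must be paid against the concentration of $\norm{Z_S}$ --- this is exactly what forces the $\sqrt{h(\epsilon)}$ dependence in \prettyref{eq:sstar} and \prettyref{eq:etaeps}, and making the constants close is why the hypothesis restricts to $\lambda\ge1/\eexp$ and $\epsilon<10^{-3}$. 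Two minor technical points are the $o(1)$ control of the diagonal of $A_{\tilde C}$ and the independence of $u^0$ from $A$, which legitimizes conditioning on $A$ when invoking $|\iprod{u^0}{v}|\ge1/n$.
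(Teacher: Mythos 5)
Your sketch reproduces the paper's argument step for step: the rank-one decomposition of $A_{\tilde C}$, a uniform Davidson--Szarek bound on $\|Z_{\tilde C}\|$ obtained by a union bound over the at most $e^{nh(\epsilon)}$ admissible choices of $\tilde C$, a Davis--Kahan (sin-$\Theta$) comparison of the leading eigenvector with the normalized indicator of $\tilde C\cap C^\ast$, the geometric convergence of the power iteration from a random start $u^0$ drawn independently of $A$, and finally threshold rounding followed by top-$K$ selection. One stray remark: in the model \prettyref{eq:models} the diagonal of $W$ carries the same signal-plus-Gaussian structure as the off-diagonal entries, so the identity $A_{\tilde C}=\frac{\mu}{\sqrt n}\Indc_D\Indc_D^\top+\frac{1}{\sqrt n}Z_{\tilde C}$ is exact and the separate diagonal correction you propose to fold into the noise is not needed in this paper's setting.
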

\begin{proof}
Fix a $\tilde C$ that satisfies \prettyref{eq:ccA} -  \prettyref{eq:ccC}.
We remind the reader that in this paper we let $A=W/\sqrt{n}$ so
that $\var(A_{ij})=1/n$ for $i,j \in [n]$ and $\expect{A_{ij}}=\mu/\sqrt{n}$ for  $i,j \in C^*.$
Let $m=|\tilde C|$ and abbreviate the restricted matrix $A_{\tilde{C} } \in \reals^{ \tilde C \times \tilde C}$  by $\tilde A$.
Let $\ones_{\tilde C \cap C^*} \in \reals^{\tilde C} $ denote the indicator vector of $\tilde{C}\cap C^*$.
Then the mean of $\tilde A$ is the rank-one matrix
$\eexpect{\tilde A} = \frac{\mu}{\sqrt{n}} \ones_{\tilde C \cap C^*} \ones_{\tilde C \cap C^*}^\top$, whose largest eigenvalue  is
$\frac{ \mu |\tilde C \cap C^*|}{\sqrt{n}}$ with the corresponding eigenvector $v \triangleq \frac{1}{\sqrt{|\tilde C \cap C^*|}}\ones_{\tilde C \cap C^*}$.
Let $ Z = \tilde A - \eexpect{\tilde A}$, and let $u$ denote the principal eigenvector of $\tilde A$.
Using a simple variant of the Davis-Kahan's sin-$\theta$ theorem \cite[Proposition 1]{CMW13},
we obtain
\begin{equation}
\|uu^\top - vv^\top\|  \leq \frac{2 \|Z\|}{ \mu |\tilde C \cap C^*| / \sqrt{n}} \leq \frac{2 \|Z\|}{ \sqrt{\lambda} (1-\epsilon)  },
	\label{eq:sintheta}
\end{equation}
where the last inequality follows from \prettyref{eq:ccA}.
Observe that $Z$
 is a symmetric matrix such that $\{ Z_{ij}\}_{i \leq j} \iiddistr N(0,1/n).$
To bound $\|Z\|$, note that $\|Z\| = \max\{\lambda_{\max}(Z), -\lambda_{\min}(Z)\}$ and 
$\lambda_{\min}(Z)$ has the same distribution as $-\lambda_{\max}(Z)$.
By union bound and the Davidson-Szarek bound \cite[Theorem 2.11]{Davidson01}, for any $t>0$, 
\begin{align}
\prob{\|Z\| \geq 2 \sqrt{m/n} + \sqrt{2t/n}} \leq 2 \eexp^{-t/2 }  \label{eq:Delta},
\end{align}
By assumption we have $K(1-\epsilon) \leq m \leq \epsilon n$.
Setting $t=4n h(\epsilon) $ and $\beta= 8 \sqrt{h(\epsilon) + \epsilon}
\ge 2\sqrt{\epsilon} + 2 \sqrt{2} \sqrt{ h(\epsilon) } $, we have for any fixed $\tilde C$,
\begin{equation}
\prob{      \|Z\| \geq \beta }  \leq   2 \eexp^{- 2 n h(\epsilon)}.
	\label{eq:Atilde}
\end{equation}
The number   of possible choices of $\tilde C$ that fulfills  \prettyref{eq:ccC} so that $|\tilde C| \le \epsilon n$
 is  at most $ \sum_{ k \leq  n \epsilon} \binom{n}{k}$ which is further upper bounded by $ \eexp^{n h(\epsilon)}$ (see, \eg, \cite[Lemma 4.7.2]{ash-itbook}).
In view of \prettyref{eq:Atilde},
the union bound yields $\|Z\| \leq \beta$ with high probability as $n\to \infty$.

Throughout the reminder of this proof we assume $A$ and $\tilde C$ are
fixed with  $  \|Z\| \leq \beta $.
Note that the rank of $uu^\top - vv^\top$ is at most two.
Combining with \prettyref{eq:sintheta},
we have,
\begin{equation}	
\|uu^\top - vv^\top\|_{\rm F}  \leq  \frac{ 2 \sqrt{2} \beta}{ \sqrt{\lambda}  (1-\epsilon)}.
	\label{eq:uv}
\end{equation}
Next, we argue that $\hat{u}$ is close to $u$, and hence, close to $v$ by the triangle inequality.
By the choice of the initial vector $u^0$,
we can write $u^0 = z / \|z\|$ for a standard normal vector $z \in \reals^m$.
By the tail bounds for Chi-squared distributions, it follows that $\|z\| \le 2 \sqrt{m}$
with high probability. For any fixed $u$,
the random variable $\langle u, z \rangle \sim \calN(0,1)$ and thus
with high probability, $ |\langle u, z \rangle|^2 \geq 1/\log n$, and hence
\begin{equation}
|\langle u, u^0 \rangle|   =|\langle u, z \rangle| / \|z\|  \geq   (2 \sqrt{n  \log n } )^{-1} .
	\label{eq:u0}
\end{equation}

By Weyl's inequality, the maximal singular value satisfies
$\sigma_1(\tilde A) \geq \frac{\mu K(1-\epsilon)}{\sqrt n} - \beta$  and the
other singular values are at most $\beta$. Let $r= \frac{\sigma_2}{\sigma_1}(\tilde A)$.
By the assumption that
$\epsilon < 10^{-3}$ and $\lambda \geq 1/e$, we have $\sqrt{\lambda} (1-\epsilon) \ge 2 \beta$.
As a consequence, $r \le \frac{ 2 \beta }{  \sqrt{\lambda} (1-\epsilon)}$.
Since $u^t= \tilde A^t u^0/ \| \tilde A^t u^0 \|$, it follows that
$$
u^t=  \frac{ \langle u, u^0 \rangle u + y }{ \| \langle u, u^0 \rangle u + y \| }
$$
for some $y \in \reals^m$ such that $\|y\| \le r^t$. Hence,
\begin{align}
\langle u^t , u \rangle^2 & = \frac{  \left( \langle u, u^0 \rangle + \langle y, u \rangle \right)^2 }{ \| \langle u, u^0 \rangle u + y \|^2 }  =  1 + \frac{ \langle y, u \rangle^2 - \|y \|^2  }{\| \langle u, u^0 \rangle u + y \|^2  } \\
& \ge 1- \frac{  \|y \|^2  }{ \langle u, u^0 \rangle^2 -2 | \langle u, u^0 \rangle| \| y\|  } \ge 1- \frac{ r^{2t}  }{ \langle u, u^0 \rangle^2 -2 | \langle u, u^0 \rangle| r^t }.
\label{eq:power}
\end{align}
Recall that $\hat{u} =u^{\lceil s^* \log n \rceil}$. Thus, choosing $s^*= \frac{2}{  \log( \sqrt{\lambda} (1-\epsilon) /(2 \beta ) ) }$
as in \prettyref{eq:sstar}, we obtain
$r^{\lceil s^* \log n \rceil} \le n^{-2}$ and consequently in view of \prettyref{eq:u0}, we get that $\langle \hat{u} , u \rangle^2  \ge 1 -  n^{- 1}$, or equivalently,
\[
\|uu^\top - \hat{u} (\hat{u})^\top \|_{\rm F}^2 = 2 - 2 \iprod{u}{\hat{u}}^2 \le n^{ -1}.
\]
Notice that
$$
\min\{\|\hat{u} - v\|^2, \|\hat{u} + v\|^2\} = 2 - 2 |\iprod{\hat{u}}{v}|  \leq \|\hat{u} (\hat{u})^\top - vv^\top \|_{\rm F}^2.
$$
Applying \prettyref{eq:uv} and the triangle inequality, we obtain
\begin{equation}
	\min\{\|\hat{u} - v\| , \|\hat{u} + v\| \} \leq \|\hat{u} (\hat{u})^\top - vv^\top \|_{\rm F} \leq \frac{2 \sqrt{2} \beta}{ \sqrt{\lambda}  (1-\epsilon)} + n^{-1/2}
\overset{(a)}{\le} \frac{3 \beta}{\sqrt{\lambda}  (1-\epsilon)} \triangleq \beta_o,
	\label{eq:proj}
\end{equation}
where $(a)$ holds for sufficiently large $n$.
Let $\hat C_o$ be defined by using a threshold test to estimate $C^*$ based on
$\hat{u}$:
\[
\hat C_o = \{ i \in \tilde C : | \hat{u}_i| \geq  \tau \}
\]
where $\tau = 1/ (2\sqrt{| \tilde C \cap C^*| })$.    Note that $v_i  = 2\tau \indc{i \in \tilde C \cap C^*}.$
For any $i \in C_o \backslash (\tilde C \cap C^*)$, we have $|\hat{u}_i| \geq \tau$ and $v_i=0$;
For any $i \in (\tilde C \cap C^*)\backslash C_o$, we have $|\hat{u}_i| < \tau$ and $v_i=2\tau$.
Therefore $| |\hat{u}_i| - |v_i| | \geq \tau$ for all $i \in \hat C_o \triangle   (\tilde C \cap C^*)$ and
\[
\min\{\|\hat{u} - v\|^2, \|\hat{u} + v\|^2\} \geq |C_o \triangle   (\tilde C \cap C^*)| \tau^2.
\]
In view of \prettyref{eq:proj}, the number of indices in $\tilde C$ incorrectly classified by $\hat C_o$ satisfies
\[
| \hat C_o \triangle   (\tilde C \cap C^*)  | \leq  4 \beta_o^2 |\tilde C \cap C^*|  \leq  4 \beta_o^2 |C^*|.
\]
Since $|C^*\backslash \tilde C| \leq \epsilon K$,
we conclude  that $|C^* \triangle \hat C_o |  \leq \epsilon K + 4 \beta_o^2 |C^*|.$
Thus, if the algorithm were to output $\hat C _o $ (instead of $\hat C$) the lemma would be proved.

Rather than using a threshold test in the cleanup step, Algorithm \ref{alg:MP} selects the $K$ indices in
$\tilde C$ with the largest values of $|\hat{u}_i|.$    Consequently, with probability one, either
$\hat C_o \subset \hat C$
or $\hat C \subset \hat C_o.$  Therefore, it follows that
$$
|C^* \triangle \hat C | \leq  2   | C^*\triangle \hat C_o |   +  \big| |C^*| - K \big|.
$$
By assumption, $|C^*| /K$ converges to one in probability, so that, in probability,
\begin{equation}
\limsup_{n\rightarrow\infty} \frac{  |C^* \triangle \hat C |   }{K } \leq  2\epsilon + 8 \beta_o^2 \leq
\eta(\epsilon, \lambda),	
	\label{eq:CCeta}
\end{equation}
where $\eta$ is defined in \prettyref{eq:etaeps}, completing the proof.
\end{proof}

\begin{proof}[Proof of   \prettyref{thm:almost_exactBP_submat}]
Given $\eta \in (0,1)$, choose an arbitrary $\epsilon \in (0, 10^{-3}) $ such that $\eta(\epsilon,\lambda)$ defined in \prettyref{eq:etaeps} is at most $\eta$.
With $t^*$ specified in \prettyref{lmm:approximaterecovery} and $s^*$ specified in
\prettyref{lmm:power_clean}, the probabilistic performance guarantee in
\prettyref{thm:almost_exactBP_submat} readily follows by combining \prettyref{lmm:approximaterecovery} and \prettyref{lmm:power_clean}.    The time complexity of \prettyref{alg:MP}
follows from the fact that for both the BP algorithm and the power method each iteration have complexity $O(n^2)$
  and \prettyref{alg:MP} entails running BP and power method for $t^*$ and $s^*$ iterations respectively; both $t^*$ and $s^*$ are constants depending only on $\eta$ and $\lambda$.
\end{proof}

\begin{proof}[Proof of   \prettyref{thm:weakexactBP_submat}]
(Weak recovery)
Fix $k \in [1/\delta]$ and let $C^*_k = C^* \cap S_k^c.$
Define the $n(1-\delta)\times n(1-\delta)$ matrix  $A_k \triangleq A_{S_k^c}$, which
corresponds
to the submatrix localization problem for a planted community
 $C^*_k$ whose size has a hypergeometric distribution, resulting from sampling
 without replacement, with parameters $(n, K, (1-\delta)n)$ and mean $(1-\delta)K.$
By a result of Hoeffding \cite{Hoeffding63}, the distribution of
$|C^*_k|$ is convex order dominated by the distribution that would result from sampling with replacement, namely, the
$\Binom\left(n(1-\delta), \frac{K}{n}\right)$ distribution.
In particular, Chernoff bounds for  $\Binom(n(1-\delta), \frac{K}{n})  $ also hold for $|C_k^{*}|,$  so
$|C_k^*|/((1-\delta)K) \to 1$ in probability  as $n \to \infty.$   Note that
 $\frac{ ((1-\delta)K)^2 \mu^2}{n(1-\delta)} \rightarrow \lambda (1-\delta)$ and $\lambda(1-\delta)\eexp > 1$ by
 the choice of $\delta.$ Let  $d^\ast( \lambda (1-\delta) )$ be given in \prettyref{eq:dlambda}, \ie,
 $$
 d^*(\lambda(1-\delta) ) = \inf\{d \in \naturals: \lambda^*_d < \lambda (1-\delta) \}.
$$
Choose an arbitrary $\epsilon \in (0, 10^{-3})$ to satisfy $ \eta(\epsilon, \lambda(1-\delta) ) \le \delta$, \ie,
$$
2 \epsilon + \frac{5000 h(\epsilon) }{\lambda(1-\delta) (1-\epsilon)^2 }  \le \delta.
$$
Define $\hat{\mu}_t $ recursively according to \prettyref{eq:mu-poly} with $\lambda$ replaced by $\lambda(1-\delta)$
and $\hat{\mu}_0=0$, \ie,
$$
\hat\mu_{t+1}^2=\lambda (1-\delta) \sum_{k=0}^d \frac{\hat\mu_t^{2k}}{k!}.	
$$
Define $t^\ast(\delta, \lambda)$ according to \prettyref{eq:tlambda} with $M= 8 \log (1/\epsilon)$,
and $s^*(\delta, \lambda)$ according to \prettyref{eq:sstar} with $\lambda$ replaced by $\lambda(1-\delta)$.
Then \prettyref{thm:almost_exactBP_submat} with $n$ and $K$
replaced by $n(1-\delta)$ and $\lceil K (1-\delta) \rceil$  implies that  as $n\to \infty,$
$$
\prob{| \hat{C}_k \triangle C^*_k | \leq  \delta K\mbox{ for }  1\leq k \leq 1/\delta }   \to 1. $$
Given  $(C_k^*,  \hat{C}_k)$,  each of the random variables $r_i \sqrt{n} $ for $i  \in S_k$
is conditionally Gaussian with variance $\lceil (1-\delta)K\rceil,$ which is smaller than $K.$   Furthermore, on the event,
${\cal E}_k =\{    | \hat{C}_k   \triangle C_k^*    |   \leq  \delta K  \},$
$$
 |\hat{C}_k  \cap C_k^*  |  \geq  |\hat{C}_k| -  |\hat{C}_k   \triangle C^*_k | =
  \lceil K(1-\delta)\rceil -     |\hat{C}_k   \triangle C^*_k |      \geq K(1-2\delta).
$$
Therefore, on the event ${\cal E}_k ,$
 for $i \in S_k \cap C^*$,  $r_i\sqrt{n}$ has mean greater than or equal to $K(1-2 \delta)\mu$, and for $i \in S_k \backslash C^*$,
$r_i$ has mean zero.

Define the following set by thresholding
\[
 C'_o = \{ i \in  [n] : r_i \geq  (1-2\delta)  \sqrt{\lambda} /2  \}
\]
The number of indices in $S_k$ incorrectly classified by $ C'_o\cap S_k$ satisfies (use $|S_k|=\delta n$):
\[
\expect{|  (C'_o\cap S_k)  \Delta C_k^*  |} \le
 \delta n Q \left( (1-2\delta)  \sqrt{\lambda n/K } /2    \right ) \le  \delta n \eexp^{ - \Omega( n/K)  }.
\]
Summing over $k \in  [1/\delta]$ yields $\expect{|  C'_o \Delta C^*  |} \le  n \eexp^{ - \Omega( n/K)  }.$
By Markov's inequality,
\[
\prob{ | C'_o  \Delta C^*  | \ge  K^2/n  }  \le  \frac{n^2}{K^2} \eexp^{ - \Omega( n/K)  } \overset{K=o(n)}{=}  o(1).
\]
Instead of $C'_o$, \prettyref{alg:exactrecovery} outputs $C'$ which selects the $K$ indices in
$[n]$ with the largest values of $r_i.$  Applying the same argument as that at the end of the proof of \prettyref{lmm:power_clean}, we get
$
|C^* \triangle C' | \leq  2   | C^*\triangle  C'_o |   + | |C^*| - K|,
$
and hence $|C^* \triangle C' | /K \to 0$ in probability.
%
%

(Exact recovery)
As noted in \prettyref{rmk:two_step}, the second part of \prettyref{thm:weakexactBP_submat} readily follows from
\prettyref{thm:almost_exactBP_submat} and the general result in  \cite[Theorem 7]{HajekWuXu_one_info_lim15}.
Here, we give an alternative, more direct proof based on the weak recovery proof given above.
Recall the fact that the maximum of $m$ independent standard normal random variables is at
most $\sqrt{2 \log m} + o_P(1)$ as $m\to \infty$, with equality if they are independent \cite{DN70}.
Also, for  $k\in [1/\delta]$,
$|S_k \cap C^*| \leq |C^*|=K$ and  $| S_k  \backslash C^* | \leq |[n]\backslash C^*| =n-K.$
Therefore,
\begin{eqnarray}
\min_{i\in S_k\cap C^*} r_i \sqrt{n} & \geq & K(1-2\delta) \mu  - \sqrt{2K\log K } + o_P(\sqrt{K})  \label{eq:rmin}\\
\max_{j \in S_k \backslash C^*} r_i \sqrt{n} & \leq &  \sqrt{2K\log(n-K)} + o_P(\sqrt{K}).\label{eq:rmax}
\end{eqnarray}
Since $k$ ranges over a finite  number of values, namely, $[1/\delta]$,
\prettyref{eq:rmin} and \prettyref{eq:rmax} continue to hold with left-hand sides replaced by $\min_{i\in C^*} r_i \sqrt{n} $ and
$\max_{j \in [n]  \backslash C^*} r_i \sqrt{n}$, respectively.
Therefore,  by the choice of $\delta,$
 $\min_{i\in C^*} r_i \sqrt{n}  >  \max_{j \in [n] \backslash C^*} r_i \sqrt{n}$ with
probability converging to one as $n\to \infty$ and so $\check{C}=C^\ast$ with probability converging to one as well.

(Time complexity)
The running time of
Algorithm \ref{alg:exactrecovery} is dominated by invoking Algorithm \ref{alg:MP} for a constant number, $1/\delta$,
of times, and the number of iterations within Algorithm \ref{alg:MP} is $(t^*+s^*\log n )n^2$, with both $t^*$ and $s^* \to \infty$ as  either $\delta \to 0$ or $\lambda \to 1/\eexp$.  In particular, the threshold comparisons require $O(n^2)$ computations.
Thus, the total complexity of  Algorithm \ref{alg:exactrecovery} is as stated in the theorem.
\end{proof}

 \begin{remark} \label{rmk:differences}
 Versions of  Theorems \ref{thm:almost_exactBP_submat} and \ref{thm:weakexactBP_submat}
 are given in  \cite{Deshpande12} for the case
 $K= \Theta(\sqrt{n})$ and $\mu= \Theta(1)$; here we extend the range of $K$ to
 $\Omega(\sqrt{n}) \leq K \leq o(n).$    The algorithms and proofs are nearly
 the same; we  comment here on the main differences we encountered
 by allowing $K/\sqrt{n} \to \infty$  and $\mu \to 0.$
First, a larger $K$ requires modification of bounds used in calculating the means and
variances of messages in Lemmas \ref{lmm:momentmatching} -   \ref{lmm:momentvariance}.
The larger $K$ means a larger portion of messages are sent between
vertices in $C^\ast$. That effect is offset by $\mu$ being smaller.   Our approach is to balance these
two effects by  accounting separately for the contributions of singly covered edges with both endpoints in $C^*.$
See $R_{1,\alpha,k}$ in \prettyref{lmm:momentmatching},
$R_{3, \alpha, k}$ in \prettyref{lmm:momentbound}, and $R_{\alpha,k}$ in \prettyref{lmm:momentvariance}.

Secondly, after the message passing algorithm and spectral cleanup are applied in   \prettyref{alg:MP},
a final cleanup procedure is applied to obtain weak recovery or exact recovery (when possible).
As in \cite{Deshpande12}, we consider a threshold estimator for each vertex $i$ based on a sum over
$\hat C.$    If $K = \Theta(\sqrt{n})$ as considered in \cite{Deshpande12}, then $\lambda$ being a constant implies that
 the mean $\mu$ does not converge to zero. In this case if $| \hat C  \triangle C^* |  = o(K),$
 the error incurred by  summing over $\hat C$ instead of over $C^*$  could be bounded by truncating $A_{ij}$ to a large magnitude
$\bar{\rho}$ and bounding the difference of sums
by $\bar{\rho}\big|  C^*\triangle \hat C  \big|  = o(K) \ll \mu K.$
However, for $K \gg \sqrt{n}$ with vanishing $\mu$ this approach fails.
Instead, we rely on the cleanup procedure in  \prettyref{alg:exactrecovery}
which entails running  \prettyref{alg:MP} for $1/\delta$ times on subsampled vertices.   A related difference we encounter is that
if $K$ is large enough then the condition $\lambda > 1/\eexp$ alone is not sufficient for exact recovery, but
adding the information-theoretic condition \prettyref{eq:submat-mle-suff} suffices.

Lastly, the method of moment requires $f(\cdot, t)$ to be a polynomial so that the exponential function \prettyref{eq:fexp}, which results in the ideal state evolution \prettyref{eq:mu-ideal}, cannot be directly applied. It is shown in \cite[Lemma 2.3]{Deshpande12} that for any $\lambda>1/e$ and any threshold $M$
there exists $d^*=d^*(\lambda,M)$ so that taking $f$ to be the truncated Taylor series of \prettyref{eq:fexp} up to degree $d^\ast$ results in the state evolution $\hat \mu_t$ which exceeds $M$ after some finite time $t^*(\lambda,M)$;
however, no explicit formula of $d^\ast$, which is needed to instantiate \prettyref{alg:MP}, is provided.
Although in principle this does not pose any algorithmic problem as $d^*$ can be found by exhaustive search in $O(1)$ time independent of $n$, it is more satisfactory to find the best polynomial message passing rule explicitly which maximizes the signal-to-noise ratio subject to degree constraints (\prettyref{lmm:hermite}) and provides an explicit formula of $d^*$ as a function of $\lambda$ only
(\prettyref{rmk:choiced}).


 \end{remark}

\section{The Gaussian biclustering problem}   \label{sec:Gaussian_bi_cluster}


We return to the biclustering problem where the goal is to locate a submatrix whose row and column support need not coincide. Consider
the model \prettyref{eq:model} parameterized by $(n_1, n_2, K_1, K_2, \mu)$ indexed by a common $n$ with $n\to \infty.$
In \prettyref{sec:bi_cluster_info_limits} we present the information limits for weak
and exact recovery for the Gaussian bicluster model. The sharp conditions given
for exact recovery are from  Butucea et al.\  \cite{Butucea2013sharp}, and
calculations from  \cite{Butucea2013sharp} with minor adjustment provide conditions for weak recovery
as well.
\prettyref{sec:bi_clusterMP} shows how the optimized message passing algorithm
and its analysis can be extended from the symmetric case to the asymmetric case for biclustering and compares its performance to the fundamental limits.
As originally observed in  \cite{HajekWuXu_one_info_lim15} for recovering the principal submatrix, the connection between weak and exact recovery via the voting procedure extends to the biclustering problem as well.

\subsection{Information-theoretic limits for Gaussian biclustering}   \label{sec:bi_cluster_info_limits}

Information-theoretic conditions ensuring exact recovery of  both $C_1^*$ and $C_2^*$
by the maximal likelihood estimator (MLE), \ie,
\[
(\hat C_1^{\sf MLE}, \hat C_2^{\sf MLE}) = \argmax_{\substack{|C_1|=K_1\\|C_2|=K_2}}\sum_{\substack{i\in C_1\\j\in C_2}} W_{ij}
\]
are obtained in Butucea et al.~\cite{Butucea2013sharp}.
While  \cite{Butucea2013sharp} does not focus on conditions for weak recovery, the calculations therein
combined with the voting procedure for exact recovery described in \cite{HajekWuXu_one_info_lim15} in fact resolve the information limits
for both weak and exact recovery in the bicluster Gaussian model.   Throughout this section
we assume that $K_i = o(n_i)$ for $i=1,2$.   For the converse results we assume
$C_i^*$ is a subset of $[n_i]$ of cardinality $K_i$
selected uniformly at random  for $i =1,2$, with $C^*_1$ independent of $C_2^*.$
Let $\lambda_i = \frac{K_i^2 \mu^2}{n_i}$ for $i =1,2.$
The voting procedure mentioned in the theorems below is the cleanup procedure described
in \prettyref{alg:exactrecovery}; it uses the method of successive withholding.

\begin{theorem}[Weak recovery thresholds for Gaussian biclustering]   \label{thm:bi_cluster_weak}~~\\
(i) If
\begin{align}
\liminf_{n\to \infty}  \frac{   \mu \sqrt{K_1K_2}  }{\sqrt{2( K_1 \log (  n_1/K_1  ) + K_2 \log( n_2 /K_2 )) }}   > 1,     \label{eq:B_suff0}
\end{align}
then both $C_1^*$ and $C_2^*$ can be weakly recovered by the MLE.  Conversely, if both $C_1^*$ and $C_2^*$ can be
weakly recovered by some estimator,  then
\begin{align}
\liminf_{n\to \infty}  \frac{   \mu \sqrt{K_1K_2}  }{\sqrt{2( K_1 \log (  n_1/K_1  ) + K_2 \log( n_2 /K_2 ) }}   \geq  1.    \label{eq.B_nec}
\end{align}
(ii) If
\begin{align}   \label{eq:weak_C2_line_sum}
\liminf_{n\to\infty}    \frac{K_1^2\mu^2}{2n_1\log(n_2/K_2)}  > 1,
\end{align}
or, equivalently,     $\liminf_{n\to\infty}    \frac{\lambda_1}{2 \log(n_2/K_2)}  > 1,$  then $C_2^*$ can be weakly recovered by
column sum thresholding.   Similarly, if
\begin{align}   \label{eq:weak_C1_line_sum}
\liminf_{n\to\infty}    \frac{K_2^2\mu^2}{2n_2\log(n_1/K_1)}  > 1 ,
\end{align}
then $C_1^*$ can be weakly recovered by row sum thresholding.  \\
(iii)    Suppose for some small $\delta > 0$ that
$C_2^*$ can be weakly recovered even if a  fraction $\delta$ of the rows of the matrix are
hidden.    Then $C^*_1$ can be weakly recovered by the voting procedure if
\begin{align}  \label{eq:suff_cond_voting_weak}
\liminf_{n\to\infty}  \frac{ K_2\mu^2 }{2\log(n_1/K_1)} > 1.
\end{align}
\end{theorem}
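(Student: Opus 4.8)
The final statement, part (iii) of \prettyref{thm:bi_cluster_weak}, is the one genuinely new ingredient (parts (i) and (ii) being adaptations of the exact-recovery calculations in \cite{Butucea2013sharp}), and the plan is to prove it by the ``successive withholding plus voting'' scheme, exactly as in the weak-recovery half of the proof of \prettyref{thm:weakexactBP_submat}, but with rows and columns playing asymmetric roles. One structural simplification over the symmetric case is that in the biclustering model \prettyref{eq:model} all entries of $W$ are independent, so the submatrix $W_{S^c,\cdot}$ obtained by deleting a row-set $S\subset[n_1]$ is independent (given $C_1^\ast,C_2^\ast$) of $W_{S,\cdot}$; this is precisely what lets the two stages of the voting procedure decouple.

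Concretely, I would partition $[n_1]$ into $1/\delta$ blocks $S_1,\dots,S_{1/\delta}$ of size $\delta n_1$ uniformly at random and independently of $W$. For each $k$, withhold $S_k$ and run the assumed weak-recovery procedure for $C_2^\ast$ on $W_{S_k^c,\cdot}$; by hypothesis its output $\hat C_{2,k}$ satisfies $|\hat C_{2,k}\,\triangle\,C_2^\ast|=o(K_2)$ with probability tending to one, and since $1/\delta$ is a constant this holds simultaneously over $k$ on an event $\calG$ with $\prob{\calG}\to1$. Taking the procedure to return the top $K_2$ column scores we may assume $|\hat C_{2,k}|=K_2$, and then on $\calG$ also $|\hat C_{2,k}\cap C_2^\ast|=K_2(1-o(1))$. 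For $i\in S_k$ set $r_i=\sum_{j\in\hat C_{2,k}}W_{ij}$. Since $\hat C_{2,k}$ is a function of $W_{S_k^c,\cdot}$, which is independent of $\{W_{ij}:i\in S_k\}$ given $C^\ast$, conditionally on $(C^\ast,\hat C_{2,k})$ the variables $\{r_i:i\in S_k\}$ are independent with $r_i\sim\calN(\mu|\hat C_{2,k}\cap C_2^\ast|,\,K_2)$ for $i\in C_1^\ast$ and $r_i\sim\calN(0,K_2)$ otherwise; thus, on $\calG$, in units of $\sqrt{K_2}$ the ``community'' rows have mean at least $\mu\sqrt{K_2}(1-o(1))$ and the remaining rows have mean $0$, both with standard Gaussian fluctuation.

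I would then analyse the threshold set $C'_o=\{i\in[n_1]:r_i\ge\tau\sqrt{K_2}\}$. Rewrite \prettyref{eq:suff_cond_voting_weak} as $K_2\mu^2\ge c^2\cdot 2\log(n_1/K_1)$ for some fixed $c>1$ and all large $n$; since $K_1=o(n_1)$ we have $\log(n_1/K_1)\to\infty$, so choose $\tau=\tfrac{c+1}{2}\sqrt{2\log(n_1/K_1)}$. Then the expected number of false positives is at most $n_1 Q\big(\tau(1-o(1))\big)\le K_1\,(K_1/n_1)^{(c+1)^2/4-1-o(1)}=o(K_1)$ because $(c+1)^2/4>1$, while for $i\in C_1^\ast$ the false-negative probability is $Q\big(\mu\sqrt{K_2}(1-o(1))-\tau\big)\le Q\big((\tfrac{c-1}{2}-o(1))\sqrt{2\log(n_1/K_1)}\big)\to0$, so the expected number of false negatives is $o(K_1)$ as well. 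Working on $\calG$ and applying Markov's inequality in the same way as at the end of the proof of \prettyref{lmm:power_clean} (and as in the weak-recovery part of \prettyref{thm:weakexactBP_submat}) gives $|C'_o\,\triangle\,C_1^\ast|=o(K_1)$ with high probability; and because \prettyref{alg:exactrecovery} returns the $K_1$ indices with the largest $r_i$ rather than $C'_o$ itself, the comparison used there (together with $|C_1^\ast|=K_1$) yields $|\hat C_1\,\triangle\,C_1^\ast|\le 2|C'_o\,\triangle\,C_1^\ast|=o(K_1)$, which is the claimed weak recovery of $C_1^\ast$.

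The argument is largely routine once the earlier results are in hand; the two places that need care are: (a) the decoupling in the second paragraph, i.e.\ checking that $\hat C_{2,k}$ really is independent of the rows used for voting on $S_k$ --- which, as noted, is actually easier here than in the symmetric model because $W_{S_k^c,\cdot}$ and $W_{S_k,\cdot}$ share no matrix entries; and (b) verifying that the window of admissible thresholds, $\tau\in\big[(1+o(1))\sqrt{2\log(n_1/K_1)},\,(1-o(1))\mu\sqrt{K_2}\big)$, is nonempty exactly when \prettyref{eq:suff_cond_voting_weak} holds --- this is where the sharp constant $2$ enters and where the condition is seen to match (up to the boundary) the converse.
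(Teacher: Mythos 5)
Your treatment of part (iii) is correct and follows essentially the same path as the paper: randomly partition the rows into $1/\delta$ blocks, reconstruct $C_2^*$ from the rows outside $S_k$, vote on the rows in $S_k$ using $r_i=\sum_{j\in\hat C_{2,k}}W_{ij}$, exploit the fact that $\hat C_{2,k}$ is a function of $W_{S_k^c,\cdot}$ and hence independent of the entries used for voting, and then argue via a threshold (in your case explicit, in the paper via the minimal Bayes error $\textsf{p}_e(K_1/n_1,K_2\mu^2)$) that the expected misclassification count is $o(K_1)$. The two error calculations are equivalent. The one technical point you gloss over --- that $r_i$ is only \emph{approximately} $\calN(\mu\,|\hat C_{2,k}\cap C_2^*|,K_2)$ because the estimator $\hat C_{2,k}$ is only approximately correct --- is handled in the paper by citing \cite[Lemma 9]{HajekWuXu_one_info_lim15}, and your argument would need the same lemma or an equivalent control; you do account for the $(1-o(1))$ factors in the mean, so this is a citation to add rather than a conceptual gap.

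Where there is a genuine gap is in dismissing parts (i) and (ii) as ``adaptations of the exact-recovery calculations in \cite{Butucea2013sharp}.'' Part (ii) is in fact an easy self-contained hypothesis-testing bound (column/row sum thresholding with error $(n_2/K_2)\textsf{p}_e(K_2/n_2,K_1^2\mu^2/n_1)$) and does not really come from \cite{Butucea2013sharp} at all, so it should be stated directly. More importantly, the sufficiency half of part (i) is \emph{not} obtained by simply weakening the Butucea argument: their proof of MLE exact recovery uses the stronger conditions \eqref{eq:submat_vote_1} and \eqref{eq:submat_vote_2} in one specific place (to bound the term $T_{1,km}$), and what the paper actually shows is that this particular bound can be re-derived under \eqref{eq:B_suff0} alone, using the implication \eqref{eq:B_suff0}$\Rightarrow$\eqref{eq:suff_cond_voting_weak} and the restriction $n-k>\delta n$ that is present in the weak-recovery regime. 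That substitution is the new content of (i) and does not appear in your proposal. The converse in (i) is also a separate ingredient, a rate-distortion lower bound imported from \cite[Theorem 5]{HajekWuXu_one_info_lim15}, which again you do not address. In short: (iii) is done correctly and matches the paper's route; (ii) is easily supplied; but for (i) you owe both the modified bound on $T_{1,km}$ and the information-theoretic converse.
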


\begin{theorem}[Exact recovery thresholds for Gaussian biclustering]    \label{thm:bi_cluster_exact} ~~\\
(i)  If for some small $\delta > 0,$  $C_2^*$ can be weakly recovered even if a fraction $\delta$ of the
rows of the matrix are hidden, and if
\begin{align}
\liminf_{n\to \infty}   \frac{ \sqrt{K_2}\mu }{ \sqrt{ 2 \log K_1} + \sqrt{ 2  \log n_1 } } >1,  \label{eq:submat_vote_1}
\end{align}
then $C_1^*$ can be exactly recovered by the voting procedure.
Similarly, if for some small $\delta > 0,$  $C_1^*$ can be weakly recovered even if a fraction $\delta$ of the
columns of the matrix are hidden, and if
\begin{align}
\liminf_{n\to \infty}   \frac{ \sqrt{K_1}\mu }{ \sqrt{ 2 \log K_2} + \sqrt{ 2  \log n_2 } } >1,  \label{eq:submat_vote_2}
\end{align}
then $C_2^*$ can be exactly recovered by the voting procedure. \\
(ii) The set  $C_2^*$ can be exactly recovered  by column sum thresholding if
\begin{align}   \label{eq:exact_C2_line_sum}
\liminf_{n\to\infty}    \frac{K_1\mu}{\sqrt{2n_1}(\sqrt{\log K_2} +  \sqrt{\log n_2 })  }   > 1,
\end{align}
or, equivalently,   $\liminf_{n\to\infty}    \frac{\lambda_1}{(\sqrt{\log K_2} +  \sqrt{\log n_2})^2}  > 2.$
(A similar condition holds for exact recovery of $C_1^*.$)
\end{theorem}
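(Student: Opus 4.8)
The plan is to reduce both parts to the standard fact that the maximum of $m$ i.i.d.\ standard Gaussians equals $\sqrt{2\log m}+o_P(\sqrt{\log m})$ (with the matching lower bound for the minimum), after exhibiting the appropriate Gaussian test statistics; this parallels the exact-recovery half of the proof of \prettyref{thm:weakexactBP_submat}.

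For part (ii) I would work directly with the column sums $y_j=\sum_{i\in[n_1]}W_{ij}$. Since the entries of $Z$ are independent, the $y_j$ are genuinely independent, and given $C_1^*$ one has $y_j\sim\calN(\mu K_1,n_1)$ for $j\in C_2^*$ and $y_j\sim\calN(0,n_1)$ for $j\notin C_2^*$. The two steps are then: (a) show $\min_{j\in C_2^*}y_j\ge\mu K_1-\sqrt{2n_1\log K_2}\,(1+o_P(1))$ and $\max_{j\notin C_2^*}y_j\le\sqrt{2n_1\log n_2}\,(1+o_P(1))$, using $K_2=o(n_2)$ so that $\log(n_2-K_2)\sim\log n_2$; (b) observe that \prettyref{eq:exact_C2_line_sum} makes $\mu K_1$ exceed $\sqrt{2n_1}(\sqrt{\log K_2}+\sqrt{\log n_2})$ by a fixed constant factor, so the threshold estimator (equivalently, the $K_2$ columns with the largest $y_j$) returns $C_2^*$ with high probability. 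The case of $C_1^*$ is symmetric via row sums. This part is essentially immediate.

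For part (i) I would instantiate the successive-withholding voting of \prettyref{alg:exactrecovery} in the asymmetric setting: partition the rows $[n_1]$ into $1/\delta$ blocks $S_k$ at random; for each $k$ delete the rows in $S_k$ and invoke the assumed weak-recovery routine for the column support on the reduced $((1-\delta)n_1)\times n_2$ matrix to obtain $\hat C_2^k$; then for $i\in S_k$ form $r_i=\sum_{j\in\hat C_2^k}W_{ij}$ and output the $K_1$ rows with the largest $r_i$. The structural point to exploit is that $\hat C_2^k$ is a function of $\{W_{ij}:i\in S_k^c\}$ and hence independent of $\{W_{ij}:i\in S_k\}$, so conditioned on $\hat C_2^k$ the variables $r_i$, $i\in S_k$, are independent Gaussians with variance $|\hat C_2^k|=K_2(1+o_P(1))$ and mean $\mu|\hat C_2^k\cap C_2^*|=\mu K_2(1-o_P(1))$ for $i\in C_1^*$ and mean $0$ for $i\notin C_1^*$. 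After normalizing by $\sqrt{K_2}$, applying the Gaussian extremum estimate over the at most $K_1$ support rows and the at most $n_1$ outlier rows inside $S_k$, and taking a union bound over the $1/\delta$ blocks, exact recovery reduces to verifying, with probability tending to one,
\[
\min_{i\in C_1^*}\frac{r_i}{\sqrt{K_2}}\ \ge\ \mu\sqrt{K_2}\,(1-o(1))-\sqrt{2\log K_1}\,(1+o(1))\ >\ \sqrt{2\log n_1}\,(1+o(1))\ \ge\ \max_{i\notin C_1^*}\frac{r_i}{\sqrt{K_2}},
\]
which holds for all large $n$ precisely under \prettyref{eq:submat_vote_1}; the statement for $C_2^*$ under \prettyref{eq:submat_vote_2} follows by exchanging the roles of rows and columns.

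The main obstacle — really the only place needing care — is to justify that the hypothesis ``$C_2^*$ is weakly recoverable even when a $\delta$-fraction of the rows is hidden'' yields what the voting step requires, namely $|\hat C_2^k\triangle C_2^*|=o(K_2)$ uniformly over $k$. Deleting a random $\delta$-fraction of rows leaves $C_2^*$ intact but replaces the row support by $C_1^*\cap S_k^c$, whose size is hypergeometric; I would handle this exactly as in the proof of \prettyref{thm:weakexactBP_submat}, using Hoeffding's convex-order domination of the hypergeometric by $\Binom((1-\delta)n_1,K_1/n_1)$ and the resulting Chernoff bounds to get $|C_1^*\cap S_k^c|/((1-\delta)K_1)\to1$, and reading the weak-recovery hypothesis — as in \prettyref{rmk:two_step} — as robust to this slightly random support size. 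The remaining bookkeeping, namely checking that the $o_P(\sqrt{\log K_1})$, $o_P(\sqrt{\log n_1})$ errors in the Gaussian extrema and the $o_P(1)$ multiplicative slack coming from $|\hat C_2^k\cap C_2^*|/K_2\to1$ and $|\hat C_2^k|/K_2\to1$ are genuinely lower order than the threshold quantity $\sqrt{2\log K_1}+\sqrt{2\log n_1}\asymp\mu\sqrt{K_2}$, is routine.
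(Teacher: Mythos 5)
Your proposal is correct and captures the same mathematical content as the paper's proof; the only difference is one of presentation. The paper phrases both parts of \prettyref{thm:bi_cluster_exact} as the statement ``the total Bayes error vanishes,'' namely that \prettyref{eq:submat_vote_1} gives $n_1\,\textsf{p}_e(K_1/n_1,\,K_2\mu^2)\to 0$ and \prettyref{eq:exact_C2_line_sum} gives $n_2\,\textsf{p}_e(K_2/n_2,\,K_1^2\mu^2/n_1)\to 0$, which by Markov's inequality immediately upgrades the weak-recovery calculations of \prettyref{thm:bi_cluster_weak}(ii)--(iii) to exact recovery. You instead express the union bound through the Gaussian extremum fact $\max_{i\in[m]}Z_i=\sqrt{2\log m}+o_P(1)$ and check that the smallest in-support statistic exceeds the largest out-of-support one, which mirrors the paper's \emph{alternative, more direct} proof of the exact-recovery half of \prettyref{thm:weakexactBP_submat}. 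Both reduce to the same inequality $\mu\sqrt{K_2}>\sqrt{2\log K_1}+\sqrt{2\log n_1}$ (resp.\ $\mu K_1/\sqrt{n_1}>\sqrt{2\log K_2}+\sqrt{2\log n_2}$), and your handling of the withheld-rows independence, the hypergeometric size of $C_1^*\cap S_k^c$ via Hoeffding's convex-order argument, and the $o(K_2)$ error in $\hat C_2^k$ is exactly how the paper treats the analogous bookkeeping in the symmetric case. In short: the extremum formulation you use and the paper's $n\textsf{p}_e\to0$ formulation are two standard ways of writing the same union bound, and your proposal would go through.
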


The proofs of Theorems \ref{thm:bi_cluster_weak} and \ref{thm:bi_cluster_exact} are given in \prettyref{app:bicluter-mle}.
The condition involving $\delta$ in \prettyref{thm:bi_cluster_weak}(iii) and \prettyref{thm:bi_cluster_exact}(i)
requires a certain robustness of the estimator for weak recovery.   If  the rows indexed by a set $S,$ with $S\subset [n_1]$
and $|S|=\delta n_1,$  are hidden, then the observed matrix has dimensions $n_1(1-\delta) \times n_2$ and the planted
submatrix has $K_1-|S_1\cap C_1^*| \approx K_1(1-\delta)$ rows and $K_2$ columns.   It is shown in
\cite[Section 3.3]{HajekWuXu_one_info_lim15} that the MLE  has this robustness property for weak recovery
of a principal submatrix, and a similar extension can be established for weak recovery for biclustering.
The estimator used is the MLE based on the assumption that the submatrix to be found has shape
$K_1(1-\delta) \times K_2.$
With that extension in hand,  the following corollary is a consequence of the two theorems, and it recovers
the main result of \cite{Butucea2013sharp}.

\begin{corollary}   \label{cor:exact_bicluster}
If \eqref{eq:B_suff0}, \eqref{eq:submat_vote_1}, and \eqref{eq:submat_vote_2} hold, then $C_1^*$ and $C_2^*$ can
both be exactly recovered by the MLE.    Conversely, if exact recovery is possible, then  \eqref{eq.B_nec} holds,
and  both  \eqref{eq:submat_vote_1}  and \eqref{eq:submat_vote_2} hold with ``$>$'' replaced by ``$\geq$''.
\end{corollary}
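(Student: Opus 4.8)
The plan is to obtain \prettyref{cor:exact_bicluster} by combining \prettyref{thm:bi_cluster_weak} and \prettyref{thm:bi_cluster_exact} with two auxiliary facts: (a) the robustness of the weak-recovery MLE to hiding a vanishing fraction of rows or columns, noted in the paragraph preceding the corollary; and (b) the observation that $(\hat C_1^{\sf MLE},\hat C_2^{\sf MLE})$, which maximizes $\sum_{i\in C_1,j\in C_2}W_{ij}$, is --- since $|C_1|=K_1$ and $|C_2|=K_2$ are fixed --- exactly the MAP estimator under the uniform prior on $(C_1^*,C_2^*)$, hence the Bayes rule for zero--one loss that minimizes $\Prob[(\hat C_1,\hat C_2)\neq(C_1^*,C_2^*)]$.

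For the achievability direction, assume \eqref{eq:B_suff0}, \eqref{eq:submat_vote_1} and \eqref{eq:submat_vote_2}. First I would check that \eqref{eq:B_suff0} survives hiding a $\delta$-fraction of rows: substituting $(n_1,K_1)\mapsto(n_1(1-\delta),K_1(1-\delta))$ leaves $n_1/K_1$ unchanged and only scales the numerator's $K_1$ and the $K_1\log(n_1/K_1)$ term by $1-\delta$, so the left side of \eqref{eq:B_suff0} is continuous in $\delta$ at $0$ and its $\liminf$ still exceeds $1$ for $\delta$ small. By the robust version of \prettyref{thm:bi_cluster_weak}(i), $C_2^*$ is then weakly recoverable even with a $\delta$-fraction of rows hidden; combined with \eqref{eq:submat_vote_1} and \prettyref{thm:bi_cluster_exact}(i) this yields exact recovery of $C_1^*$ by the voting procedure. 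The mirror argument --- hiding a $\delta$-fraction of columns, using the same continuity to preserve \eqref{eq:B_suff0} under $(n_2,K_2)\mapsto(n_2(1-\delta),K_2(1-\delta))$, and then invoking \eqref{eq:submat_vote_2} with \prettyref{thm:bi_cluster_exact}(i) --- yields exact recovery of $C_2^*$. Hence exact recovery of the pair is achievable, and by fact (b) the MLE also achieves it.

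For the converse, suppose some estimator recovers $(C_1^*,C_2^*)$ exactly. Then it weakly recovers both $C_1^*$ and $C_2^*$, so \prettyref{thm:bi_cluster_weak}(i) gives \eqref{eq.B_nec}. For the boundary form of \eqref{eq:submat_vote_1}, I would pass to the genie that reveals $C_2^*$: exact recovery of $C_1^*$ then requires $\min_{i\in C_1^*}\sum_{j\in C_2^*}W_{ij}>\max_{i'\notin C_1^*}\sum_{j\in C_2^*}W_{ij}$ with probability tending to one, where the minimum is over $K_1$ \iid $\calN(\mu K_2,K_2)$ variables and concentrates at $\mu K_2-\sqrt{2K_2\log K_1}+o_P(\sqrt{K_2})$ and the maximum is over $n_1-K_1$ \iid $\calN(0,K_2)$ variables and concentrates at $\sqrt{2K_2\log n_1}+o_P(\sqrt{K_2})$; this forces \eqref{eq:submat_vote_1} with ``$>$'' relaxed to ``$\geq$'', and symmetrically \eqref{eq:submat_vote_2}. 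These single-swap impossibility bounds are the biclustering analogue of the converse half of \cite[Theorem~4]{HajekWuXu_one_info_lim15} and are carried out in \prettyref{app:bicluter-mle}.

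The routine parts are the continuity check on \eqref{eq:B_suff0} and the Gaussian extreme-value estimates in the genie argument. The substantive step is establishing the robustness extension invoked in the achievability proof: that the MLE tuned to a $K_1(1-\delta)\times K_2$ submatrix still weakly recovers $C_2^*$ when the number of surviving planted rows is only approximately $(1-\delta)K_1$ --- a hypergeometric quantity, as in the proof of \prettyref{thm:weakexactBP_submat} and \prettyref{rmk:two_step} --- mirroring \cite[Section~3.3]{HajekWuXu_one_info_lim15} but now with two support sets and a random, unknown row count. This is the main obstacle, and it is precisely the ingredient the corollary statement presupposes ``in hand.''
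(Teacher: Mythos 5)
Your proposal is correct and takes essentially the route the paper intends: combine Theorems~\ref{thm:bi_cluster_weak} and~\ref{thm:bi_cluster_exact} via the robustness extension of the weak-recovery MLE, check that \eqref{eq:B_suff0} is stable under hiding a $\delta$-fraction of rows/columns, and then pass from the voting procedure to the MLE by noting the MLE is the MAP (Bayes) rule under the uniform prior; for the converse, weak recovery gives \eqref{eq.B_nec} and the genie-aided row-sum argument gives \eqref{eq:submat_vote_1}--\eqref{eq:submat_vote_2} with ``$\geq$.'' The paper states the corollary with no written proof beyond the one-line remark that it ``is a consequence of the two theorems'' (the robustness lemma itself is deferred to a Butucea-style extension of \cite[Section 3.3]{HajekWuXu_one_info_lim15}), so your write-up is actually more explicit about the MAP-optimality step that lets one convert ``some estimator succeeds'' into ``the MLE succeeds.'' One small misattribution: \prettyref{app:bicluter-mle} only proves Theorems~\ref{thm:bi_cluster_weak} and~\ref{thm:bi_cluster_exact}, not the converse one-swap bounds appearing in your genie argument; those are the biclustering analogue of the converse half of \cite[Theorem 4]{HajekWuXu_one_info_lim15} but are not written out in the appendix, and the corollary's necessity claims are inherited from (or mirror) \cite{Butucea2013sharp}. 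That aside, your continuity check on \eqref{eq:B_suff0} and the identification of the robustness lemma as the substantive missing ingredient are both accurate.
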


We conclude this subsection with a few remarks on Theorems \ref{thm:bi_cluster_weak} and \ref{thm:bi_cluster_exact}:
\begin{enumerate}
\item
As one might expect from the theorems themselves, the following implications hold:
any of  \eqref{eq:B_suff0}, \eqref{eq:weak_C1_line_sum}, or  \eqref{eq:submat_vote_1}  implies \eqref{eq:suff_cond_voting_weak} (dropping the second term in the denominator
on the left-hand side of \eqref{eq:B_suff0} yields \eqref{eq:suff_cond_voting_weak});
 \eqref{eq:exact_C2_line_sum} implies \eqref{eq:weak_C2_line_sum}.

\item
If we let $K_1/n_1 = K_2/n_2$, then $\mu$ can be selected so that: \eqref{eq:weak_C2_line_sum} holds (so $C^*_2$ can
be weakly recovered) but  \eqref{eq:suff_cond_voting_weak} fails\footnote{In this paragraph,
by ``\eqref{eq:suff_cond_voting_weak} fails" we mean $\limsup < 1$)}  if and only if
$K_1^2/(n_1K_2) > 1,$  or equivalently,  $K_1/n_2 > 1.$   This condition implies $n_2 < K_1 = o(n_1)$;
$A$ is a tall thin matrix.
Even if $C_2^*$ were exactly recovered, voting does not provide weak recovery of $C_1^*$ if
\eqref{eq:suff_cond_voting_weak} fails.
If $C_2^*$ is given exactly (for example, by a genie)  the optimal way to recover $C_1^*$ is by voting, which fails if
\eqref{eq:suff_cond_voting_weak} fails.   Thus, in this regime, weak recovery of $C_2^*$ is possible
while weak recovery of $C_1^*$ is impossible.

\item
If   $n_1=n_2$ and $K_1=K_2,$  the sufficient conditions and the necessary conditions for weak
and for exact recovery, respectively,  are identical to those in  \cite{HajekWuXu_one_info_lim15}
for the recovery of a $K\times K$ principal submatrix with elevated mean, in a symmetric
$n\times n$ Gaussian matrix.  Basically, in the bicluster problem the data matrix provides roughly
twice the information (because the matrix is not symmetric) and there is
twice the information to be learned, namely $C_1^*$ and $C_2^*$ instead of only $C^*,$
and the factors of two cancel to yield the same conditions.   It therefore follows from
\cite[Remark 7]{HajekWuXu_one_info_lim15},  that if  $n_1=n_2$ and $K_1=K_2\le n_1^{1/9},$
then \eqref{eq:B_suff0} implies \eqref{eq:submat_vote_1} and \eqref{eq:submat_vote_2};  in this
regime, \eqref{eq:B_suff0} alone is the sharp condition for both weak and exact recovery.

\item If $\frac{K_i^2 \mu^2}{n_i}\equiv \lambda_i$ for positive constants $\lambda_1$ and $\lambda_2$ and if $K_1 \asymp K_2,$
then \eqref{eq:B_suff0} holds for all sufficiently large $n$, so weak recovery is information theoretically possible.
In contrast, our proof that the optimized message passing algorithm provides weak recovery in this regime
requires $(\lambda_1, \lambda_2) \in {\cal G}.$

\item
Either \eqref{eq:B_suff0} or  \eqref{eq:weak_C2_line_sum} suffices
for the weak recovery of $C_2^*.$   We leave it as an open problem to determine whether
there is a sharp converse for these conditions, or whether there is yet another sufficient
condition for weakly recovering $C_2^*$ only.
\end{enumerate}

\subsection{Message passing algorithm for the Gaussian biclustering model}  \label{sec:bi_clusterMP}

Suppose $n_i \to \infty$ and  $\Omega(\sqrt{n_i})  \leq K_i \leq o(n_i)$ for $i \in \{0,1\},$ as $n\to \infty.$
The belief propagation algorithm and our analysis of it for recovery of a single set of indices
can be naturally adapted to the biclustering model.

Let $f(\cdot, t): \reals \to \reals$ be a scalar function for each iteration $t$. To be definite, we shall describe the algorithm such
that at each iteration, the messages are passed either from the row indices to the column indices, or vice-versa, but
not both.   The messages are defined as follows for $t\geq 0:$
\begin{align}
\mbox{($t$ even)}~~~~~& \theta^{t+1}_{i \to j} =  \frac{1}{\sqrt{n_2}}  \sum_{\ell \in [n_2] \backslash \{j \} } W_{\ell i} f( \theta^t_{\ell \to i }, t), \quad \forall i\in [n_1], j\in [n_2]
    \label{eq:theta_update_ij_even}    \\
\mbox{($t$ odd)}~~~~~&  \theta^{t+1}_{j\to i} =  \frac{1}{\sqrt{n_1} }  \sum_{\ell \in [n_1] \backslash \{i\} } W_{\ell j} f( \theta^t_{\ell \to j}, t), \quad \forall  j \in [n_2], i \in [n_1],
    \label{eq:theta_update_ij_odd}
\end{align}
with the initial condition $\theta^0_{\ell \to i}= 0$ for $(\ell,i)\in [n_2]\times [n_1].$
Moreover, let the aggregated beliefs be given by
\begin{align}
\mbox{($t$ even)}~~~~~& \theta^{t+1}_{i} =  \frac{1}{\sqrt{n_2}}  \sum_{\ell \in [n_2] } W_{\ell i} f( \theta^t_{\ell \to i }, t), \quad \forall i\in [n_1]
  \label{eq:theta_last_update_even}   \\
\mbox{($t$ odd)}~~~~~&  \theta^{t+1}_{j} =  \frac{1}{\sqrt{n_1} }  \sum_{\ell \in [n_1] } W_{\ell j} f( \theta^t_{\ell \to j}, t), \quad \forall  j \in [n_2].
    \label{eq:theta_last_update_odd}
\end{align}

Let $\lambda_i = \frac{K_i^2 \mu^2}{n_i}$ for $i =1,2.$
Suppose as $n \to \infty$,
for $t$ even (odd), $\theta_i^t$  is approximately $\calN(\mu_t, \tau_t)$ for $i \in C^*_1 $ ($i \in C^*_2$)
and $\calN(0, \tau_t)$ for $i \in [n_1]\backslash C^*_1$ ($i \in [n_2] \backslash C^*_2$).
Then similar to the symmetric case, the update equations of  message passing and the fact that $\theta_{i \to j}^t= \theta_i^t$ for all $i,j$
suggest the following state evolution equations for $t \ge 0$:
\begin{align}
\mu_{t+1}^2 & =
\begin{cases}
\sqrt{\lambda_1} \expect{ f( \mu_t + \sqrt{\tau_t} Z, t) } & \mbox{$t$ even} \\
\sqrt{\lambda_1} \expect{ f( \mu_t + \sqrt{\tau_t} Z, t) } & \mbox{$t$ odd}
\end{cases}\\
\tau_{t+1} & = \expect{ f( \sqrt{\tau_t} Z, t )^2}.
\end{align}
The optimal choice of $f$ for maximizing the signal-to-noise ratio $\frac{\mu_{t+1}}{\sqrt{\tau_{t+1} } }$ is again
$f(x, t)= \eexp^{x \mu_t - \mu_t^2}$. With this optimized $f$, we have $\tau_{t+1}=1$ and the state evolution equations reduce to
\begin{align}
\mu_{t+1}^2=
\begin{cases}
\lambda_1 \eexp^{\mu_t^2} & \mbox{$t$ even} \\
\lambda_2 \eexp^{\mu_t^2} & \mbox{$t$ odd}
\end{cases}
\end{align}
with $\mu_0=0$.

To justify the state evolution equations,
we rely on the method of moments, requiring $f$ to be polynomial. Thus, we choose
$f=f_d( \cdot, t)$ as per  \prettyref{lmm:hermite}, which maximizes the signal-to-noise ratio
among all polynomials with degree up to $d$.
With $f=f_d$, we have $\tau_{t+1}=1$ and  the state evolution equations reduce to
\begin{align}
\mu_{t+1}^2=
\begin{cases}
\lambda_1 G_d(\mu_t^2) & \mbox{$t$ even} \\
\lambda_2 G_d(\mu_t^2) & \mbox{$t$ odd}
\end{cases}
\label{eq:mu-poly_bicluster}
\end{align}
where $G_d(\mu)=\sum_{k=0}^d \frac{\mu^{k}}{k!}$.

Combining message passing with spectral cleanup, we obtain the following algorithm for estimating $C_1^*$ and $C_2^*$.

\begin{algorithm}[htb]
\caption{Message passing for biclustering}\label{alg:bicluster_MP}
\begin{algorithmic}[1]
\STATE Input: $n_1, n_2, K_1, K_2 \in \naturals$, $\mu>0$, $W \in \reals^{n_1\times n_2}$, $d^\ast \in \naturals,$  $t^*\in 2\naturals,$ and $s^* > 0.$
\STATE Initialize: $\theta_{\ell \to i}^0=0$ for $(\ell,i) \in [n_2]\times [n_1].$ 
For $t \geq 0$, define the sequence of degree-$d^\ast$ polynomials $f_{d^*}(\cdot, t)$
as per  \prettyref{lmm:hermite} and  $\hat{\mu}_t$  according to \prettyref{eq:mu-poly_bicluster}.
\STATE Run $t^\ast$ iterations of message passing as in  \eqref{eq:theta_update_ij_even} and \eqref{eq:theta_update_ij_odd}
with $f=f_{d^*}$ and compute $\theta_{i}^{t^{\ast} }$ for all $i \in [n_1]$ as per \eqref{eq:theta_last_update_even}
and $\theta_{j}^{t^{\ast}+1 }$ for all $j  \in [n_2]$ as per \eqref{eq:theta_last_update_odd}.
\STATE Find the sets $\tilde{C}_1=\{ i \in [n_1]: \theta_{i}^{t^\ast} \ge \hat{\mu}_{t^\ast}/2 \}$ and  $\tilde{C}_2=\{ j \in [n_2]: \theta_{j}^{t^\ast+1} \ge \hat{\mu}_{t^\ast+1}/2 \}$.
\STATE (Cleanup via power method)
Denote the restricted matrix $W_{\tilde{C}_1 \tilde C_2}$ by $\tilde W$.
Sample $u^0$ uniformly from the unit sphere in $\reals^{\tilde{C}_1}$ and compute
$u^{t+2}= \tilde W \tilde{W}^\top u^t / \|\tilde{W} \tilde{W}^\top u^t\|$,  for $t$ even
and
$ 0\leq t \leq   2  \lceil s^*\log (n_1 n_2) \rceil -2.$  Let $\hat{u}=u^{2\lceil s^*\log (n_1 n_2) \rceil }.$
Return  $\hat{C}_1,$  the set of  $K_1$ indices $i$ in $\tilde C_1$ with the largest values of $|\hat{u}_i|.$
Compute the power iteration with $ \tilde{W}^\top\tilde{W}$
 for odd values of $t$ and return $\hat{C}_2$ similarly.
\end{algorithmic}
\end{algorithm}

We now turn to the performance of \prettyref{alg:bicluster_MP}. Let
\begin{align}
	{\cal G} &= \{ (\lambda_1, \lambda_2) :  \mu_t \to \infty\}, \\
	\label{eq:calG}
{\cal G}_d &= \{ (\lambda_1, \lambda_2) :  \hat \mu_t \to \infty\}.
\end{align}
As $d \to \infty$, $G_d(\mu) \to \eexp^{\mu}$ uniformly over bounded intervals.
It suggests that if $(\lambda_1, \lambda_2) \in \calG$, then there exists a $d^\ast(\lambda_1, \lambda_2)$
such that $(\lambda_1, \lambda_2) \in \calG_{d^*}$ and hence $\hat \mu_t \to \infty$ as $t \to \infty.$
The following theorem confirms this intuition, showing that the bicluster message passing algorithm
(\prettyref{alg:bicluster_MP})  approximately recovers  $C_1^*$ and $C_2^*$, provided that $(\lambda_1, \lambda_2) \in {\cal G}.$
\begin{theorem}  \label{thm:weakMP_submat_bicluster}
Fix $\lambda_1, \lambda_2 >0.$   Suppose
$ \frac{ K_i^2\mu^2}{n_i} \to  \lambda_i$,  $K_1 \asymp K_2$,  and $\Omega(\sqrt{n_i}) \leq K_i \leq o(n_i)$ as $n\to \infty,$  for
$i=1,2$.
Consider the model \prettyref{eq:model} with $|C^*_i|/K_i \to 1$ in probability as $n\to \infty.$
Suppose $(\lambda_1, \lambda_2) \in {\cal G}$ and define $d^\ast(\lambda_1, \lambda_2)$ as in \prettyref{eq:defd_bicluster}.
For every $\eta \in (0,1),$  there exist explicit positive constants $ t^\ast, s^\ast$ depending on  $(\lambda_1, \lambda_2,\eta)$
such that \prettyref{alg:bicluster_MP} returns
$ | \hat{C}_i \cap C_i^\ast| \ge (1- \eta )K_i$ for $i =1,2$ with probability converging to $1$ as $n \to \infty$, and the
total running time is bounded by $c (\eta, \lambda_1, \lambda_2) n_1n_2 \log (n_1n_2)$, where
$c(\eta, \lambda_1,\lambda_2) \to \infty$ as either $\eta \to 0$ or $(\lambda_1, \lambda_2)$ approaches $\partial \cal G$.
\end{theorem}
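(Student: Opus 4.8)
The plan is to adapt the proof of \prettyref{thm:almost_exactBP_submat} to the bipartite setting, carrying the extra index $i\in\{1,2\}$ throughout. The first step is a bicluster analogue of \prettyref{lmm:approximatecentrallimit}: for every fixed even $t$ the empirical distributions of $(\theta^t_i:i\in C_1^\ast)$ and $(\theta^t_i:i\in[n_1]\setminus C_1^\ast)$ converge in Kolmogorov distance to $\calN(\hat\mu_t,1)$ and $\calN(0,1)$, and symmetrically on the column side for odd $t$, with $\hat\mu_t$ governed by \eqref{eq:mu-poly_bicluster}. The proof reruns the method of moments of \prettyref{sec:state_evolution}, the only change being that the rooted non-backtracking trees are now \emph{bipartite}: types alternate between $[n_1]$ and $[n_2]$ along every path, an edge at an even level carries the factor $W_{\ell i}/\sqrt{n_2}$ and an edge at an odd level carries $W_{\ell j}/\sqrt{n_1}$. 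I would introduce the auxiliary iterates $\xi^t$ built from independent fresh copies $W^0,W^1,\dots$ of $W$, show (analogue of \prettyref{lmm:momentmatching}) that the moments of $\theta^t$ and $\xi^t$ agree up to $O(n^{-1/2})$, bound all moments and the variances of the normalized sums (analogues of \prettyref{lmm:momentbound}--\prettyref{lmm:momentvariance}), and conclude by the moment method exactly as in the proof of \prettyref{lmm:approximatecentrallimit}. The combinatorial counting of equivalence classes of tuples of trees is identical; the only arithmetic change is that a singly covered edge with both endpoints in the planted sets now contributes a mixed factor $\mu/\sqrt{n_1}$ or $\mu/\sqrt{n_2}$ while its endpoints are vertices of the opposite side, so closing the bounds requires not only $\mu K_1/\sqrt{n_1}=\sqrt{\lambda_1}$ and $\mu K_2/\sqrt{n_2}=\sqrt{\lambda_2}$ but also $\mu K_1/\sqrt{n_2}$ and $\mu K_2/\sqrt{n_1}$ to stay bounded; since $\mu K_1/\sqrt{n_2}=(K_1/K_2)\sqrt{\lambda_2}$, the hypothesis $K_1\asymp K_2$ is exactly what makes this work (and together with the $\lambda_i$ being constant it forces $n_1\asymp n_2$, which is convenient below), while the lower bound $K_i=\Omega(\sqrt{n_i})$ is used as in the footnote of \prettyref{lmm:momentmatching}.

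Next, from $(\lambda_1,\lambda_2)\in\calG$ I would deduce that the degree-$d$ surrogate iteration \eqref{eq:mu-poly_bicluster} diverges for all $d$ large, since $G_d(\mu)\uparrow\eexp^{\mu}$ uniformly on compacts; this is the bicluster counterpart of \prettyref{lmm:hermite}, with the one-step map to analyze being the composition $x\mapsto\lambda_2 G_d(\lambda_1 G_d(x))$, again monotone increasing, so its stability is controlled by its unique positive fixed point exactly as in the symmetric case. This defines $d^\ast(\lambda_1,\lambda_2)$ and, for $M=8\log(1/\epsilon)$, a finite crossing time $t^\ast=\inf\{t:\hat\mu_t>M\}$. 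Combining this with the state-evolution lemma as in the proof of \prettyref{lmm:approximaterecovery} gives, with probability tending to one, $|\tilde C_i\cap C_i^\ast|\ge(1-\epsilon)K_i$ and $K_i(1-\epsilon)\le|\tilde C_i|\le\epsilon n_i$ for $i=1,2$, using $Q(\hat\mu/2)<\eexp^{-\hat\mu^2/8}<\epsilon$ and $K_i=o(n_i)$.

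The spectral cleanup step is the bicluster analogue of \prettyref{lmm:power_clean} and is the only place where genuinely new estimates appear. Writing $\tilde W=W_{\tilde C_1\tilde C_2}$, its mean is the rank-one matrix $\mu\,\mathbf 1_{\tilde C_1\cap C_1^\ast}\mathbf 1_{\tilde C_2\cap C_2^\ast}^\top$, whose top singular value is $\mu\sqrt{|\tilde C_1\cap C_1^\ast|\,|\tilde C_2\cap C_2^\ast|}\gtrsim\mu K(1-\epsilon)=\sqrt{\lambda_1 n_1}(1-\epsilon)$ with left and right singular vectors proportional to the indicators of $\tilde C_1\cap C_1^\ast$ and $\tilde C_2\cap C_2^\ast$. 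Let $Z=\tilde W-\Expect\tilde W$, a Gaussian matrix with i.i.d.\ $\calN(0,1)$ entries of dimensions $|\tilde C_1|\le\epsilon n_1$ by $|\tilde C_2|\le\epsilon n_2$. The Davidson--Szarek bound \cite[Theorem 2.11]{Davidson01} for rectangular Gaussian matrices, a union bound over the at most $\exp(n_1 h(\epsilon)+n_2 h(\epsilon))$ valid pairs $(\tilde C_1,\tilde C_2)$, and the fact $n_1\asymp n_2$ together give $\|Z\|\le\beta$ for a suitable $\beta\asymp\sqrt{(h(\epsilon)+\epsilon)\,n_1}$ with high probability; for $\epsilon$ small enough the signal-to-noise condition $\sqrt{\lambda_1}(1-\epsilon)\gtrsim\beta/\sqrt{n_1}$ (and its $\lambda_2$ analogue) holds, which is all that is needed. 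From here the argument of \prettyref{lmm:power_clean} carries over with Davis--Kahan replaced by its rectangular ($\sin\Theta$) analogue for left singular vectors: a random start has a polynomially small but non-negligible overlap with the leading left singular vector of $\tilde W$ with high probability, the ratio of the second to the first singular value of $\tilde W$ is bounded away from one, so $O(\log(n_1 n_2))$ steps of the power iteration with $\tilde W\tilde W^\top$ bring $\hat u$ to within $o(1)$ of that singular vector and hence to within $\frac{3\beta}{\sqrt{\lambda_1 n_1}(1-\epsilon)}$ of the scaled indicator of $\tilde C_1\cap C_1^\ast$ in $\ell_2$; the top-$K_1$ selection then yields $|\hat C_1\cap C_1^\ast|\ge(1-\eta)K_1$ with $\eta$ controlled by $\epsilon$ and $\lambda_1$ as in \eqref{eq:etaeps}, and symmetrically $|\hat C_2\cap C_2^\ast|\ge(1-\eta)K_2$ using $\tilde W^\top\tilde W$. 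Choosing $\epsilon$ small enough that both error bounds are at most $\eta$ completes the recovery guarantee; the running time is $O(t^\ast n_1 n_2)$ for message passing plus $O(s^\ast\log(n_1 n_2)\,n_1 n_2)$ for the two power iterations, with the constant blowing up as $\eta\to0$ (forcing $\epsilon\to0$, $s^\ast\to\infty$) or as $(\lambda_1,\lambda_2)\to\partial\calG$ (forcing $d^\ast\to\infty$, $t^\ast\to\infty$).

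The main obstacles are the two places where the asymmetry genuinely bites. In the moment computation, verifying that the equivalence-class counting still closes requires the balanced bound $K_1\asymp K_2$ so that the cross-ratios $\mu K_1/\sqrt{n_2}$ and $\mu K_2/\sqrt{n_1}$ remain bounded; without it a chain of singly covered cross-level edges could blow up. In the cleanup, one must assemble the correct rectangular spectral inputs — Wedin's $\sin\Theta$ theorem in place of Davis--Kahan, the non-square Davidson--Szarek bound, and a union bound over pairs of candidate sets — with constants sharp enough to preserve the signal-to-noise inequality for small $\epsilon$. Everything else is a direct transcription of \prettyref{sec:state_evolution} and the proofs of \prettyref{lmm:approximaterecovery}--\prettyref{lmm:power_clean}.
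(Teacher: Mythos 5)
Your proposal follows essentially the same route as the paper's proof: adapt the method-of-moments state-evolution argument with bipartite non-backtracking trees (the paper handles this compactly by noting $n_1\asymp n_2$ and using $\min\{n_1,n_2\}$ in the bounds), establish divergence of the degree-$d$ bicluster iteration via uniform convergence $G_d\uparrow e^\mu$ (the paper's \prettyref{lmm:hermitebicluster}, which in fact argues by uniform approximation on compacts plus a tail-growth bound rather than the fixed-point language you use, though the two are equivalent), and then carry out rectangular spectral cleanup with Wedin's $\sin\Theta$ theorem, the non-square Davidson--Szarek bound, and a union bound over candidate pairs $(\tilde C_1,\tilde C_2)$ (the paper's \prettyref{lmm:power_clean_bicluster}, whose condition \eqref{eq:spectral_condition} matches your $K_1\asymp K_2$ cross-ratio discussion). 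No gaps; your identification of where the asymmetry bites aligns with what the paper actually does.
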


\begin{figure}[ht]
	\centering
	\includegraphics[width=0.4\textwidth]{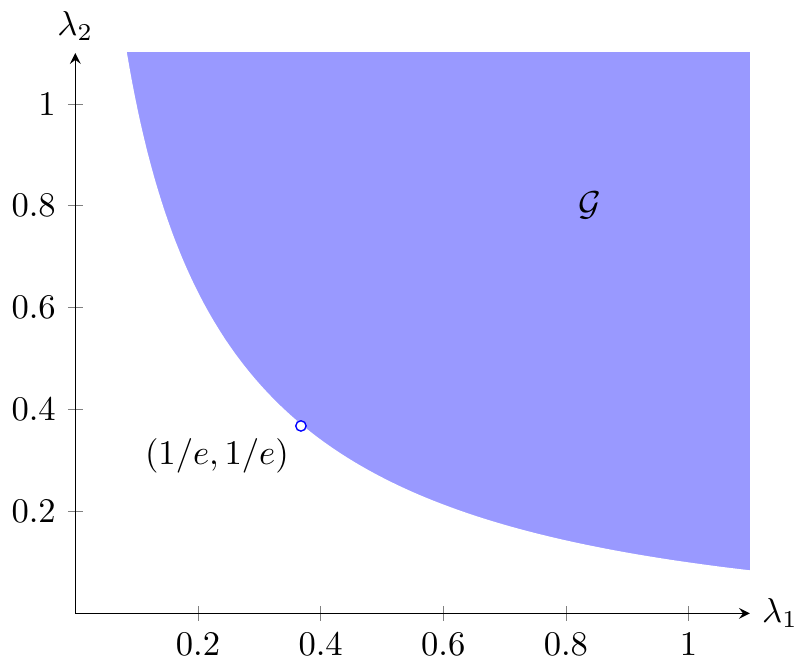}
	\caption{Required signal-to-noise ratios by \prettyref{alg:bicluster_MP} for biclustering.}
	\label{fig:calG}
\end{figure}

\begin{remark}[Exact biclustering via message passing]
If  the assumptions  of   \prettyref{thm:weakMP_submat_bicluster} hold
and the voting condition \eqref{eq:submat_vote_1} (respectively, \eqref{eq:submat_vote_2})  holds, then $C^*_1$ (respectively, $C^*_2$)
can be exactly recovered by a voting procedure similar to the one in \prettyref{alg:exactrecovery}.
Similar to the analysis in the symmetric case (cf. \prettyref{fig:MP_phase_plot}), whenever
\eqref{eq:submat_vote_1} -- \eqref{eq:submat_vote_2} imply the sufficient condition for message passing, \ie,
$(\lambda_1,\lambda_2) \in \calG$ defined in \prettyref{eq:calG},
there is no computational gap for exact recovery.

To be more precise, consider $K_i = \frac{\rho_i n}{\log n}$ for $i=1,2$. Then
	\eqref{eq:submat_vote_1} and \eqref{eq:submat_vote_2} are equivalent to $\lambda_i > 8 \rho_i$.
Thus, whenever $K_1$ and $K_2$ are large enough so that
$(8 \rho_1, 8 \rho_2)$ lies in the closure $\textbf{cl}(\calG)$,
	or more generally,
		\begin{equation}
\pth{\liminf_{n \to \infty} \frac{K_1 \log n_1}{n_1}, \;  \liminf_{n\to \infty} \frac{K_2 \log n_2}{n_2}}
	 \in \frac{1}{8} \textbf{cl}(\calG)	
	\label{eq:K1K2exact}
\end{equation}	
then  \prettyref{alg:bicluster_MP} plus voting achieves information-theoretically exact recovery threshold with optimal constants (i.e. it is successful if \eqref{eq:submat_vote_1} and \eqref{eq:submat_vote_2} hold).
This result can be viewed as a two-dimensional counterpart of \prettyref{eq:Kexact} obtained for the symmetric case.
		
\label{rmk:K12exact}
\end{remark}

\begin{remark}   \label{rmk:partialG}
Clearly $\cal G$ is an open subset of $\reals_+^2$ and ${\cal G}$ is an upper closed set.
Let $\partial \cal G$ denote its boundary and let $\phi(x)   \triangleq \lambda_2\eexp^{\lambda_1 \eexp^x}$, so
that $\mu_{t+2}^2=\phi(\mu_t^2)$ for $t$ even.
Note that $(\lambda_1, \lambda_2) \in \partial \cal G$ if and only if the function
 is such that for some $x > 0$,  $\phi(x)=x$ and
$\phi'(x)=1.$     Since $\phi'(x)=\phi(x)y$,  where $y=\lambda_1 \eexp^x,$ it follows that
$xy=1$ where $y=\lambda_1 \eexp^x$ and $x=\lambda_2 \eexp^y.$
Therefore, it is convenient to express the boundary of $\calG$ in the parametric form
\[
\partial {\cal G}= \{  (y\eexp^{-1/y}, y^{-1} \eexp^{-y} ) : y>0 \}.
\]     It follows
that $(1/\eexp, 1/\eexp) \in \partial G$ and
$\{ (\lambda_1, \lambda_2) \in \reals_+^2:  \lambda_1\lambda_2 \geq \eexp^{-2}\}\backslash \{(1/\eexp,1/\eexp)\}  \subset  {\cal G}.$
Boundaries of $\calG_d$ can be determined similar to \prettyref{eq:ga} (see \prettyref{fig:Gd} for plots).
\begin{figure}
	\centering
	\includegraphics[width=0.35\textwidth]{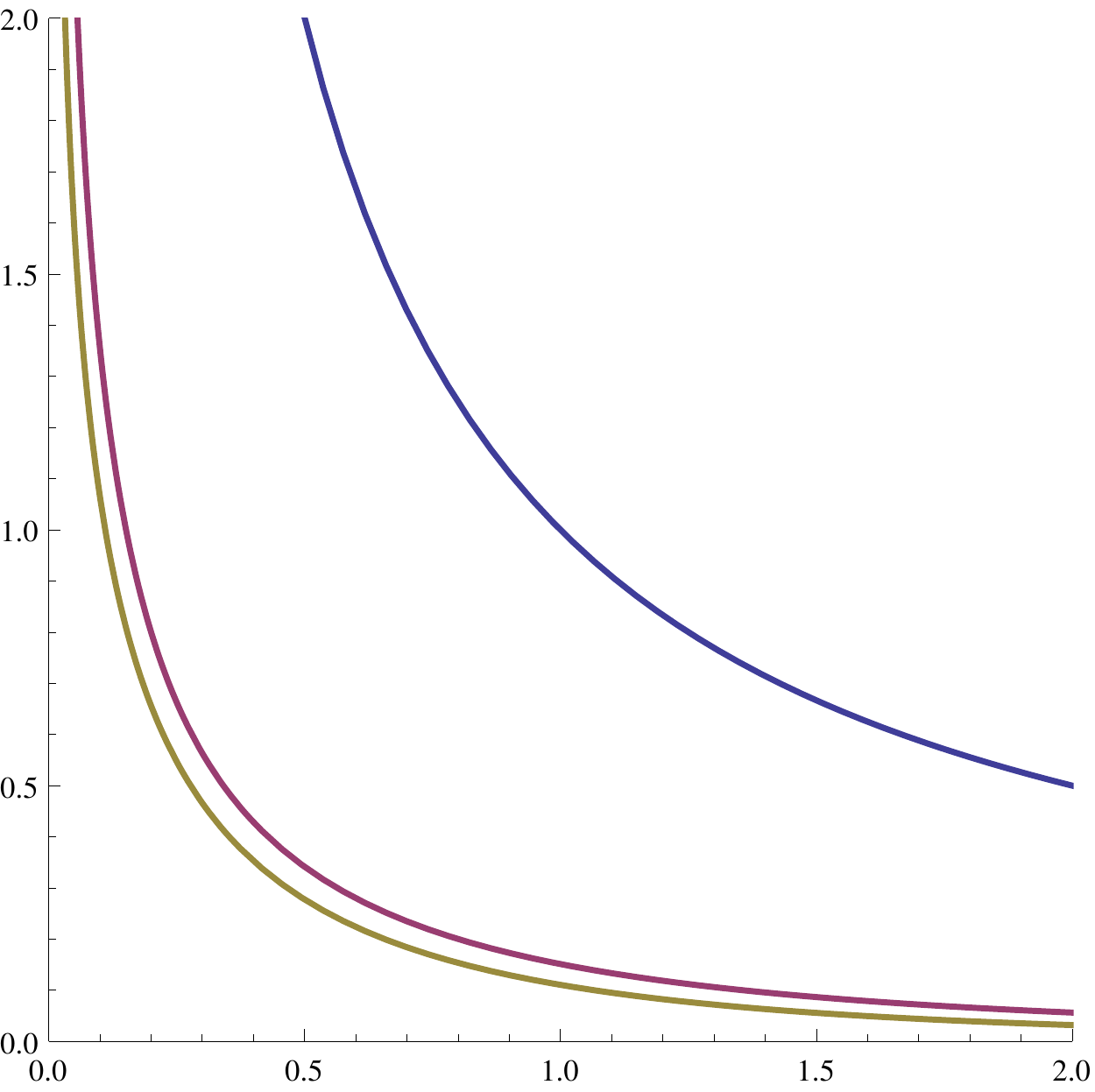}
\caption{Boundaries of the regions $\calG_d$ for $d=1,2,3$; as $d$ increases, $\calG_d$ converges to $\calG$ in \prettyref{fig:calG}.} \label{fig:Gd}
\end{figure}
\end{remark}

\begin{proof}[Proof of  \prettyref{thm:weakMP_submat_bicluster}]
The proof  follows step-by-step that of  \prettyref{thm:almost_exactBP_submat}; we shall point
out the minor differences. Given $\lambda_1$ and $\lambda_2$, define
\begin{align}
d^\ast (\lambda_1, \lambda_2) = \inf\{ d \in \naturals: (\lambda_1, \lambda_2) \in \calG_d\},
\label{eq:defd_bicluster}
\end{align}
and choose $c_0>0$ so that \prettyref{eq:c0} holds.
Given any $\eta \in (0,1)$, choose an
arbitrary $\epsilon \in (0, \epsilon_0)$ such that $\eta(\epsilon)$ defined in \prettyref{eq:defeta_bicluster}
is at most $\eta$. Notice that $\epsilon_0$ is determined by $c_0$.
Let $M=8 \log (1/\epsilon)$ and choose
\begin{equation}
	t^*(\lambda_1, \lambda_2,M) = \inf\left \{t: \min\{ \hat\mu_t , \hat\mu_{t+1} \} > M\right \}. \label{eq:deft_bicluster}
\end{equation}
In view of \prettyref{lmm:hermitebicluster} and the assumption that $(\lambda_1, \lambda_2) \in \calG$,
$d^\ast$ is finite. Since $  (\lambda_1, \lambda_2) \in \calG_{d^*}$, it follows that $\hat \mu_t \to \infty$ and thus $t^*(\lambda_1, \lambda_2, M)$ is finite.

The assumptions of  \prettyref{thm:weakMP_submat_bicluster} imply that $n_1 \asymp n_2.$
Lemmas \ref{lmm:momentmatching} -  \ref{lmm:momentvariance} therefore go through as before,
with $n$ in the upper bounds taken to be $\min\{n_1, n_2\}$, so that $\frac{1}{\sqrt{n_i}} \leq \frac{1}{\sqrt{n}}.$
This modification then implies that  \prettyref{lmm:approximatecentrallimit}, justifying the state evolution
equations, goes through as before.

The correctness proof for the spectral clean-up procedure in \prettyref{alg:bicluster_MP} is given by \prettyref{lmm:power_clean_bicluster} below with  $s^\ast$ defined by \prettyref{eq:sstar_bicluster}; it
 is similar to \prettyref{lmm:power_clean} used in \prettyref{thm:almost_exactBP_submat} but applies to rectangular matrices and uses singular
value decomposition.  To ensure that $c_0$ is well-defined, the following condition is used for \prettyref{lmm:power_clean_bicluster}:
\begin{align}
\frac{ \mu\sqrt{K_1K_2} }{ \sqrt{n_1} + \sqrt{n_2} } =\Omega(1),   \label{eq:spectral_condition}
\end{align}
which is equivalent to $\min\{\frac{\lambda_1 K_2}{K_1},  \frac{\lambda_2 K_1}{K_2}\} = \Omega(1)$ and implied by
the assumptions of \prettyref{thm:weakMP_submat_bicluster}, completing the proof of the theorem.
 (In fact, given the
first condition of \prettyref{thm:weakMP_submat_bicluster}, \ie, $\lambda_1, \lambda_2$ are fixed, \eqref{eq:spectral_condition} is equivalent to $K_1\asymp K_2.$)
\end{proof}

\begin{lemma}   \label{lmm:hermitebicluster}
For $d\ge1$, $\calG_d \subset \calG_{d+1}$ with $\calG_1=\{ (\lambda_1, \lambda_2): \lambda_1 \lambda_2 \ge 1\}$,
and $\cup_{d=1}^{\infty} \calG_d = \calG$.	
\end{lemma}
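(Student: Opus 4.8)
The plan is to collapse the two-periodic recursion \prettyref{eq:mu-poly_bicluster} into a scalar map and read all three claims off its elementary properties. Writing $x_s=\hat\mu_{2s}^2$ and $\phi_d(x)\triangleq \lambda_2 G_d(\lambda_1 G_d(x))$, the recursion becomes $x_{s+1}=\phi_d(x_s)$, $x_0=0$; the ideal evolution similarly reads $x_{s+1}=\phi(x_s)$ with $\phi(x)=\lambda_2\eexp^{\lambda_1\eexp^x}$ as in \prettyref{rmk:partialG}. Since $G_d$ has nonnegative coefficients, $\phi_d$ and $\phi$ are strictly increasing on $[0,\infty)$ with $\phi_d(0)=\lambda_2 G_d(\lambda_1)>0$, so the orbit $(x_s)$ is strictly increasing and either converges to the least positive fixed point of $\phi_d$, if one exists, or diverges to $\infty$; and if $x_s\to\infty$ then also $\hat\mu_{2s+1}^2=\lambda_1G_d(x_s)\to\infty$. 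Hence $(\lambda_1,\lambda_2)\in\calG_d$ if and only if $\phi_d(x)>x$ for all $x\ge0$, and likewise $(\lambda_1,\lambda_2)\in\calG$ iff $\phi(x)>x$ for all $x\ge 0$. Everything below is expressed through this dichotomy.

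For $\calG_1$, $G_1(x)=1+x$ so $\phi_1(x)=\lambda_1\lambda_2\,x+\lambda_2(1+\lambda_1)$ is affine with slope $\lambda_1\lambda_2$; it lies strictly above the diagonal on $[0,\infty)$ exactly when $\lambda_1\lambda_2\ge1$, since for $\lambda_1\lambda_2<1$ it has the positive fixed point $\lambda_2(1+\lambda_1)/(1-\lambda_1\lambda_2)$. This gives $\calG_1=\{\lambda_1\lambda_2\ge1\}$. The inclusion $\calG_d\subset\calG_{d+1}$ follows from the pointwise comparison $\phi_d\le\phi_{d+1}$ on $[0,\infty)$: since $0\le G_d(y)\le G_{d+1}(y)$ for $y\ge0$ and both are increasing, $G_d(\lambda_1G_d(x))\le G_d(\lambda_1G_{d+1}(x))\le G_{d+1}(\lambda_1G_{d+1}(x))$, and multiplying by $\lambda_2>0$; thus $\phi_d>\mathrm{id}$ forces $\phi_{d+1}>\mathrm{id}$. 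The same comparison with $G_d(y)\le \eexp^{y}$ shows $\phi_d\le\phi$, hence $\calG_d\subseteq\calG$ and $\bigcup_d\calG_d\subseteq\calG$.

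It remains to prove $\calG\subseteq\bigcup_d\calG_d$, which is where the real work lies. Fix $(\lambda_1,\lambda_2)\in\calG$, so $\phi(x)>x$ for all $x\ge0$; because $\phi-\mathrm{id}$ is continuous, positive, and tends to $+\infty$ (double-exponential minus linear), $2\epsilon:=\inf_{x\ge0}(\phi(x)-x)>0$. Choose $R_0=R_0(\lambda_1,\lambda_2)$ so that $\tfrac{\lambda_1^2\lambda_2}{8}\,x^4\ge x+1$ for all $x\ge R_0$. On $[R_0,\infty)$ I use a crude, \emph{$d$-independent} bound valid for every $d\ge2$: retaining only the quadratic terms of the two copies of $G_d$ gives $\phi_d(x)\ge\tfrac{\lambda_1^2\lambda_2}{2}G_d(x)^2\ge\tfrac{\lambda_1^2\lambda_2}{8}x^4>x$. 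On the \emph{fixed compact} set $[0,R_0]$ one has $\phi_d\uparrow\phi$ monotonically and pointwise (the pointwise limit follows by a squeeze, since the argument of the outer $G_d$ also moves with $d$), so Dini's theorem yields $\phi_d\to\phi$ uniformly on $[0,R_0]$; pick $d_0$ with $\sup_{[0,R_0]}|\phi_d-\phi|<\epsilon$ for $d\ge d_0$, so that there $\phi_d(x)>\phi(x)-\epsilon\ge x+\epsilon>x$. Taking $d\ge\max(2,d_0)$ gives $\phi_d>\mathrm{id}$ on all of $[0,\infty)$, i.e.\ $(\lambda_1,\lambda_2)\in\calG_d$, completing the proof.

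The crux — and the reason a direct ``uniform approximation of $\phi$ by $\phi_d$'' does not work — is the mismatch between the doubly exponential $\phi$ and the merely polynomial $\phi_d$: no global uniform bound exists, and the window on which $\phi_d$ lags $\phi$ grows with $d$. The resolution is the split above: large arguments are dispatched by a $d$-uniform polynomial (already quartic) lower bound that beats the identity, leaving only a compact interval determined by $\lambda_1,\lambda_2$ alone for the approximation step. The remaining care is bookkeeping: verifying the monotonicity facts that $G_d$ is increasing and $G_d(y)\le G_{d+1}(y)\le\eexp^{y}$, that the orbit of an increasing map started at $0$ below its first fixed point stays below it, and that $\phi_d(x)\to\phi(x)$ pointwise despite the moving argument.
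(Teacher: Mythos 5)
Your proof is correct and takes essentially the same approach as the paper: both collapse the two-periodic recursion into $\phi_d(x)=\lambda_2 G_d(\lambda_1 G_d(x))$, verify $\calG_1$ and $\calG_d\subset\calG_{d+1}\subset\calG$ from pointwise comparisons, and prove $\calG\subseteq\bigcup_d\calG_d$ by splitting $[0,\infty)$ into a tail handled by a $d$-uniform quartic lower bound on $\phi_d$ and a compact interval handled by uniform convergence $\phi_d\to\phi$. The only difference is cosmetic: you establish $\phi_d>\mathrm{id}$ globally via the $\inf$ gap $2\epsilon$, whereas the paper tracks a finite number of iterates of the $\phi_d$-orbit across a fixed threshold $x_0$ using the same uniform approximation; the two are equivalent.
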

\begin{proof}
By definition, $G_1(x) = 1+ x$ and thus for $t$ even, $\hat{\mu}_{t+2}^2= \lambda_2 (1+ \lambda_1 (1+ \hat{\mu}_t^2) )$.
As a consequence, $\hat{\mu_t} \to \infty$ if and only if $\lambda_1 \lambda_2 \ge 1$, proving the claim
for $\calG_1$.
Let $\phi_d(x)   \triangleq \lambda_2 G_d (\lambda_1 G_d (x) )$ so that $\hat{\mu}^2_{t+2}=\phi_d(\hat{\mu}^2_{t})$ for $t$
even .   The fact
$\calG_d \subset \calG_{d+1}\subset \calG$ follows from the fact $\phi_d(x) $ is increasing in $d$ and $\phi_d(x) < \phi(x),$
where $\phi$ is defined in \prettyref{rmk:partialG}.
To prove $\cup_{d=1}^{\infty} \calG_d = \calG,$  fix $(\lambda_1,\lambda_2) \in \calG.$   It suffices to show that
 $(\lambda_1,\lambda_2) \in \calG_d$ for $d$ sufficiently large.    Since $\phi_2(x)/x^4 \to \infty$ as $x\diverge$,  there exists an absolute constant $x_0>1$
such that $\phi_d(x) \geq x^2$  whenever $x\geq x_0$ and $d\geq 2.$     Let $t_0$ be an even number such
 that $\mu_{t_0}^2  > x_0.$   Since
$\phi_d(x)$ converges to $\phi(x)$ uniformly on bounded intervals, it follows that the first $t_0/2$ iterates using
$\phi_d$ converge to the corresponding iterates using $\phi.$   So, for $d$ large enough, $\hat{\mu}_{t_0}^2 > x_0,$
and hence, for such $d$, $\hat{\mu}_{t}^2  \to \infty$ as $t\to \infty,$ so $(\lambda_1,\lambda_2) \in \calG_d.$
\end{proof}

\begin{lemma} \label{lmm:power_clean_bicluster}
Suppose \eqref{eq:spectral_condition} holds, \ie,
\begin{equation}
\frac{ \mu\sqrt{K_1K_2} }{ \sqrt{n_1} + \sqrt{n_2} } \geq \frac{1}{c_0}	\label{eq:c0}
\end{equation}
for some $c_0>0$.
For $i=1,2$, suppose that
$\frac{|C_i^*|}{K_i}\rightarrow 1$ in probability and
$\tilde{C}_i$ is a set (possibly depending on $W$) such that
\begin{align}
 \frac{1}{K_i} | \tilde{C}_i \cap C_i^\ast |  & \ge 1-\epsilon   \label{eq:ccA1}   \\
K_i(1-\epsilon) \leq  |\tilde{C}_i| & \leq  n_i \epsilon  \label{eq:ccC1}
 \end{align}
hold for some $0< \epsilon < \epsilon_0$, where $\epsilon_0$ depends only on $c_0$.
Let
\begin{equation}
s^*= \pth{\log \frac{ 1 - \epsilon - 3 c_0 \sqrt{ h(\epsilon) + \epsilon}  }{  3 c_0 \sqrt{ h(\epsilon) + \epsilon} } }^{-1}
	\label{eq:sstar_bicluster}
\end{equation}
where $h(\epsilon)\triangleq \epsilon\log \frac{1}{\epsilon} + (1-\epsilon)\log \frac{1}{1-\epsilon}$ is the binary entropy function. Then $\hat C_i$ returned by \prettyref{alg:bicluster_MP} satisfies
$ | \hat{C}_i \cap C^\ast_i | \ge (1- \eta(\epsilon) )K_i $ for $i=1, 2$,
with probability converging to one as $n \to \infty,$ where
\begin{align}
\eta(\epsilon)= 2\epsilon + 650 c_0^2 \; \frac{h(\epsilon) + \epsilon} {(1-\epsilon)^2 }.
\label{eq:defeta_bicluster}
\end{align}
\end{lemma}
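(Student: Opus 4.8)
The plan is to mimic the proof of \prettyref{lmm:power_clean} almost line by line, replacing the eigenvalue/eigenvector analysis of the symmetric restricted matrix by a singular value/singular vector analysis of the rectangular restricted matrix $\tilde W = W_{\tilde C_1 \tilde C_2}$, and to relate all the resulting quantities to the single scale parameter $c_0$ via \prettyref{eq:c0}. Write $m_i = |\tilde C_i|$. Conditioned on $C_1^*, C_2^*$ and on $\tilde C_1, \tilde C_2$, the mean $\eexpect{\tilde W}$ is the rank-one matrix $\mu \ones_{\tilde C_1 \cap C_1^*}\ones_{\tilde C_2 \cap C_2^*}^\top$, whose top singular value is $\mu\sqrt{|\tilde C_1 \cap C_1^*|\,|\tilde C_2 \cap C_2^*|} \ge \mu(1-\epsilon)\sqrt{K_1 K_2}$ by \prettyref{eq:ccA1}, with left and right singular vectors $v_i \triangleq \ones_{\tilde C_i \cap C_i^*}/\sqrt{|\tilde C_i \cap C_i^*|}$, $i=1,2$. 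Let $Z = \tilde W - \eexpect{\tilde W}$, whose entries are i.i.d.\ $\calN(0,1)$; by the Davidson--Szarek bound \cite[Theorem 2.11]{Davidson01}, $\prob{\|Z\| \ge \sqrt{m_1}+\sqrt{m_2}+\sqrt{2t}} \le 2\eexp^{-t}$. Setting $t = (n_1+n_2)h(\epsilon)$ and invoking $m_i \le \epsilon n_i$ from \prettyref{eq:ccC1}, together with $\sqrt a + \sqrt b \le \sqrt{2(a+b)}$, gives $\|Z\| \le \beta \triangleq 3(\sqrt{n_1}+\sqrt{n_2})\sqrt{h(\epsilon)+\epsilon}$ outside an event of probability $2\eexp^{-(n_1+n_2)h(\epsilon)}$; since the number of admissible pairs $(\tilde C_1,\tilde C_2)$ obeying \prettyref{eq:ccC1} is at most $\eexp^{n_1 h(\epsilon)}\eexp^{n_2 h(\epsilon)}$ (e.g.\ \cite[Lemma 4.7.2]{ash-itbook}), a union bound yields $\|Z\| \le \beta$ with high probability, uniformly over all admissible pairs. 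In particular, by \prettyref{eq:c0}, $\beta/(\mu(1-\epsilon)\sqrt{K_1K_2}) \le 3c_0\sqrt{h(\epsilon)+\epsilon}/(1-\epsilon)$, and choosing $\epsilon_0 = \epsilon_0(c_0)$ small enough makes this quantity less than, say, $1/2$.

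Fixing $W$ and an admissible $(\tilde C_1,\tilde C_2)$ with $\|Z\| \le \beta$, Weyl's inequality gives $\sigma_1(\tilde W) \ge \mu(1-\epsilon)\sqrt{K_1K_2} - \beta$ and $\sigma_2(\tilde W) \le \beta$, so $r \triangleq \sigma_2(\tilde W)/\sigma_1(\tilde W) \le 3c_0\sqrt{h(\epsilon)+\epsilon}\,/\,(1-\epsilon - 3c_0\sqrt{h(\epsilon)+\epsilon})$, which is exactly the reciprocal of the argument of the logarithm in \prettyref{eq:sstar_bicluster}, i.e.\ $\log(1/r) = 1/s^*$. Next, the singular-vector version of the Davis--Kahan $\sin\theta$ theorem (the variant in \cite[Proposition 1]{CMW13}) gives $\|uu^\top - v_1 v_1^\top\| \le 2\|Z\|/(\mu(1-\epsilon)\sqrt{K_1K_2})$, where $u$ is the leading left singular vector of $\tilde W$, and likewise for the leading right singular vector versus $v_2$; since $uu^\top - v_1 v_1^\top$ has rank at most two, this upgrades to $\|uu^\top - v_1 v_1^\top\|_{\rm F} \le 2\sqrt2\,\beta/(\mu(1-\epsilon)\sqrt{K_1K_2})$.

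The power iteration in \prettyref{alg:bicluster_MP} applied through $\tilde W\tilde W^\top$ (for $t$ even) is the ordinary power method for the top eigenvector $u$ of $\tilde W\tilde W^\top$, whose gap ratio is $r^2$; after the $\lceil s^*\log(n_1n_2)\rceil$ applications of $\tilde W\tilde W^\top$ specified in the algorithm, the residual angle scales like $r^{2\lceil s^*\log(n_1n_2)\rceil} \le (n_1n_2)^{-2}$ by the choice of $s^*$. As in \prettyref{lmm:power_clean}, with high probability the random start $u^0$ satisfies $|\langle u,u^0\rangle| \ge (2\sqrt{m_1\log(n_1n_2)})^{-1}$ via Gaussian anticoncentration of $\langle u,u^0\rangle$ and the $\chi^2$ tail for the unnormalized norm; combining, $\langle \hat u,u\rangle^2 \ge 1 - (n_1n_2)^{-1}$, hence $\|\hat u\hat u^\top - uu^\top\|_{\rm F}^2 \le (n_1n_2)^{-1}$. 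The triangle inequality then yields $\min\{\|\hat u - v_1\|, \|\hat u + v_1\|\} \le \|\hat u\hat u^\top - v_1 v_1^\top\|_{\rm F} \le \beta_o$ for $\beta_o = O\!\big(c_0\sqrt{h(\epsilon)+\epsilon}/(1-\epsilon)\big)$, and identically for $\hat u$ versus $v_2$.

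Finally I copy the thresholding/top-$K$ endgame from \prettyref{lmm:power_clean}: set $\hat C_{1,o} = \{i\in\tilde C_1: |\hat u_i|\ge\tau\}$ with $\tau = 1/(2\sqrt{|\tilde C_1\cap C_1^*|})$; since $v_1$ equals $2\tau$ on $\tilde C_1\cap C_1^*$ and $0$ elsewhere, each misclassified index contributes at least $\tau^2$ to $\min\{\|\hat u-v_1\|^2,\|\hat u+v_1\|^2\}$, so $|\hat C_{1,o}\triangle(\tilde C_1\cap C_1^*)| \le 4\beta_o^2|\tilde C_1\cap C_1^*| \le 4\beta_o^2 K_1$, and with $|C_1^*\backslash\tilde C_1|\le\epsilon K_1$ from \prettyref{eq:ccA1} this gives $|C_1^*\triangle\hat C_{1,o}| \le (\epsilon + 4\beta_o^2)K_1$. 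Because \prettyref{alg:bicluster_MP} returns the $K_1$ indices with largest $|\hat u_i|$, one of $\hat C_{1,o}\subset\hat C_1$ or $\hat C_1\subset\hat C_{1,o}$ holds, so $|C_1^*\triangle\hat C_1| \le 2|C_1^*\triangle\hat C_{1,o}| + \big||C_1^*|-K_1\big|$, and using $|C_1^*|/K_1 \to 1$ in probability we obtain $\limsup_n |C_1^*\triangle\hat C_1|/K_1 \le 2\epsilon + 8\beta_o^2$; tracking the numerical constants identifies the right side with $\eta(\epsilon)$ in \prettyref{eq:defeta_bicluster}, hence $|\hat C_1\cap C_1^*|\ge(1-\eta(\epsilon))K_1$, and the argument for $\hat C_2$ is symmetric through $\tilde W^\top\tilde W$. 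The main obstacle I anticipate is the first step: making the Davidson--Szarek tail dominate the $\eexp^{(n_1+n_2)h(\epsilon)}$ union bound while keeping every constant consistent with the explicit $s^*$ of \prettyref{eq:sstar_bicluster} and $\eta(\epsilon)$ of \prettyref{eq:defeta_bicluster}; in particular, verifying that \prettyref{eq:c0} --- equivalently $\mu\sqrt{K_1K_2}/(\sqrt{n_1}+\sqrt{n_2}) = \Omega(1)$ --- is precisely what keeps $\beta/\sigma_1(\eexpect{\tilde W})$ bounded, which is the rectangular analogue of the role $\lambda \ge 1/\eexp$ plays in \prettyref{lmm:power_clean}. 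The replacement of the symmetric $\sin\theta$ bound by its singular-vector counterpart is routine but should be cited carefully.
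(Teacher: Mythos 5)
Your proposal follows the paper's own proof essentially step by step: concentration of the rectangular noise $\|Z\|$ via a Gaussian operator-norm tail, union bound over the at most $e^{(n_1+n_2)h(\epsilon)}$ admissible $(\tilde C_1,\tilde C_2)$, a rank-two Frobenius upgrade of a singular-subspace $\sin\theta$ bound, analysis of the power iteration through $\tilde W\tilde W^\top$ with gap ratio tied to $s^*$, and then the thresholding/top-$K$ endgame lifted verbatim from \prettyref{lmm:power_clean}. The one genuine quantitative slip is the deviation parameter you feed into the Davidson--Szarek tail: with $t=(n_1+n_2)h(\epsilon)$, the per-pair failure probability is $2e^{-(n_1+n_2)h(\epsilon)}$, and multiplying by the $e^{(n_1+n_2)h(\epsilon)}$-size union yields $2$, not $o(1)$. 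You need $t=2(n_1+n_2)h(\epsilon)$ (equivalently, the $2\sqrt{h(\epsilon)(n_1+n_2)}$ shift the paper uses in Vershynin's form); then $\sqrt{2t}=2\sqrt{(n_1+n_2)h(\epsilon)}$, the bound $\sqrt{2\epsilon}+2\sqrt{h(\epsilon)}\le\sqrt{6}\sqrt{\epsilon+h(\epsilon)}<3\sqrt{\epsilon+h(\epsilon)}$ still gives your $\beta$, and the union-bounded failure probability is $2e^{-(n_1+n_2)h(\epsilon)}\to 0$. You yourself flagged this tension as the ``main obstacle,'' and it is indeed exactly where the factor needs attention. The other minor point is the $\sin\theta$ citation: \cite[Proposition 1]{CMW13} is stated for symmetric matrices, so for the rectangular case you should cite Wedin's SVD $\sin\theta$ theorem as the paper does; you already note this substitution is ``routine but should be cited carefully,'' and your chain $\|uu^\top-v_1v_1^\top\|_{\rm F}=\sqrt 2\|(I-uu^\top)v_1v_1^\top\|\le 2\sqrt2\|Z\|/\sigma_1(\eexpect{\tilde W})$ is precisely what Wedin plus Weyl delivers. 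You did not track the final numerical constant $650$; with the fixed $\beta=3\sqrt{\epsilon+h(\epsilon)}$ one gets $\beta_0=3c_0\beta/(1-\epsilon)=9c_0\sqrt{\epsilon+h(\epsilon)}/(1-\epsilon)$ and $8\beta_0^2=648\,c_0^2(\epsilon+h(\epsilon))/(1-\epsilon)^2\le 650\,c_0^2(\epsilon+h(\epsilon))/(1-\epsilon)^2$, matching \prettyref{eq:defeta_bicluster}.
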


\begin{proof}  (Similar to proof of \prettyref{lmm:power_clean}.)  We prove the lemma for $\hat{C}_1$;
the proof for $\hat{C}_2$ is identical.
 For the first part of the proof we
assume that for $i=1,2$, $\tilde{C}_i$ is fixed, and later use a union bound over all possible choices of $\tilde{C}_i.$
Recall that $W_{\tilde{C}_1 \tilde C_2}$, which we abbreviate henceforth as $\tilde W$, is the matrix $W$ restricted to  entries in $\tilde{C}_1 \times \tilde C_2$.
Let $Z = \tilde W -\eexpect{\tilde W}$
and note that
\begin{equation}
\eexpect{ \tilde{W}} =   \mu \sqrt{ |\tilde C_1 \cap C_1^*| |\tilde C_2 \cap C_2^*|}  v_1v_2^\top
	\label{eq:ETW}
\end{equation}
is a rank-one matrix, where $v_i$ is the unit vector in
$\reals^{| \tilde{C}_i |  }$ obtained by normalizing the indicator vector of $\tilde{C}_i\cap C^*_i.$
Thus, thanks to \prettyref{eq:ccA1}, the leading singular value of
$\eexpect{ \tilde W} $ is at least $\mu \sqrt{K_1 K_2} (1-\epsilon) $  with  left singular vector $v_1$ and right singular vector $v_2$.

It is well-known (see, \eg, \cite[Corollary 5.35]{vershynin2010nonasym}) that if $M$ is an $m_1 \times m_2$  matrix with i.i.d.\ \emph{standard
normal} entries, then
$
\prob{ \| M \|  \geq \sqrt{m_1} + \sqrt{m_2} +  t } \leq   2 \eexp^{-t^2/2}.
$
Applying this result for $m_i=|\tilde C_i|$,
which satisfies $m_i \le \epsilon n_i$ by \prettyref{eq:ccC1},
 and $t= 2 \sqrt{h(\epsilon) (n_1+n_2)}$, we have for fixed $(\tilde C_1,\tilde C_2)$,
$$
\prob{      \|Z\|  \geq  (\sqrt{n_1} + \sqrt{n_2}) \beta           } \leq  2 \eexp^{- 2 (n_1+n_2)h(\epsilon)},
$$
where $\beta \triangleq 3 \sqrt{\epsilon+h(\epsilon)})$.
Similar to the proof of \prettyref{lmm:power_clean}, the number of $(\tilde C_1,\tilde C_2)$ that satisfies \prettyref{eq:ccC1} is at most $e^{(n_1+n_2)h(\epsilon)}$. By union bound,
if we drop the assumption that $\tilde C_i$ is fixed for $i=1,2$, we still have that with
high probability,
$  \|Z\|  \leq (\sqrt{n_1} + \sqrt{n_2}) \beta $.

Denote by $u$ the leading left singular vector of $W_{\tilde{C}_1 \tilde C_2}$.
Then
 \begin{align*}
 \|uu^\top -v_1 v_1^\top \|_{\rm F}
 = &~\sqrt{2} \|(\identity - uu^\top) v_1 v_1^\top \|_{\rm F} \overset{(a)}{=} \sqrt{2} \|(\identity - uu^\top) v_1 v_1^\top \|  \\
  \overset{(b)}{\leq} &~  \sqrt{2} \min\sth{\frac{\|Z\| }{|\sigma_1(\eexpect{ \tilde{W}}) - \sigma_2(\tilde{W})|}, 1 }
 \overset{(c)}{\leq} \frac{ 2 \sqrt{2} \|Z\| }{\sigma_1(\eexpect{ \tilde{W}})},
 \end{align*}
 where (a) is because $\text{rank}((\identity - uu^\top) v_1 v^\top_1) \leq 1$,
 (b) follows from Wedin's sin-$\theta$ theorem for SVD \cite{wedin72},
 (c) follows from Weyl's inequality $\sigma_2(\tilde{W}) \leq \sigma_2(\eexpect{ \tilde{W}}) + \|Z\|=\|Z\|$.
In view of \prettyref{eq:ETW}, conditioning on the high-probability event that
$  \|Z\|  \leq (\sqrt{n_1} + \sqrt{n_2}) \beta $, we have
 \begin{equation}
	\|uu^\top -v_1 v_1^\top \|_{\rm F}  \leq \frac{ 2 \sqrt{2} \beta (\sqrt{n_1} + \sqrt{n_2})}{\mu(1-\epsilon) \sqrt{K_1K_2}} \leq
	\frac{ 2 \sqrt{2} c_0 \beta}{1-\epsilon},
	\label{eq:kahanuv1}
\end{equation}
where the last inequality follows from the standing assumption \prettyref{eq:c0}.



 Next, we argue that $\hat{u}$ is close to $u$, and hence, close to $v_1$ by the triangle inequality.
 By \prettyref{eq:u0}, the initial value $u^0 \in \reals^{\tilde C_1}$ satisfies $|\langle u, u^0 \rangle| \geq  ( 2 \sqrt{n_1 \log n_1} )^{-1}$ with high probability.
By Weyl's inequality,
the largest singular value of  $\tilde{W}$  is at least $\mu \sqrt{K_1K_2}(1-\epsilon) - (\sqrt{n_1} + \sqrt{n}_2) \beta$, and the
other singular values are at most $ (\sqrt{n_1} + \sqrt{n}_2 )\beta $.
In view of \prettyref{eq:c0}, $\frac{1-\epsilon}{c_0 \beta} - 1 > 1$ for all $\epsilon < \epsilon_0$, where $\epsilon_0>0$ depends only on $c_0$.
Let $\lambda_1$ and $\lambda_2$ denote the first and second eigenvalue of  $\tilde{W} \tilde{W}^\top$ in absolute value, respectively.
Let $r=\lambda_2/\lambda_1$. Then  $r \le (\frac{c_0 \beta}{1 - \epsilon - c_0 \beta})^2$.
Since for even $t$, $u^t= ( \tilde{W} \tilde{W} ^\top)^{t/2} u^0/ \|  ( \tilde{W} \tilde{W}^\top )^{t/2} u^0 \|$, the same analysis of power iteration that leads to \prettyref{eq:power} yields
\begin{align*}
\langle u^t , u \rangle^2 \ge 1- \frac{ r^{t}  }{ \langle u, u^0 \rangle^2 -2 | \langle u, u^0 \rangle| r^{t/2} }.
\end{align*}
Since $\hat{u} =u^{2 \lceil s^* \log n \rceil}$ and
$
s^*=(\log\frac{ 1 - \epsilon - c_0 \beta}{ c_0 \beta})^{-1}
$,
we have $r^{\lceil s^* \log n_1 \rceil } \le n_1^{-2}$ and thus $|\langle \hat{u} , u \rangle^2 |  \geq 1 -  n_1^{-1}$ and consequently,
$\|uu^\top - \hat{u} (\hat{u})^\top \|_{\rm F}^2 = 2 - 2 \iprod{u}{\hat{u}}^2 \leq n_1^{-1}$.
Similar to \prettyref{eq:proj}, applying \prettyref{eq:kahanuv1} and the triangle inequality, we obtain
\begin{equation}
	\min\{\|\hat{u} - v_1\| , \|\hat{u} + v_1\| \} \leq \|\hat{u} (\hat{u})^\top - v_1v_1^\top \|_{\rm F} \leq \frac{ 2 \sqrt{2} c_0 \beta}{1-\epsilon} + n_1^{-1/2} \le \frac{3 c_0 \beta}{1-\epsilon} \triangleq \beta_0.
\end{equation}
By the same argument that proves \prettyref{eq:CCeta}, we have
$
\limsup_{n\rightarrow\infty} |C_1^* \triangle \hat C_1 |   /{K_1 } \leq  2\epsilon + 8\beta_0^2 \le \eta(\epsilon)
$
with $\eta$ defined in \prettyref{eq:defeta_bicluster}, completing the proof.
\end{proof}

\begin{remark}
Condition \eqref{eq:spectral_condition} implies that
$\mu^2 K_1 K_2/n_1 =\Omega(1)$, which in turn implies
that \eqref{eq:suff_cond_voting_weak} holds in the regime $K_1=o(n_1)$.
Hence, under \eqref{eq:spectral_condition}, either
both $C^*_1$ and $C^*_2$ can be weakly recovered or neither of them can be weakly recovered.
\end{remark}

 \appendix
\section{Row-wise thresholding}\label{app:degreethreshold}
We describe a simple thresholding procedure for recovering $C^*$.
Let $R_i = \sum_j  W_{i,j}$ for $i\in [n].$
Then $R_i  \sim {\cal N} ( K\mu, n)$ if
$i\in C^*$ and $R_i  \sim {\cal N} (0, n)$ if  $i \notin C^\ast$.
Let  $\hat{C} = \left\{ i \in [n] : R_i \geq \frac{K \mu}{2}\right\}.$
Then  $\expect{ |\hat C  \triangle C^*| } = n Q\left(\frac{K \mu}{2\sqrt{n}}\right) .$
Recall that
$\lambda= \frac{K^2 \mu^2}{n}$.
Hence, if
\begin{equation}
\lambda =\omega\pth{ \log \frac{n}{K}},
	\label{eq:thresholding-weak}
\end{equation}
 then we have
$\eexpect{ |\hat C  \triangle C^*| } = o(K)$ and hence achieved weak recovery.
In the regime $K\asymp n \asymp (n-K), $
$\lambda =\omega( \log \frac{n}{K})$ is equivalent to $\lambda \to \infty$,
which is also equivalent to $K\mu^2 \to \infty$ and coincides with the necessary and sufficient condition for the information-theoretic possibility of weak
recovery in this regime \cite[Theorem 2]{HajekWuXu_one_info_lim15}.
(If instead $n-K = o(n)$, weak recovery is trivially provided by $\hat C = [n]$.)
Thus, row-wise thresholding provides weak recovery in the regime $K\asymp n \asymp (n-K)$ whenever information
theoretically  possible.  Under the information-theoretic condition \eqref{eq:submat-mle-suff},
an algorithm attaining exact recovery can be built using row-wise thresholding
for weak recovery followed by voting,  as in  \prettyref{alg:exactrecovery}
(see  \cite[Theorem 4]{HajekWuXu_one_info_lim15} and its proof).
In the regime $\frac{n}{K} \log \frac{n}{K} = o(\log n)$, or equivalently $ K=\omega( n \log \log n/ \log n)$,
condition  \eqref{eq:submat-mle-suff} implies that $\lambda =\omega( \log \frac{n}{K})$, and
hence in this regime  exact recovery can be attained in linear time $O(n^2)$
whenever information theoretically  possible.

\section{Proofs of Theorems \ref{thm:bi_cluster_weak} and \ref{thm:bi_cluster_exact}}
	\label{app:bicluter-mle}
	
	In the proofs below we use the following notation.  We write
$\textsf{p}_e(\pi_1, s^2)$ to denote the
minimal average error probability for testing ${\cal N}(\mu_1, \sigma^2)$ versus ${\cal N}(\mu_0, \sigma^2)$
with priors $\pi_1$ and $1-\pi_1$, where $\mu_1 \geq \mu_0$
and $s^2 = \frac{(\mu_0-\mu_1)^2}{\sigma^2}.$  That is,
$$\textsf{p}_e(\pi_1, s^2) \triangleq \min_\gamma  \{  \pi_1 Q(s-\gamma) + (1-\pi_1) Q(\gamma)  \}.$$

\begin{proof}[Proof of  \prettyref{thm:bi_cluster_weak}]
We defer the proof of (i) to the end and begin with the proof of (ii).
Column sum thresholding for recovery of $C_2^*$ consists of comparing $\sum_i  W_{i,j}$ to
a threshold for each $j \in [n_2]$ to estimate whether $j\in C_2^*.$
This sum has the ${\cal N}(K_1 \mu, n_1)$ distribution if $j \in C_2^*$, which has prior probability $K_2/n_2$,
and the sum has the ${\cal N}(0, n_1)$ distribution otherwise.   The mean number of classification
errors divided by $K_2$ is given by $(n_2/K_2)\textsf{p}_e(K_2/n_2, K_1^2\mu^2   /n_1),$  which
converges to zero under \eqref{eq:weak_C2_line_sum}.   This proves (ii).

The proof of (iii) is similar, although it involves the method of successive withholding in a way
similar to that in \prettyref{alg:exactrecovery}.    The set $[n_1]$ is partitioned into
sets,  $S_1, \ldots  , S_{1/\delta}$  of size $n_1\delta.$     There are $1/\delta$ rounds of the algorithm,
and indices in $S_{\ell}$ are classified in the $\ell^{th}$ round.    For the
$\ell^{th}$ round,   by assumption, given $\epsilon > 0,$   there exists an estimator $\hat{C}_2$
based on observation of $W$ with the rows indexed by $S_{\ell}$ hidden such that
$|\hat{C}_2 \Delta C^*_2| \le \epsilon K_2$
with high probability. Then the voting procedure estimates whether $i \in C_1^*$ for
each $i \in S_{\ell}$  by comparing $\sum_{j \in \hat{C}_2} W_{i,j}$ to a threshold for each $i \in [n_1]$.
This sum has approximately the $\calN(K_2 \mu, K_2)$ distribution if $i \in C_1^*$ and $\calN(0,K_2)$
distribution otherwise ; the discrepancy can be made sufficiently small by choosing $\epsilon_2$ to be small (See
\cite[Lemma 9]{HajekWuXu_one_info_lim15} for a proof).
Thus, the mean number of classification
errors divided by $K_1$ is well approximated by $(n_1/K_1) \textsf{p}_e(K_1/n_1, K_2 \mu^2  )$, which converges to zero
under \eqref{eq:suff_cond_voting_weak}, completing the proof of (iii).

Now to the proof of (i).    The proof of sufficiency for weak recovery is closely based on the proof of
sufficiency for exact recovery by the MLE given in \cite{Butucea2013sharp}; the reader is referred to
\cite{Butucea2013sharp} for the notation used in this paragraph.
The proof  in  \cite{Butucea2013sharp} is divided into two sections.   In our terminology,
\cite[Section 3.1]{Butucea2013sharp} establishes the weak recovery of $C_1^*$ and $C_2^*$ by the MLE
under the assumptions  \eqref{eq:B_suff0}, \eqref{eq:submat_vote_1}, and \eqref{eq:submat_vote_2}.   However,
 the assumption  \eqref{eq:submat_vote_1}  (and similarly,  \eqref{eq:submat_vote_2}) is used
 in only one place in the proof, namely for bounding the terms $T_{1,km}$ defined therein.
 We explain here why  \eqref{eq:B_suff0} alone is sufficient for the proof  of weak recovery.
 Condition  \eqref{eq:B_suff0} implies condition  \eqref{eq:suff_cond_voting_weak}, which, in the notation\footnote{The notation of  \cite{Butucea2013sharp} is mapped to ours as $N \to n_1$, $M \to n_2$, $n \to K_1$, $m \to K_2$, and $a \to \mu.$} of
 \cite{Butucea2013sharp}, implies that there exists some sufficiently small $\alpha>0$ such that
 $$
 \frac{a^2 m }{2\log(N/n)} \geq 1+ \alpha.
 $$
So  \cite[(3.4)]{Butucea2013sharp} can be replaced as: there exist some sufficiently small $\delta_1>0$ and $\alpha_1>0$ such that
 $$
  \frac{(1-\delta_1)^2}{2}a^2 m \geq (1+\alpha_1)\log(N/n) \geq (1+\alpha_1)\log\left( \frac{\delta(N-n)}{n-k}\right) ,
 $$
 where we use the assumption $0\leq k < (1-\delta)n$, or $n-k > \delta n.$    Thus, for large enough $n,$
\begin{align*}
 T_{1,km}&  \leq \exp\left(  - \frac{\delta n \alpha_1}{2}   \left(      \log\left(\frac{N-n}{n-k}\right)   \right) \right)
 \leq \exp\left(  - \frac{\delta n \alpha_1}{2}      \log\left(\frac{N-n}{n}\right)    \right) = o(1/n),
\end{align*}
from which the desired conclusion,  $\sum_{k:(n-k) > \delta n} T_{1,km} = o(1),$  follows.
This completes the proof of sufficiency of  \eqref{eq:B_suff0} for weak recovery of both $C_1^*$ and $C_2^*$,
and marks the end of our use of notation from  \cite{Butucea2013sharp}.

The rate distortion argument  used in the proof of  \cite[Theorem 5]{HajekWuXu_one_info_lim15} shows
that \eqref{eq.B_nec} must hold if $C_1^*$ and $C_2^*$ are both weakly recoverable.
\end{proof}

\begin{proof}[Proof of  \prettyref{thm:bi_cluster_exact}]
The proof follows along the lines of the proofs of \prettyref{thm:bi_cluster_weak} parts (ii) and (iii).   The key
calculation for part (i) is that  \eqref{eq:submat_vote_1} implies that  $n_1\textsf{p}_e(K_1/n_1, K_2 \mu^2  )\to 0;$
and the key calculation for  part (ii) is that \eqref{eq:exact_C2_line_sum} implies
that $n_2\textsf{p}_e(K_2/n_2, K_1^2\mu^2   /n_1)\to 0.$
\end{proof}

\bibliographystyle{abbrv}
\bibliography{../graphical_combined}

\end{document}